\def\eqref#1{equation~\ref{#1}}
\def\1{\bm{1}}
\DeclareMathAlphabet{\mathsfit}{\encodingdefault}{\sfdefault}{m}{sl}
\SetMathAlphabet{\mathsfit}{bold}{\encodingdefault}{\sfdefault}{bx}{n}
\newtheorem{lemma}{Lemma}
\newtheorem{assumption}{Assumption}
\newtheorem{proposition}{Proposition}
\title{Controllable Unlearning for Image-to-Image Generative Models via $\varepsilon$-Constrained Optimization}
\author{
Xiaohua Feng$^{1*}$, Yuyuan Li$^{2}$\thanks{Equal contributions.}$\enspace$, Chaochao Chen$^{1}$\thanks{Corresponding author (zjuccc@zju.edu.cn).}$\enspace$, Li Zhang$^{1}$, Longfei Li$^{3}$, \\
\hspace{0.2em}\textbf{Jun Zhou}$^{3}$\textbf{,} \textbf{Xiaolin Zheng}$^{1}$\\
$^{1}$Zhejiang University, 
$^{2}$Hangzhou Dianzi University, 
$^{3}$Ant Group
}
\begin{document}

\maketitle

\begin{abstract}
  While generative models have made significant advancements in recent years, they also raise concerns such as privacy breaches and biases. 
  Machine unlearning has emerged as a viable solution, aiming to remove specific training data, e.g., containing private information and bias, from models.
  In this paper, we study the machine unlearning problem in Image-to-Image (I2I) generative models.
  Previous studies mainly treat it as a single objective optimization problem, offering a solitary solution, thereby neglecting the varied user expectations towards the trade-off between complete unlearning and model utility.  
  To address this issue, we propose a controllable unlearning framework that uses a control coefficient $\varepsilon$ to control the trade-off.  
  We reformulate the I2I generative model unlearning problem into a $\varepsilon$-constrained optimization problem and solve it with a gradient-based method to find optimal solutions for unlearning boundaries. 
  These boundaries define the valid range for the control coefficient. 
  Within this range, every yielded solution is theoretically guaranteed with Pareto optimality. 
  We also analyze the convergence rate of our framework under various control functions.
  Extensive experiments on two benchmark datasets across three mainstream I2I models demonstrate the effectiveness of our controllable unlearning framework.
\end{abstract}

\section{Introduction}

Generative models have recently made significant progress in fields such as image recognition~\citep{ho2020denoising,dhariwal2021diffusion} and natural language processing~\citep{openai2023gpt,touvron2023llama}, capturing significant academic interest due to their boundless generative potential.
Typically trained on vast datasets from the Internet, generative models inevitably assimilate latent biases and expose private information~\citep{schwarz2021frequency}.
Existing studies~\citep{kuppa2021towards,tirumala2022memorization,carlini2023extracting, xu2024copyrightmeter} have revealed that generative models have a strong tendency to recall specific instances encountered during training, raising concerns that the models might output biases and leak private information when put into practical situations.
Machine unlearning~\citep{nguyen2022survey, feng2024fine} presents a viable solution to address this issue.
It aims to eliminate the knowledge learned from specific training data (forget set) while preserving the knowledge learned from the remaining data (retain set).

Implementing unlearning for generative models serves dual objectives, i.e., fulfilling privacy requirements and enhancing model reliability. 
On the one hand, legislation such as the General Data Protection Regulation~\citep{voigt2017eu} grants individuals the \textit{right to be forgotten}.
Consequently, service providers must unlearn specific private information from the model in response to an individual's request. 
On the other hand, the data available on the Internet is rife with biases and inaccuracies, which compromises model performance when used for training. 
By proactively unlearning the biased and inaccurate data, the service providers can improve the liability of their models.

\begin{figure}
  \centering
  \includegraphics[width=\textwidth]{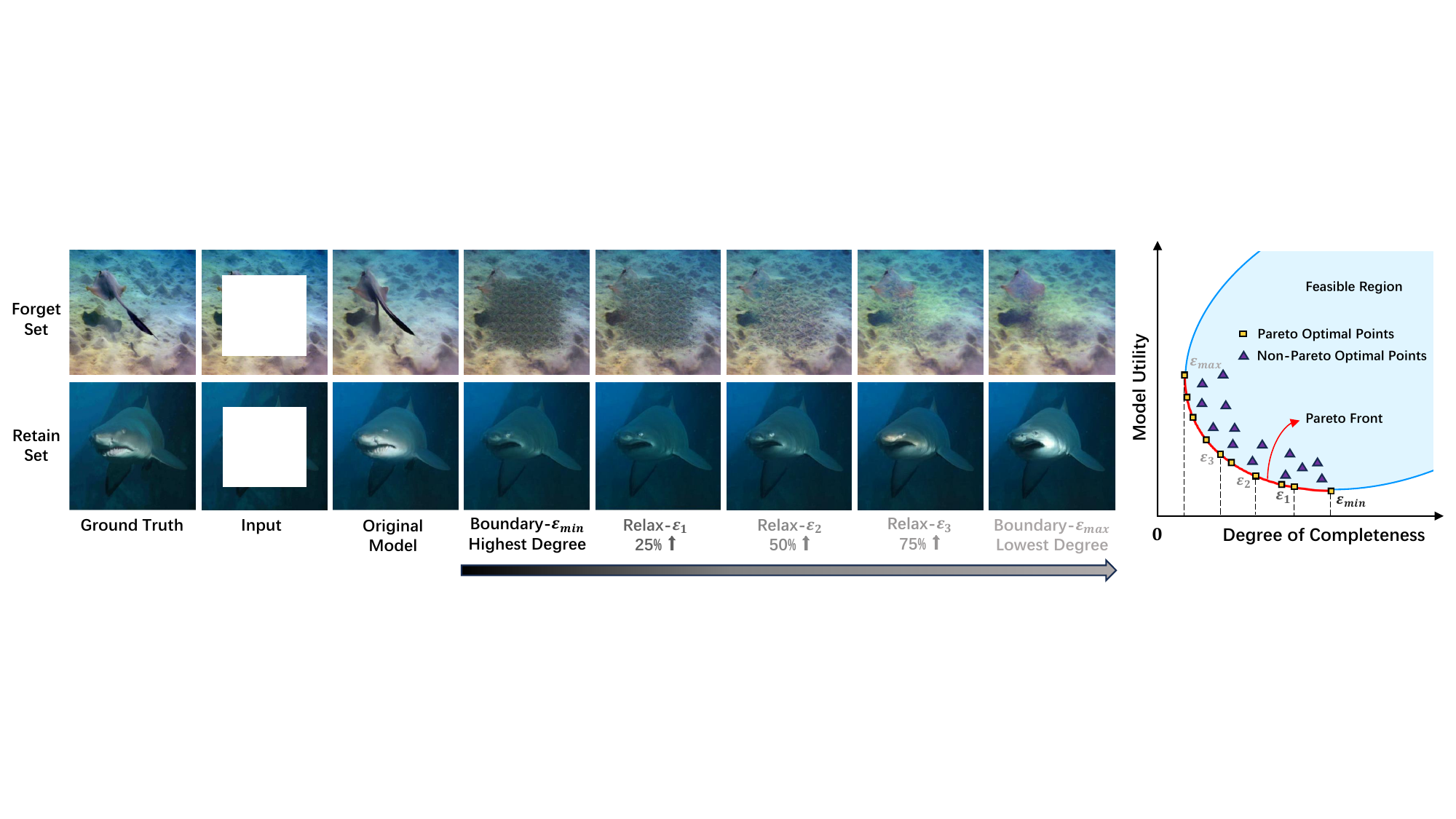}
  \caption{An overview of controllable unlearning. 
  On the left, the first and second rows represent the forget set and the retain set, respectively.
  We first present the effect of unlearning in I2I generative models, followed by a collection of controllable solutions, where $\varepsilon$ is the control coefficient. 
  On the right, we demonstrate that for each $\varepsilon$, our solution is guaranteed with the Pareto optimality.}
  \label{fig:framework}
\end{figure}

In this paper, we focus on the unlearning problem in Image-to-Image (I2I) generative models~\citep{yang2023diffusion}, where unlearning is defined by the model's incapacity to reconstruct the full image from a partially cropped one~\citep{li2024machine}, as shown in Figure \ref{fig:framework}.
Previous study~\citep{li2024machine} frames machine unlearning in generative models as a single-objective optimization problem, with the loss defined as a combination of performance on both the forget and retain sets.
However, this approach faces \textbf{three main challenges}: 
i) First and foremost, this approach offers a fixed result, ignoring the real-world need for flexible trade-offs between model utility and unlearning completeness aligned with varying user expectations. 
Regrettably, this challenge remains overlooked in the majority of current research on unlearning.
ii) This approach relies wholly on fine-tuning with manual terminating conditions, lacking a theoretical guarantee for convergence.
iii) This approach integrates two optimization objectives into a single loss function, which compromises unlearning efficiency due to the competition or conflict between different objectives.

To address these challenges, we propose a \textbf{controllable unlearning} approach that \textit{provides a set of Pareto optimal solutions to cater to varied user expectations}. 
Users can select a solution based on the degree of unlearning completeness through a simple control coefficient $\varepsilon$.
Specifically, we reframe machine unlearning of I2I generative models into a bi-objective optimization problem~\citep{kim2005adaptive}, i.e., unlearning the forget set (1st objective, unlearning completeness) while preserving the retain set (2nd objective, model utility).
Due to legislation requirements, the first objective prioritizes the second objective, meaning that minimizing the negative impact on the retain set only arises once the unlearning objective is sufficiently optimized.
Therefore, we reformulate the bi-objective optimization problem into a $\varepsilon$-constrained optimization problem, where the unlearning objective is treated as a constraint (primary to satisfy) and $\varepsilon$ is the control coefficient.
Utilizing gradient-based methods to solve this $\varepsilon$-constrained optimization, we can obtain two Pareto optimal solutions for the boundaries of unlearning with theoretical guarantee, which can be used to determine the valid range of values for $\varepsilon$.
Subsequently, we select the value of $\varepsilon$ within its valid range and relax the constraints on the unlearning objective by increasing $\varepsilon$.
As a result, we obtain a set of solutions that dynamically fulfill user's varied expectations regarding the trade-off between unlearning completeness and model utility.
Finally, to enhance the efficiency of unlearning, we analyze the convergence rates of our unlearning framework under various settings of the control function which is utilized to govern the direction of parameter updates.
The main contributions of this paper are summarized as follows:
\begin{itemize}[leftmargin=*] 
    \item We focus on I2I generative models, and propose a controllable unlearning approach that balances unlearning completeness and model utility, providing a set of solutions to fulfill varied user expectations.
    To the best of our knowledge, we are the first to study controllable unlearning.
    \item We reformulate the machine unlearning of generative models as a $\varepsilon$-constrained optimization problem with unlearning the forget set as the constraint, guaranteeing optimal theoretical solutions for the
    boundaries of unlearning.
    By progressively relaxing the unlearning constraint, we obtain the Pareto set and plot the corresponding Pareto front.
    \item We utilize gradient-based methods to solve the $\varepsilon$-constrained optimization problem. 
    To enhance the efficiency of unlearning, we analyze our framework's performance across different settings of the control function and validate with multiple combinations.
    \item We conduct extensive experiments to evaluate our proposed method over diverse I2I generative models. 
    The results from two large datasets demonstrate that the Pareto optimal solutions yielded by our method significantly outperform baseline methods. 
    Additionally, the solution set achieves controllable unlearning to fulfill varied expectations regarding the trade-off between unlearning completeness and model utility.
\end{itemize}


\section{Related Work}

\subsection{I2I Generative Models}
Many computer vision tasks can be formulated as I2I generation processes, e.g., style transfer~\citep{zhu2017unpaired}, image extension~\citep{chang2022maskgit}, restoration~\citep{teterwak2019boundless}, and image synthesis~\citep{yu2020deepi2i}. 
There are mainly three architectures for I2I generative models, i.e., Auto-Encoders (AEs)~\citep{alain2014regularized}, Generative Adversarial Networks (GANs)~\citep{goodfellow2014generative}, and diffusion models~\citep{ho2020denoising}. 
AEs mainly aim to reduce the mean squared error between generated and ground truth images but often produce lower-quality outputs~\citep{dosovitskiy2020image,esser2021taming}. 
GANs, through adversarial training, significantly improve generation quality, despite their unstable training~\citep{arjovsky2017wasserstein,gulrajani2017improved,brock2018large}. 
Diffusion models, which use a diffusion-then-denoising approach, aim for stable training and high-quality generation by minimizing the distributional distance between generated images and ground truth images~\citep{ho2020denoising,song2020improved,salimans2022progressive}. 
However, diffusion models require a greater amount of data and computational resources~\citep{saharia2022photorealistic,rombach2022high}.
In this paper, we aim to design a universal unlearning method that can be applied across different I2I models. 

\subsection{Machine Unlearning}
Machine unlearning aims to eliminate the influence of specific training data (unlearning target) from a trained model. 
A naive approach is to retrain the model from scratch using a modified dataset that excludes the unlearning target. 
However, this approach can be computationally prohibitive in practice.
Based on the degree of unlearning completeness, machine unlearning can be categorized into exact unlearning and approximate unlearning~\citep{xu2023survey}.

\textbf{Exact unlearning} aims to ensure that the unlearning target is fully unlearned, i.e., as complete as retraining from scratch~\citep{bourtoule2021machine, yan2022arcane, li2024ultrare}.
%
%
This approach, which typically relies on retraining, is limited to unlearning specific instances and cannot be readily extended to generative models with strong feature generalizations.
\textbf{Approximate unlearning} aims to obtain an approximate model, whose performance closely aligns with a retrained model~\citep{golatkar2020eternal,sekhari2021remember}.
This approach estimates the influence of unlearning targets, and updates the model accordingly, usually through gradient-based updates, avoiding full retraining~\citep{basu2020influence,li2023selective}. 
%
%
However, accurate influence estimation is still challenging~\citep{graves2021amnesiac}, reducing the applicability of this approach to generative models.

In text-to-image generative models, unlearning typically refers to concept erasure~\cite{gandikota2023erasing,pham2023circumventing,lu2024mace}, such as removing an abstract concept like the artistic style represented by ``Van Gogh'' or the entity concept represented by ``Musk''. 
In contrast, in I2I generative models, the objective of unlearning is to remove the knowledge the model has acquired from a specific set of samples.
%
%
In this paper, We focus on I2I generative models.
The exploration of unlearning is accomplished by minimizing a composite loss, which is a combination of training loss on the retain and the forget sets~\citep{li2024machine}. 
This approach is highly dependent on manual parameter tuning and cannot guarantee unlearning completeness. 
As for comparison, the solutions yielded by our proposed framework are theoretically guaranteed with Pareto optimality.

\section{Preliminary}


\subsection{Unlearning Principles}

As outlined in~\citep{chen2022recommendation,li2024making}, an unlearning task typically has three main principles: i) unlearning completeness, which involves eliminating the influence of specific data from an already trained model; ii) unlearning efficiency, which focuses on enhancing the speed of the unlearning process;
and iii) model utility, which aims to ensure that the performance of the unlearned model remains comparable to that of a model retrained from scratch.

\subsection{Pareto Optimality}
Given a multi-objective optimization problem, where $f_{i}(\theta)$ represents the $i$-th objective, the problem can be formalized as: 
$\min_{\theta} f(\theta) = \left( f_{1}(\theta), f_{2}(\theta), \cdots, f_{m}(\theta)\right)^\top$.

\textbf{Pareto dominance.} Let $\theta^{a}$, $\theta^{b}$ be two points in feasible set $\Omega$, $\theta^{a}$ is said to dominate $\theta^{b} \left( \theta^{a} \prec \theta^{b} \right)$ if and only if $f_{i}\left( \theta^{a} \right) \leq f_{i} \left( \theta^{b} \right), \forall i \in\{1, \ldots, m\}$ and  $f_{j} \left( \theta^{a} \right) < f_{j} \left( \theta^{b} \right), \exists j \in\{1, \ldots, m\}$.

\textbf{Pareto optimality~\citep{lin2019pareto}.} A point $\theta^{*}$ is Pareto optimal if there is no $\hat{\theta} \in \Omega$ for which $\hat{\theta} \prec \theta^{*}$. The collection of all such Pareto optimal points forms the Pareto set, and the surface of this set in the loss space is called the Pareto front.

\subsection{I2I Generative Model Unlearning}

\textbf{Model architecture.} Encoder-decoder structures are widely used in I2I models, with: i) an encoder $E_{\gamma}$ reducing images to the latent space, and ii) a decoder $D_{\phi}$ reconstructing images from the latent space.
For model $I_{\theta}$ with input image $x$, the output is:
\begin{equation} \label{eq:1}
    I_{\theta}(x) = D_{\phi}(E_{\gamma}(\mathcal{T}(x))),
\end{equation}
where $\mathcal{T}(x)$ denotes the cropping operation (such as center cropping or random cropping), and $\theta = \{\gamma, \phi\}$ denotes the full parameter set.

\textbf{Unlearning objective.} Define the unlearning task for an I2I generative model $I_{\theta_0}$ involving data partitions $D_f$ (forget set) and $D_r$ (retain set). 
Consider an $I_{\theta_0}$, i.e., the original model, with training data $D = D_f \cup D_r$. 
Assume that $I_{\theta_0}$ is proficiently trained to generate satisfactory results on both $D_f$ and $D_r$.
The objective of unlearning is to obtain an unlearned model $I_{\theta}$ that cannot generate satisfactory results on $D_f$ (1st objective, unlearning completeness) while maintaining comparable performance on $D_R$ (2nd objective, model utility).
Formally,
%
\begin{align}\label{eq:2}
    \max_{\theta} \Big( Div\big(\mathbb{P}(X_f) \| \mathbb{P}(I_{\theta}(\mathcal{T}(X_f)))\big) \Big)
    \text{, and }
    \min_{\theta} \Big( Div\big(\mathbb{P}(X_r) \| \mathbb{P}(I_{\theta}(\mathcal{T}(X_r)))\big) \Big),
\end{align}
%
where $X_f$ and $X_r$ are the variables for ground truth images in $D_f$ and $D_r$, $\mathbb{P}(I_{\theta}(X))$ is the model output distribution for input variable $X$, and $Div(\cdot||\cdot)$ represents distributional distance, measured by Kullback-Leibler (KL) divergence in this paper.

Following prior work~\citep{kingma2019introduction,xia2022gan,wallace2023edict}, as the model is proficiently trained, we hypothesize that $I_{\theta_0}$ can approximately replicate the distributions over both forget and retain sets~\citep{kingma2019introduction,xia2022gan,wallace2023edict}, i.e., $\mathbb{P}(X_f) \approx \mathbb{P}(I_{\theta_0}(\mathcal{T}(X_f)))$, and $\mathbb{P}(X_r) \approx \mathbb{P}(I_{\theta_0}(\mathcal{T}(X_r)))$.
Let $\mathbb{P}_X := \mathbb{P}(I_{\theta_0}(\mathcal{T}(X)))$ and $\mathbb{P}_{\hat{X}} := \mathbb{P}(I_{\theta}(\mathcal{T}(X)))$. Then, Eq. (\ref{eq:2}) can be simplified to:
\begin{equation} \label{eq:3}
    \max_{\theta} Div(\mathbb{P}_{X_f} || \mathbb{P}_{\hat{X}_f}) 
    \text{, and } 
    \min_{\theta} Div(\mathbb{P}_{X_r} || \mathbb{P}_{\hat{X}r}),
\end{equation}
where $\mathbb{P}_{X_f}$ and $\mathbb{P}_{\hat{X}_f}$ represent the output distributions of the forget set before and after unlearning respectively.
Similarly, $\mathbb{P}_{X_r}$ and $\mathbb{P}_{\hat{X}_r}$ represent those for the retain set.

\section{Methodology} \label{section:met}
In this section, we first introduce a controllable unlearning framework for I2I generative models, which formulates unlearning as a constrained optimization with the unlearning objective as a constraint. 
We utilize a gradient-based method to obtain the boundaries of unlearning. 
Then we relax the constraint within the boundaries to derive a set of Pareto optimal solutions to fulfill varied user expectations.

\subsection{$\varepsilon$-Constrained Optimization Formulation} \label{4-1}

The unlearning task for I2I models is reformulated as a bi-objective optimization (Eq. (\ref{eq:3})), with the first objective to maximize $Div(\mathbb{P}_{X_f} || \mathbb{P}_{\hat{X}f})$. 
Nonetheless, the value of $Div(\cdot||\cdot)$ can theoretically be maximized to infinity,  yielding an infinite number of possible $\mathbb{P}_{\hat{X}f}$~\citep{li2024machine}, consequently resulting in extremely diminished model utility. 
To balance unlearning completeness and model utility, we bound $Div(\mathbb{P}_{X_f} || \mathbb{P}_{\hat{X}_f})$ by Lemma~\ref{lem:1}.
\begin{lemma}[Divergence Upper Bound~\citep{cover2012elements}]\label{lem:1}
    Assuming the forget set with distribution $\mathbb{P}_{X_f}$ characterized by a zero-mean and covariance matrix $\Sigma$, and a signal $\mathbb{P}_{\hat{X}_f}$ with the same statistical properties, the maximal KL divergence is realized when $\mathbb{P}_{\hat{X}_f} = \mathcal{N}(0, \Sigma)$.
    \begin{equation} \label{eq:4} 
        Div(\mathbb{P}_{X_f}||\mathbb{P}_{\hat{X}_f}) \leq Div(\mathbb{P}_{X_f}||\mathcal{N}(0, \Sigma)).
    \end{equation}
\end{lemma}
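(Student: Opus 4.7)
The plan is to reduce the lemma to the classical maximum-entropy theorem (Cover \& Thomas, Theorem 8.6.5), which states that among all distributions on $\mathbb{R}^n$ with prescribed zero mean and covariance $\Sigma$, the centered Gaussian $\mathcal{N}(0,\Sigma)$ attains the largest differential entropy. Algebraically, this should translate into the claimed divergence upper bound once both sides of Eq.~(\ref{eq:4}) are rewritten in entropy / cross-entropy form.

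First, I would expand each divergence as
$Div(\mathbb{P}_{X_f}\|Q) = -h(\mathbb{P}_{X_f}) - \mathbb{E}_{X_f}[\log q(X_f)]$
for $Q \in \{\mathbb{P}_{\hat{X}_f},\,\mathcal{N}(0,\Sigma)\}$. Because $h(\mathbb{P}_{X_f})$ is identical on both sides of Eq.~(\ref{eq:4}), the target inequality reduces to a comparison between the two cross-entropy terms. Next, I would evaluate the Gaussian cross-entropy in closed form by plugging in the density $p_{\mathcal{N}(0,\Sigma)}$. Since $X_f$ has zero mean and covariance $\Sigma$, the quadratic form integrates to $\mathbb{E}_{X_f}[X_f^{\top} \Sigma^{-1} X_f] = \mathrm{tr}(\Sigma^{-1}\Sigma) = n$, collapsing the cross-entropy to exactly $-h(\mathcal{N}(0,\Sigma))$. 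This yields the clean identity $Div(\mathbb{P}_{X_f}\|\mathcal{N}(0,\Sigma)) = h(\mathcal{N}(0,\Sigma)) - h(\mathbb{P}_{X_f})$, which is the reference value the lemma wants to dominate.

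The final step is to exploit the hypothesis that $\mathbb{P}_{\hat{X}_f}$ shares the zero mean and covariance $\Sigma$ of $\mathbb{P}_{X_f}$. Here I would apply the Cover--Thomas bound in the form $h(\mathbb{P}_{\hat{X}_f}) \leq h(\mathcal{N}(0,\Sigma))$ and combine it with an analogous (Gaussian-surrogate) evaluation of $\mathbb{E}_{X_f}[\log p_{\hat{X}_f}(X_f)]$ to obtain $Div(\mathbb{P}_{X_f}\|\mathbb{P}_{\hat{X}_f}) \leq h(\mathcal{N}(0,\Sigma)) - h(\mathbb{P}_{X_f}) = Div(\mathbb{P}_{X_f}\|\mathcal{N}(0,\Sigma))$, closing the argument.

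The main obstacle is precisely the link made in this last step: the cross-entropy $\mathbb{E}_{X_f}[\log p_{\hat{X}_f}(X_f)]$ is not literally $-h(\mathbb{P}_{\hat{X}_f})$, so turning it into an entropy quantity to which the max-entropy bound applies requires additional structure on $\mathbb{P}_{\hat{X}_f}$ beyond matching the first two moments (e.g., treating $\mathbb{P}_{\hat{X}_f}$ as Gaussian, or invoking a variational bound). This is where the informal phrase \emph{same statistical properties} is doing the real work, and pinning down exactly what regularity is being assumed on $\mathbb{P}_{\hat{X}_f}$ is where I expect to spend most of the proof-writing effort.
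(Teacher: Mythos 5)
Your steps (1)--(3) are fine and in fact reproduce exactly what the cited Cover--Thomas result delivers: because $\mathbb{P}_{X_f}$ has zero mean and covariance $\Sigma$, the Gaussian cross-entropy collapses and $Div(\mathbb{P}_{X_f}\|\mathcal{N}(0,\Sigma)) = h(\mathcal{N}(0,\Sigma)) - h(\mathbb{P}_{X_f})$, which together with the maximum-entropy theorem is all the textbook provides (the paper itself gives no proof of Lemma~\ref{lem:1}; it is invoked purely by citation). The problem is the final step, and the obstacle you flag at the end is not a technicality you can expect to dispatch with more care --- it is a genuine gap that cannot be closed at the stated level of generality. The inequality you need is $-\mathbb{E}_{X_f}[\log p_{\hat{X}_f}(X_f)] \leq h(\mathcal{N}(0,\Sigma))$, i.e.\ that the cross-entropy of $\mathbb{P}_{X_f}$ relative to $\mathbb{P}_{\hat{X}_f}$ is dominated by the Gaussian entropy. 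The max-entropy theorem only bounds $h(\mathbb{P}_{\hat{X}_f})$, and there is no general inequality relating the cross-entropy $H(\mathbb{P}_{X_f},\mathbb{P}_{\hat{X}_f})$ to $h(\mathbb{P}_{\hat{X}_f})$ from above: taking $\mathbb{P}_{\hat{X}_f}$ with zero mean and covariance $\Sigma$ but lighter tails than $\mathbb{P}_{X_f}$ (or with support that misses a set of positive $\mathbb{P}_{X_f}$-mass) makes $Div(\mathbb{P}_{X_f}\|\mathbb{P}_{\hat{X}_f}) = +\infty$ while the right-hand side of Eq.~(\ref{eq:4}) stays finite. So matching the first two moments alone does not imply the inequality, and no rearrangement of entropy identities will change that.

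What this means concretely: the statement is only recoverable under extra structure on $\mathbb{P}_{\hat{X}_f}$ --- e.g.\ restricting $\mathbb{P}_{\hat{X}_f}$ to be Gaussian (in which case (\ref{eq:4}) holds with equality, via your step (3) identity applied to $\hat{X}_f$), or otherwise constraining the family so that the cross-entropy term is controlled by $h(\mathcal{N}(0,\Sigma))$. Your closing remark that the phrase ``same statistical properties'' is doing the real work is exactly right, and a complete write-up should either state that restriction explicitly or recast the lemma as the identity $Div(\mathbb{P}_{X_f}\|\mathcal{N}(0,\Sigma)) = h(\mathcal{N}(0,\Sigma)) - h(\mathbb{P}_{X_f})$ plus the max-entropy property, which is the form the citation actually supports; as written, your proposal does not (and cannot) establish (\ref{eq:4}) for arbitrary $\mathbb{P}_{\hat{X}_f}$ with matching moments.
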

As image normalization typically involves mean subtraction~\citep{elasri2022image}, we can assume $\mathbb{P}_{X_f}$ and $\mathbb{P}_{\hat{X}_f}$ follow zero-mean distributions for conciseness without sacrificing generality.
%
%
Lemma \ref{lem:1} reveals that the upper bound of $Div(\mathbb{P}_{X_f} || \mathbb{P}_{\hat{X}f})$ is achieved when $\mathbb{P}_{\hat{X}_f} \sim \mathcal{N}(0, \Sigma)$.
This suggests that maximizing $Div(\mathbb{P}_{X_f} || \mathbb{P}_{\hat{X}f})$ equates to minimizing $Div(\mathbb{P}_{\hat{X}_f} || \mathcal{N}(0, \Sigma))$. 
Consequently, we rewrite Eq. (\ref{eq:3}) as:
\begin{equation} \label{eq:5} 
    \min_{\theta}Div(\mathcal{N}(0, \Sigma)||\mathbb{P}_{\hat{X}_f}) \text{, and } 
    \min_{\theta}Div(\mathbb{P}_{X_r}||\mathbb{P}_{\hat{X}_r}).
\end{equation}

As both terms in Eq. (\ref{eq:5}) depend on $\theta$, we define $f_1(\theta):= Div(\mathcal{N}(0, \Sigma) || \mathbb{P}_{\hat{X}f})$ and $f_2(\theta):= Div(\mathbb{P}_{X_r} || \mathbb{P}_{\hat{X}_r})$ for conciseness.
However, unlike classification models where their outputs are precisely univariate discrete distributions~\citep{kurmanji2024towards,zhang2023machine}, high-dimensional KL divergence calculations in I2I generative models are intractable. 
Thus, following ~\citep{li2024machine}, we adopt the $L_2$ loss as a surrogate.
Due to privacy legal requirements, unlearning objectives typically takes precedence.
Thus, we set $f_1(\theta)$ as the primary constraint and treat Eq. (\ref{eq:5}) as a $\varepsilon$-constrained optimization problem: 
\begin{gather} 
    \min_{\theta \in \mathbb{R}^d} f_2(\theta) 
    \quad \text{s.t.} \quad 
    f_1(\theta) \leq \varepsilon,
    \label{eq:6}
\end{gather}
where $\varepsilon$ is a parameter to control the completeness of unlearning.
We minimize $f_2(\theta)$ inside the feasible set $\Omega = \{\theta: f_1(\theta) \leq \varepsilon\}$, which implies that our priority lies in unlearning the forget set rather than mitigating performance degradation on the retain set.

\subsection{Solving the $\varepsilon$-Constraint Optimization} \label{4-2}

To solve the $\varepsilon$-constrained optimization problem in Eq. (\ref{eq:6}), approaches such as Sequential Quadratic Programming (SQP)~\citep{nocedal1999numerical,bonnans2006numerical}, penalty function method~\citep{yeniay2005penalty}, and interior point method~\citep{renegar2001mathematical} are commonly employed. 
Given the extensive parameter set of the I2I generative model, we select a special variant of the SQP algorithm for its lower complexity and comparable convergence guarantee~\citep{nocedal1999numerical,gill2011sequential,gong2021automatic}.

Specifically, we employ a gradient-based method to solve Eq. (\ref{eq:6}), updating the parameter by $\theta_{t+1} \xleftarrow[]{} \theta_{t} - \mu_{t}g_t$. 
Here, $\mu_t > 0$ denotes the step size, and $g_t$ represents the direction of the parameter update, which is determined by solving a convex quadratic programming problem w.r.t. $g$ (for a detailed derivation, please refer to the Appendix~\ref{pro:eq}):
\begin{equation} \label{eq:7} 
    g_t = \min_{g \in \mathbb{R}^d} \Big\{ {\| \nabla f_2(\theta_t)-g \|}^2 
    \quad \text{s.t.} \quad 
    \nabla {f_1(\theta_t)}^\top g \geq f_1(\theta_t)-\varepsilon \Big\}.
\end{equation}
Due to the inability to obtain the effective range of $\varepsilon$ in the early stages of unlearning, direct computation of $f_1(\theta_t)-\varepsilon$ is not feasible. 
Consequently, we adjust the constraint of Eq.~(\ref{eq:7}) by employing a control function $\psi(\theta_t)$ (i.e., $\nabla {f_1(\theta_t)}^\top g \geq \psi(\theta_t)$), which should satisfy $sign(\psi(\theta_t)) = sign(f_1(\theta_t) - \varepsilon)$, where $sign(x) = x/|x|$ for $x \neq 0$ and $sign(0) = 0$.
This ensures that the direction of updates remains as consistent as possible before and after the substitution.
Further, we provide a summary of our proposed unlearning algorithm in Algorithm~\ref{alg:1}. 

\begin{algorithm}
\caption{Gradient-based Optimization Method}\label{alg:1}
\begin{algorithmic}[1]

\Require 
    Original model $I_{\theta_0}$, 
    forget set $D_f$, retain set $D_r$, 
    control function $\psi(\theta)$, 
    step size $\mu$, covariance matrix $\Sigma$,
    numerical stability variable $ \varpi = 1e-7$.

\State \textbf{Initial:} Initialize $t=0$, $I_{\theta_t} = I_{\theta_0}$;

\For{$t = 0 \text{ to } T-1$}
    \State Sample $\{x_f\}$, $\{x_r\}$ and $\{x_n\}$ from $D_f$, $D_r$ and $\mathcal{N}(0, \varepsilon)$ respectively, ensuring that $|\{x_f\}| = |\{x_r\}| = |\{x_n\}|$;
    \State Compute loss: 
    \State \hspace*{1em} $f_1(\theta_t) = {\| I_{\theta_t}(\mathcal{T}(D_f)) - I_{\theta_0}(\mathcal{T}(x_n)) \|}_2$
    \State \hspace*{1em} $f_2(\theta_t) = {\| I_{\theta_t}(\mathcal{T}(D_r)) - I_{\theta_0}(\mathcal{T}(D_r)) \|}_2$
    \State Compute gradient: $\nabla f_1(\theta_t) \text{, } \nabla f_2(\theta_t)$;
    \State Compute the solution to the dual problem of Eq. (\ref{eq:7}): $\eta_t = \max \Big( \frac{\psi(\theta_t) - {\nabla f_2(\theta_t)}^\top \nabla f_1(\theta_t)}{{\| \nabla f_1(\theta_t) \|}^2 + \varpi}, 0 \Big)$;
    \State Compute parameter update direction: $g_t = \nabla f_2(\theta_t) + \eta_t \nabla f_1(\theta_t)$;
    \State Update the parameter of the target model $I_{\theta_{t+1}} : \theta_{t+1} \xleftarrow[]{} \theta_{t} - \mu_{t}g_t$;
\EndFor\\
\textbf{Return} Unlearned model $I_{\theta_T}$;

\end{algorithmic}
\end{algorithm}

\begin{assumption}\label{asp:1}
    Assume $f_1(\theta)$ and $f_2(\theta)$ are continuously differentiable, and the trajectory $\{ \theta_t : t \in [0,+\infty) \}$ follows the continuous-time dynamics $\dot{\theta}_t = -g_t$, where $g_t$ is defined in Eq. (\ref{eq:7}) and $\max_{t \in [0,+\infty)} \eta_t < + \infty$.
\end{assumption}

The convergence analysis of Algorithm \ref{alg:1} regarding Eq. (\ref{eq:6}) utilizes the continuous-time framework given by $\dot{\theta}_t = -g_t$, as mentioned in Assumption \ref{asp:1}. 
Please refer to Lemma~\ref{lem:2} in Appendix~\ref{theory.1} for further details of convergence.

\subsection{A Controllable Unlearning Framework} \label{4-3}

Our controllable unlearning framework consists of two phases. 
In Phase I, we reformulate Eq. (\ref{eq:6}) into a special form to obtain the solution for the boundaries of unlearning. 
In Phase II, we adjust the value $\varepsilon$ within its valid range to relax the unlearning constraint and obtain the Pareto optimal solutions for controllable unlearning. 
This relaxation of unlearning completeness allows for a controllable trade-off between completeness and model utility, thereby catering to varied user expectations.
%

\paragraph{Phase I: Boundaries of unlearning.}
The boundaries of unlearning refer to the two Pareto optimal solutions with the highest and lowest degrees of unlearning completeness.

To obtain the Pareto optimal solutions with the highest degrees of unlearning completeness, we reformulate Eq. (\ref{eq:6})into the following
special form: 
\begin{gather} 
    \min_{\theta \in \mathbb{R}^d} f_2(\theta) 
    \quad \text{s.t.} \quad 
    f_1(\theta) \leq \varepsilon, \notag \\
    \text{where} \quad 
    \varepsilon = f_1^{*} \text{, and } f_1^{*} := \inf_{\theta \in \mathbb{R}^d} f_1(\theta).
    \label{eq:8}
\end{gather}
The solution of this optimization problem can be obtained by Algorithm \ref{alg:1}. 
According to 
Assumption~\ref{asp:1}, we need to ensure that $\psi(\theta) \geq 0$ in Eq. (\ref{eq:8}) to guarantee the same sign with $f_1(\theta)-\varepsilon$. 
%
%
%
In this paper, we we simply define $\psi(\theta)=\alpha{\|\nabla f_1(\theta) \|}^{\delta}$ with $\alpha>0$ and $\delta \geq 1$. 

\begin{proposition}[Boundary of Pareto Set]\label{prop:1}
    Under Assumption \ref{asp:1}, let $f_1^{*} > -\infty$ and $f_2^{*} > -\infty$ be the infimum of $f_1(\theta), f_2(\theta)$, respectively.
    Further, let $\psi(\theta)$ be continuous and $\nabla f_1(\theta)$ be continuously differentiable. 
    If $\theta_t \to \theta^{*}$ and $g_t \to 0$ as $t \to +\infty$, with ${\nabla}^2 f_1(\theta)$ of constant rank near $\theta^{*}$ and $f_1(\theta), f_2(\theta)$ being convex near $\theta_t$, then $\theta^{*}$ is a Pareto optimal solution and $f_1(\theta^{*})=f_1^{*}$.
    
\end{proposition}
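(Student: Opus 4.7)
My plan is to combine the continuous-time descent $\dot{\theta}_t = -g_t$ from Assumption~\ref{asp:1} with the KKT conditions of the inner QP defining $g_t$ in Eq.~(\ref{eq:7}) to extract two facts about the limit $\theta^{*}$: (i) $f_1(\theta^{*}) = f_1^{*}$, via a Lyapunov-type descent argument on $f_1$ together with the sign identity of Assumption~\ref{asp:1}; and (ii) $\theta^{*}$ satisfies a first-order Pareto stationarity condition for $(f_1,f_2)$, which under the local convexity and constant-rank hypotheses upgrades to the global Pareto definition.

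For (i) I would differentiate $f_1$ along the trajectory:
\[
    \tfrac{d}{dt}\, f_1(\theta_t) \;=\; -\nabla f_1(\theta_t)^{\top} g_t \;\leq\; -\psi(\theta_t) \;\leq\; 0,
\]
where the first inequality is the QP constraint and the second uses that $\varepsilon = f_1^{*}$ in Eq.~(\ref{eq:8}) gives $f_1(\theta_t) - \varepsilon \geq 0$ and hence $\psi(\theta_t) \geq 0$ by the sign identity in Assumption~\ref{asp:1}. Thus $f_1(\theta_t)$ is non-increasing and converges. Since $g_t \to 0$ and $\nabla f_1$ is continuous on a neighborhood of $\theta^{*}$, $\nabla f_1(\theta_t)^{\top} g_t \to 0$, so the QP constraint squeezes $\psi(\theta_t) \to 0$; continuity of $\psi$ then gives $\psi(\theta^{*}) = 0$, and invoking the sign identity in reverse yields $f_1(\theta^{*}) = \varepsilon = f_1^{*}$.

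For (ii) I would read off a scalarized stationarity relation from the QP. Lagrangian stationarity in Eq.~(\ref{eq:7}) produces a multiplier $\mu_t \geq 0$ with
\[
    g_t \;=\; \nabla f_2(\theta_t) + \tfrac{\mu_t}{2}\,\nabla f_1(\theta_t).
\]
Dividing by $1 + \mu_t/2$, the coefficients $\alpha_t := 1/(1+\mu_t/2)$ and $\beta_t := (\mu_t/2)/(1+\mu_t/2)$ lie in the unit simplex, so along a subsequence they converge to some $(\alpha^{*},\beta^{*})$ with $\alpha^{*},\beta^{*}\geq 0$ and $\alpha^{*}+\beta^{*}=1$. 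Using $g_t \to 0$ and continuity of the gradients, passing to the limit yields $\alpha^{*}\nabla f_2(\theta^{*}) + \beta^{*}\nabla f_1(\theta^{*}) = 0$, which together with local convexity of $f_1,f_2$ around $\theta^{*}$ is the standard convex-scalarization certificate for local Pareto optimality.

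The main obstacle will be promoting this local conclusion to the global Pareto definition in the statement. My plan is: any candidate dominator $\hat{\theta}$ must have $f_1(\hat{\theta}) \leq f_1(\theta^{*}) = f_1^{*}$, which forces $f_1(\hat{\theta}) = f_1^{*}$ since $f_1^{*}$ is the infimum; convexity of $f_1$ then pins the whole segment $[\theta^{*},\hat{\theta}]$ inside the level set $\{f_1 = f_1^{*}\}$. The constant-rank hypothesis on $\nabla^{2}f_1$ near $\theta^{*}$ lets the constant-rank theorem present this level set locally as a smooth manifold, along which the scalarized stationarity of $f_2$ together with convexity rules out $f_2(\hat{\theta}) < f_2(\theta^{*})$. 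Bridging the infinitesimal neighborhood produced by the constant-rank theorem to an arbitrary $\hat{\theta}$ via the convex segment is the delicate step, and it is where I expect the convexity-plus-constant-rank assumptions to do the real work in tandem.
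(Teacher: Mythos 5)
Your part (i) is sound and close in spirit to the paper's: the paper also deduces $\psi(\theta^{*})=0$ (via the integrability bound $\int_0^{\infty}\psi(\theta_t)\,dt\leq f_1(\theta_0)-f_1^{*}$ rather than your squeeze through the QP constraint, but either route works) and then invokes the sign condition to get $f_1(\theta^{*})=f_1^{*}$. The problem is part (ii). Once $f_1(\theta^{*})=f_1^{*}$ you have $\nabla f_1(\theta^{*})=0$, and since $g_t=\nabla f_2(\theta_t)+\eta_t\nabla f_1(\theta_t)\to 0$, whenever $\nabla f_2(\theta^{*})\neq 0$ the multiplier $\eta_t$ must blow up. Your normalized Fritz--John limit then degenerates to $\alpha^{*}=0$, $\beta^{*}=1$, and the limiting identity $\alpha^{*}\nabla f_2(\theta^{*})+\beta^{*}\nabla f_1(\theta^{*})=0$ collapses to $\nabla f_1(\theta^{*})=0$, which you already knew. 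It carries no information about $f_2$, so it is not a "convex-scalarization certificate" of anything; the standard first-order KKT/scalarization machinery fails here precisely because the constraint qualification $\|\nabla f_1(\theta^{*})\|\neq 0$ is violated at the boundary solution Eq.~(\ref{eq:8}) is designed to find. Your final globalization step inherits this defect: it leans on "scalarized stationarity of $f_2$" along the level set $\{f_1=f_1^{*}\}$, which your argument never actually establishes, and you acknowledge the bridge is left open.

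The paper's proof is built exactly to escape this degeneracy, and this is where the constant-rank hypothesis really enters (not merely to give the level set a manifold structure). It writes $\eta_t\nabla f_1(\theta_t)=\eta_t\nabla^{2}f_1(\theta_t')(\theta_t-\theta^{*})=\nabla^{2}f_1(\theta_t')\,\omega_t'$ by the mean value theorem, replaces $\omega_t'$ by the least-norm least-squares vector $\omega_t=(\nabla^{2}f_1(\theta_t'))^{+}\nabla f_2(\theta_t)$, and uses the constant-rank condition together with Stewart's pseudo-inverse continuity result (Lemma~\ref{lem:4}) to pass to the limit and obtain the second-order KKT relation $\nabla f_2(\theta^{*})+\nabla^{2}f_1(\theta^{*})\omega^{*}=0$, i.e., $\nabla f_2(\theta^{*})$ is orthogonal to the null space of $\nabla^{2}f_1(\theta^{*})$ (the tangent space of the stationary set of $f_1$). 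Only then do convexity and the $\varepsilon$-constraint characterization of Chankong--Haimes/Miettinen (Lemma~\ref{lem:5}) yield that $\theta^{*}$ minimizes $f_2$ over $\{f_1\leq\varepsilon\}$ and is Pareto optimal. To repair your proposal you would need to replace the bounded-multiplier/simplex limit with some analogue of this second-order argument (or otherwise control the direction of $\eta_t\nabla f_1(\theta_t)$ as $\eta_t\to\infty$); as written, the key optimality claim for $f_2$ is unsupported.
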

\vspace{-8pt}
\begin{proof}
    The proof can be found in Appendix \ref{theory.2}.
\end{proof}

Proposition \ref{prop:1} ensures that the solution $\theta_1^{*}$ obtained by Algorithm \ref{alg:1} for solving Eq. (\ref{eq:8}) is on the boundary of the Pareto set, specifically refer to the highest degree of unlearning completeness.
Meanwhile, $f_1(\theta_1^{*})$ achieve the infimum of $f_1(\theta)$.

Obtaining the Pareto optimal solution with the lowest unlearning completeness is similar to the process mentioned above, with the difference of exchanging the positions of $f_1(\theta)$ and $f_2(\theta)$ in Eq. (\ref{eq:8}). 
This new problem is formulated as 
$\min_{\theta \in \mathbb{R}^d} f_1(\theta)$, s.t. $f_2(\theta) \leq \varepsilon$, where $\varepsilon = f_2^{*}$, and $f_2^{*} := \inf_{\theta \in \mathbb{R}^d} f_2(\theta)$. 
The solution $\theta_2^{*}$ obtained by solving this problem is another boundary the Pareto set, i.e., the Pareto optimal solution with the lowest unlearning completeness, with $f_2(\theta_2^{*})$ achieving the infimum of $f_2(\theta)$.

\begin{figure}
  \centering
  \includegraphics[width=\textwidth]{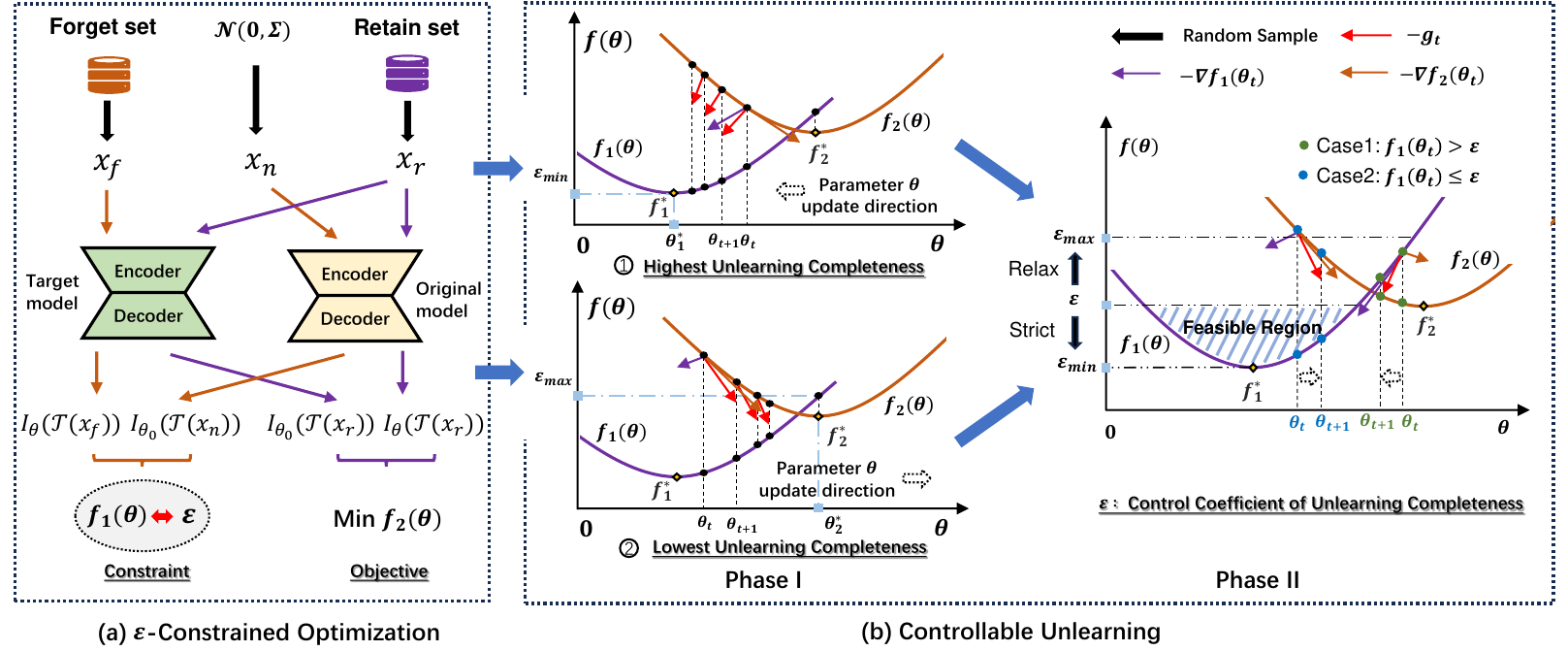}
  \caption{Pipeline of the controllable unlearning framework. 
  (a) shows the unlearning task of the I2I generative model which is framed as a $\varepsilon$-constrained optimization problem. 
  (b) shows that the implementation of controllable unlearning unfolds in two phases: i) initially identifying two boundary points of unlearning, necessitating a strict reduction in $f_1(\theta)$ (or $f_2(\theta)$) for optimality; and ii) then locating the given $\varepsilon$'s Pareto optimal point, with strict reduction in $f_1(\theta)$ when $f_1(\theta_t) > \varepsilon$ and permitting an increase when $f_1(\theta_t) \leq \varepsilon$.}
  \label{fig:pipeline}
\end{figure}

\paragraph{Phase II: Controllable unlearning.}
To adjust the trade-off between unlearning completeness and model utility, we relax the unlearning constraint by defining $ f_1(\theta_1^{*}) < \varepsilon < f_1(\theta_2^{*})$ in Eq. (\ref{eq:6}), where $\theta_1^{*}$ and $\theta_2^{*}$ have already been obtained in Phase I.
Then we rewrite Eq. (\ref{eq:8}) for controllable unlearning: 
\begin{gather} 
    \min_{\theta \in \mathbb{R}^d} f_2(\theta)
    \quad \text{s.t.} \quad 
    f_1(\theta) \leq \varepsilon, \notag \\
    \text{where} \quad 
    \varepsilon > f_1^{*} \text{, and } f_1^{*}:= \inf_{\theta \in \mathbb{R}^d} f_1(\theta),
    \label{eq:9}
\end{gather}
where $\varepsilon \in \mathbb{R}$ is used to adjust the completeness of unlearning.
In Phase II, according to the sign condition in Assumption \ref{asp:1}, we simply set $\psi(\theta)=\beta(f_1(\theta)-\varepsilon)^{\delta}$ with $\beta>0$, $\delta = 2n+1$ and $n \in \mathbb{N}$.

\begin{proposition}[Interior of Paret Set]\label{prop:2}
    Under Assumption \ref{asp:1}, let $f_2^{*}=\inf_{\theta \in \mathbb{R}^d} f_2(\theta) > -\infty$ and $\sup_{t \in [0,+\infty)} \eta_t = \eta_{max} < +\infty$.
    If $\theta_t$ is a stationary point with $g_t = 0$ and $\eta_t < +\infty$, and both $f_1(\theta)$ and $f_2(\theta)$ are convex at $\theta_t$, then $\theta_t$ is a Pareto optimal solution w.r.t. $\varepsilon$.
\end{proposition}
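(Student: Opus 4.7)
The plan is to derive KKT optimality conditions for the quadratic subproblem that defines $g_t$ in Eq.~(\ref{eq:7}), and then combine them with the convexity hypothesis to exclude any Pareto-dominating point. Because this QP has a strongly convex quadratic objective in $g$ and a single affine inequality constraint, strong duality holds and the KKT conditions are both necessary and sufficient for optimality of $g_t$.

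First I would form the Lagrangian with multiplier $\eta_t \geq 0$ and write stationarity in $g$, obtaining $g_t = \nabla f_2(\theta_t) + \tfrac{\eta_t}{2}\nabla f_1(\theta_t)$. Substituting the hypothesis $g_t = 0$ yields the identity
\begin{equation*}
    \nabla f_2(\theta_t) = -\tfrac{\eta_t}{2}\nabla f_1(\theta_t), \qquad \eta_t \geq 0.
\end{equation*}
The QP constraint $\nabla f_1(\theta_t)^\top g_t \geq \psi(\theta_t)$ evaluated at $g_t = 0$ forces $\psi(\theta_t) \leq 0$; since $\psi(\theta) = \beta(f_1(\theta)-\varepsilon)^\delta$ with $\beta>0$ and $\delta$ an odd positive integer, this is equivalent to $f_1(\theta_t) \leq \varepsilon$, so $\theta_t$ is feasible for Eq.~(\ref{eq:9}).

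Next I would argue Pareto optimality by contradiction. Assume some $\hat{\theta}$ dominates $\theta_t$, meaning $f_i(\hat{\theta}) \leq f_i(\theta_t)$ for $i=1,2$ with strict inequality at some index $j$. Writing $d := \hat{\theta} - \theta_t$ and applying convexity of $f_1, f_2$ at $\theta_t$ gives $\nabla f_i(\theta_t)^\top d \leq f_i(\hat{\theta}) - f_i(\theta_t) \leq 0$ for $i = 1,2$, with strict inequality at the index $j$. Dotting the KKT identity with $d$ yields $\nabla f_2(\theta_t)^\top d = -\tfrac{\eta_t}{2}\nabla f_1(\theta_t)^\top d$; together with $\eta_t \geq 0$ and the sign information just established, this forces both $\nabla f_1(\theta_t)^\top d$ and $\nabla f_2(\theta_t)^\top d$ to vanish, contradicting the strict inequality at $j$.

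The hardest part will be the degenerate case $\eta_t = 0$, where the KKT identity collapses to $\nabla f_2(\theta_t) = 0$ and the clean sign argument only rules out the subcase $j = 2$. To close this gap I would exploit $\nabla f_2(\theta_t) = 0$ together with convexity of $f_2$ to conclude that $\theta_t$ is already an unconstrained minimizer of $f_2$, so any candidate dominator must satisfy $f_2(\hat{\theta}) = f_2(\theta_t)$; I would then invoke the continuous-time descent dynamics $\dot{\theta}_t = -g_t$ from Assumption~\ref{asp:1} to argue that if a neighboring feasible point strictly improved $f_1$ while matching $f_2(\theta_t)$, then the QP in Eq.~(\ref{eq:7}) would admit a nonzero descent direction along $-\nabla f_1(\theta_t)$ consistent with the constraint, contradicting $g_t = 0$ at a genuine stationary point of the trajectory.
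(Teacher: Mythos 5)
Your main line of argument is sound and takes a genuinely different route from the paper: the paper never touches the QP in Eq.~(\ref{eq:7}) directly, but instead uses the penalty function $P_{\xi}$ and the KKT function $K_{\tau}$ together with Lemma~\ref{lem:2} (stationarity gives $\frac{d}{dt}P_{\xi}(\theta_t)=0$, hence $K_{\tau}(\theta_t,\eta_t)=0$, hence the first-order KKT conditions of Eq.~(\ref{eq:9})), and then outsources Pareto optimality to Miettinen's sufficiency theorem (Lemma~\ref{lem:6}). You instead read off the QP--KKT identity $\nabla f_2(\theta_t)=-\tfrac{\eta_t}{2}\nabla f_1(\theta_t)$ and feasibility $\psi(\theta_t)\le 0 \Leftrightarrow f_1(\theta_t)\le\varepsilon$ at $g_t=0$, and re-prove the sufficiency result from scratch via the convexity gradient inequality. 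That self-contained dominance argument is correct whenever $\eta_t>0$, and also when the strict improvement of the putative dominator is in $f_2$; it is arguably more transparent than the paper's citation chain.

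The genuine gap is exactly the degenerate case you flag, and your proposed fix does not work. If $\eta_t=0$ and $g_t=0$, then $\nabla f_2(\theta_t)=0$, so the QP objective reduces to $\|g\|^2$ and, since $\psi(\theta_t)\le 0$ makes $g=0$ feasible, $g=0$ \emph{is} the unique minimizer of Eq.~(\ref{eq:7}) --- regardless of whether $\nabla f_1(\theta_t)\neq 0$ or whether feasible directions decreasing $f_1$ exist. The constraint $\nabla f_1(\theta_t)^{\top} g\ge\psi(\theta_t)$ is an inequality that $g=0$ already satisfies, so the existence of a direction along $-\nabla f_1(\theta_t)$ "consistent with the constraint" gives no contradiction with $g_t=0$: the dynamics simply halt at any feasible point where $\nabla f_2$ vanishes. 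In that situation one can only conclude that $\theta_t$ is a global minimizer of $f_2$ in the feasible set, i.e.\ weakly Pareto optimal; a strict improvement in $f_1$ at constant $f_2$ cannot be excluded without an extra hypothesis (e.g.\ uniqueness of the minimizer of the $\varepsilon$-constraint problem as in Lemma~\ref{lem:5}, or strict positivity of the multiplier). Note that the paper's own proof has the same blind spot --- it establishes $\eta^{*}\in[0,+\infty)$ but invokes Lemma~\ref{lem:6}, whose hypothesis is $\eta^{*}>0$ --- so your instinct about where the difficulty lies is right; what is wrong is only the claim that the QP stationarity rules it out. To close the case honestly you should either add an assumption forcing $\eta_t>0$ (constraint active with nonvanishing multiplier), assume uniqueness/strict convexity, or weaken the conclusion to weak Pareto optimality in that subcase.
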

\vspace{-8pt}
\begin{proof}
    The proof can be found in Appendix \ref{theory.3}.
\end{proof}

From Proposition \ref{prop:2}, Eq. (\ref{eq:9}) provides a Pareto optimal solution w.r.t. $\varepsilon$. 
By progressively increasing $\varepsilon$ from $f_1^{*}$, which is estimated by $f_1(\theta_1^*)$ in Phase I, we can trace a path of Pareto optimal solutions for different completeness of unlearning.
As a result, this path offers controllable unlearning for varied user expectations.
In addition, our framework demonstrates high unlearning efficiency, with a detailed efficiency analysis provided in Appendix~\ref{4-4}.

\section{Experiments}

\subsection{Experimental Settings} \label{sec:5.1}

We evaluate our proposed method on three mainstream I2I generative models, i.e., Masked Autoencoder (MAE)~\citep{he2022masked}, Vector Quantized Generative Adversarial Networks (VQ-GAN)~\citep{li2023mage}, and diffusion probabilistic models~\citep{saharia2022palette}.
Please refer to Appendix~\ref{sec-detail-experiment-hyper} for the settings of hyperparameters. 

\textbf{Datasets:} Following~\citep{li2024machine}, we conduct experiments on the following two large-scale datasets: 
i) ImageNet-1K~\citep{deng2009imagenet}, from which we randomly select 200 classes, designating 100 of these as the forget set and the remaining 100 as the retain set. 
Each class contains 150 images, with 100 allocated for training and the remaining for validation;
and ii) Places-365~\citep{zhou2017places}, from which we randomly select 100 classes, designating 50 of these as the forget set and the remaining 50 as the retain set. 
Each class contains 5500 images, with 5000 allocated for training and the remaining 500 for validation.

\textbf{Baselines:} We first report the performance of the original model (i.e., before unlearning) as a reference.
Following~\citep{li2024machine}, we set the following baselines: 
i) Max Loss~\citep{warnecke2023machine,gandikota2023erasing}, which maximizes the training loss on the forget set;
ii) Retain Label~\citep{kong2023data}, which minimizes training loss by setting the true values of the retain samples as those of the forget set; 
iii) Noisy Label~\citep{graves2021amnesiac,gandikota2023erasing}, which minimizes the training loss by introducing Gaussian noise to the ground truth images of the forget set; 
and iv) Composite Loss~\citep{li2024machine}, the State-Of-The-Art (SOTA) method, which builds upon Noisy Label by calculating the loss on the retain set and obtaining their weighted sum, thereby minimizing this weighted training loss.

\textbf{Evaluation metrics.} We adopt three different types of metrics to comprehensively compare our method with other baselines: 
i) Inception Score (IS) of the generated images~\citep{salimans2016improved}; 
ii) the Frech\'et Inception Distance (FID) between the generated images and the ground truth images~\citep{heusel2017gans}; and
iii) the cosine similarity between the CLIP embeddings of the generated images and the ground truth images~\citep{radford2021learning}. 
IS evaluates the quality of the generated images independently, while the FID further measures the similarity between the generated and ground truth images. 
On the other hand, the distance of CLIP embeddings assesses whether the generated images still capture similar semantics.
Please refer to Appendix~\ref{sec-detail-experiment-evaluate} for more information of evaluation metrics.

\subsection{Unlearning Performance} \label{sec:5.2}

We test our method on image extension, inpainting, and reconstruction tasks. 
We report the results for center uncropping (i.e., inpainting) in Tabel \ref{tab:unlearn-per}, and the others in Appendix \ref{more-generative-task}.

\textbf{Baseline comparison:} As shown in Table \ref{tab:unlearn-per}, compared to the original model, our method retains almost the same performance on the retain set or only exhibits minor degradation. Meanwhile, there is a significant reduction in the three metrics on the forget set. 
In contrast, these baselines generally cannot perform well simultaneously on both the forget set and the retain set. 
For instance, in MAE, Composite Loss has the least performance degradation on the retain set, but its performance on the forget set is also the worst. 
We also observe similar findings for Max Loss in VQ-GAN.
Furthermore, we provide some examples of generated images in Figure \ref{fig:baseline}, and more images in Appendix \ref{sec-more-baseline}.

\textbf{T-SNE analysis:} Following~\citep{li2024machine}, we conduct a T-SNE analysis~\citep{van2008visualizing} to further analyze our method's effectiveness. 
Using our unlearned model, we generate 50 images for both the retain set and the forget set. 
We then calculate the CLIP embedding vectors for these images and their corresponding ground truth images. 
As illustrated in Figure \ref{fig:t-sne}, after unlearning, the embeddings of retain set are close to that of the ground truth images, while most of the generated images on the forget set diverge significantly from the ground truth one.

\textbf{Unlearning robustness:} We validate the performance of our controllable unlearning framework in different image generation tasks by changing the cropping patterns. 
The results indicate that our framework is robust to various image generation tasks and generally outperforms baselines, with detailed results provided in Appendix \ref{more-generative-task}. 
Moreover, we examine the unlearning effects of our controllable unlearning framework under different crop ratios. 
The results in Appendix \ref{more-crop-pattern-ratio} demonstrate that our framework is robust to different crop ratios. 
Furthermore, we find that the visual effects of unlearning control are more prominent with larger crop ratios.

\textbf{Summary:} These results validate the effectiveness of our proposed method, which is universally applicable to mainstream I2I generative models as well as a variety of image generation tasks, consistently achieving favorable outcomes across all these tasks.

\begin{table}[t]
  \centering
  \caption{Results of center cropping 50\% of the images. 
  `F' and `R' stand for the forget set and retain set, respectively. Here, "Ours" refers to the boundary points of unlearning obtained in Phase I, that is, the solution with the highest degree of unlearning completeness. The best results are highlighted in \textbf{bold}, and secondary results are highlighted with \underline{underline}.}
  \label{tab:unlearn-per}
  \resizebox{\linewidth}{!}{
  \begin{tabular}{c|cccccc|cccccc|cccccc}
    \toprule
    & \multicolumn{6}{c|}{MAE} & \multicolumn{6}{c|}{VQ-GAN} & \multicolumn{6}{c}{Diffusion Models} \\
    \cmidrule(lr){2-7} \cmidrule(lr){8-13} \cmidrule(lr){14-19}
    & \multicolumn{2}{c}{IS} & \multicolumn{2}{c}{FID} & \multicolumn{2}{c|}{CLIP}
    & \multicolumn{2}{c}{IS} & \multicolumn{2}{c}{FID} & \multicolumn{2}{c|}{CLIP}
    & \multicolumn{2}{c}{IS} & \multicolumn{2}{c}{FID} & \multicolumn{2}{c}{CLIP} \\
    \cmidrule(lr){2-7} \cmidrule(lr){8-13} \cmidrule(lr){14-19}
    & \multicolumn{1}{c}{F $\downarrow$} & \multicolumn{1}{c}{R $\uparrow$}
    & \multicolumn{1}{c}{F $\uparrow$} & \multicolumn{1}{c}{R $\downarrow$}
    & \multicolumn{1}{c}{F $\downarrow$} & \multicolumn{1}{c|}{R $\uparrow$}
    & \multicolumn{1}{c}{F $\downarrow$} & \multicolumn{1}{c}{R $\uparrow$}
    & \multicolumn{1}{c}{F $\uparrow$} & \multicolumn{1}{c}{R $\downarrow$}
    & \multicolumn{1}{c}{F $\downarrow$} & \multicolumn{1}{c|}{R $\uparrow$} & \multicolumn{1}{c}{F $\downarrow$} & \multicolumn{1}{c}{R $\uparrow$}
    & \multicolumn{1}{c}{F $\uparrow$} & \multicolumn{1}{c}{R $\downarrow$}
    & \multicolumn{1}{c}{F $\downarrow$} & \multicolumn{1}{c}{R $\uparrow$} \\
    \midrule
    Original & 21.59 & 21.83 & 16.28 & 14.87 & 0.88 & 0.88 & 23.74 & 24.06 & 21.80 & 18.17 & 0.78 & 0.85 & 16.90 & 19.65 & 82.12 & 81.51 & 0.89 & 0.91 \\
    \midrule
    Max Loss & 15.42 & 16.55 & 129.54 & 87.13 & 0.72 & 0.72 & 19.20 & 21.23 & 23.52 & 43.88 & 0.77 & 0.75 & 17.27 & 18.10 & 95.93 & 108.70 & 0.83 & 0.79 \\
    Retain Label & 20.74 & 14.14 & 90.62 & 103.72 & 0.71 & 0.73 & 14.44 & 19.24 & \underline{106.01} & 46.25 & \underline{0.47} & 0.75 & 17.02 & \textbf{19.08} & 86.10 & \textbf{89.18} & 0.87 & \textbf{0.83} \\
    Noisy Label & 15.38 & \textbf{17.97} & 135.47 & \textbf{63.89} & 0.71 & \textbf{0.77} & 15.95 & 20.63 & 93.55 & 47.03 & 0.49 & 0.74 & 17.15 & 18.36 & 125.99 & 121.55 & 0.72 & 0.76 \\
    Composite Loss & \underline{13.96} & 15.71 & \underline{149.78} & 74.14 & \underline{0.70} & 0.72 & \underline{14.34} & \underline{21.60} & 103.17 & \underline{37.92} & 0.48 & \underline{0.77} & \underline{14.33} & 17.80 & \underline{149.22} & 98.82 & \underline{0.64} & 0.80 \\
    \textbf{Ours} & \textbf{12.33} & \underline{17.47} & \textbf{154.60} & \underline{68.453} & \textbf{0.69} & \underline{0.75} & \textbf{13.23 }& \textbf{22.55} & \textbf{139.21} & \textbf{26.39} & \textbf{0.46} & \textbf{0.82} & \textbf{11.84} & \underline{18.47} & \textbf{165.05} & \underline{95.42} & \textbf{0.55} & \underline{0.81} \\
    \bottomrule
  \end{tabular}
  }
\end{table}

\begin{figure}
  \centering
  \includegraphics[width=\textwidth]{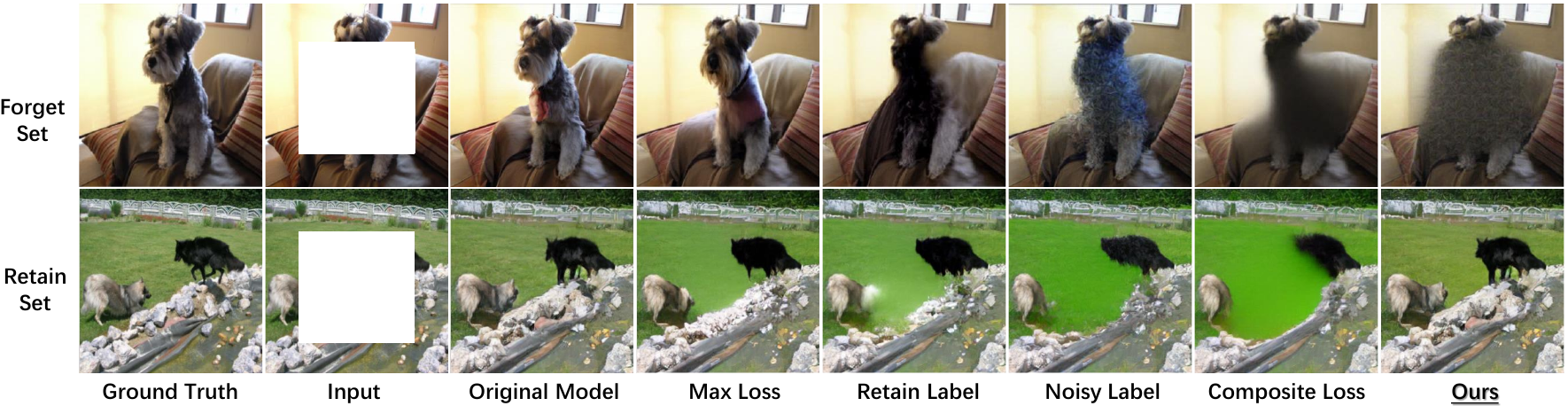}
  \caption{Generated images of cropping 50\% at the center of the image on VQ-GAN.  
  %
  %
  From left to right, the images generated by baselines are presented. 
  Our method results in the highest degree of unlearning completeness while maintaining a minimal reduction in model utility.}
  \label{fig:baseline}
\end{figure}

\begin{figure}
  \centering
  \includegraphics[width=\textwidth]{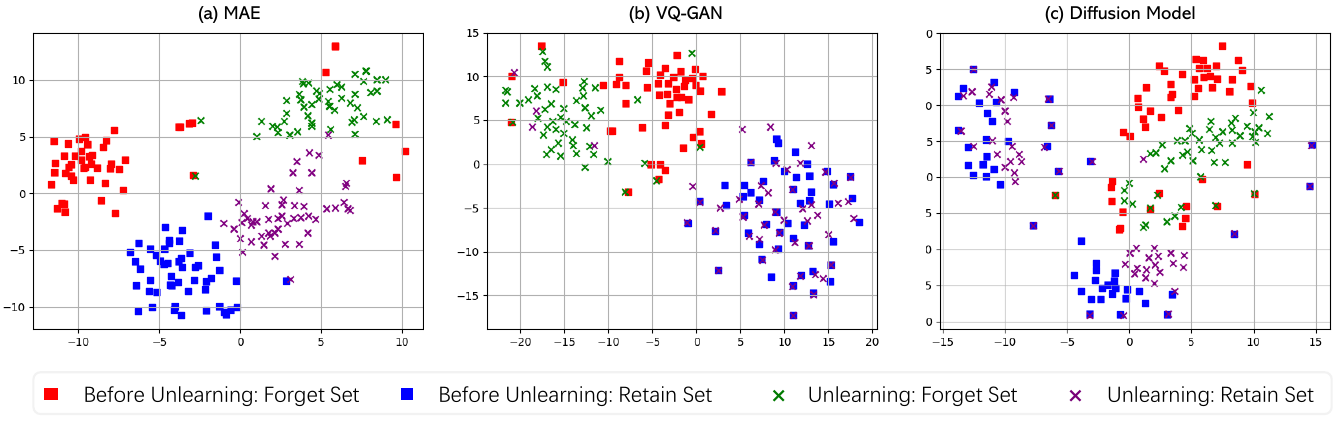}
  \caption{T-SNE analysis between images generated by our method and ground truth images.}
  %
  \label{fig:t-sne}
\end{figure}

\subsection{Controllable Unlearning} \label{sec:5.3}

We also evaluate the controllability of our method which provides a set of solutions for varied user expectations.
First, we obtain two boundary points of unlearning, thereby establishing the valid range of values for $\varepsilon$. 
We linearly increase the value of $\varepsilon$ within this range, adding 25\% of the range interval each time, to obtain optimum solutions corresponding to different $\varepsilon$ values. 
We provide some generated images corresponding to these solutions in Figure \ref{fig:framework}. 
Due to the space limit, please refer to Appendix \ref{sec-more-control} for more examples.
For results of more fine-grained control (i.e., smaller increments of the linear increase of $\varepsilon$), please refer to Appendix \ref{fine-grained-control}.

We verify the unlearned models at different $\varepsilon$ values, and report results in Table~\ref{tab:unlearn-control}. 
As $\varepsilon$ increases, we observe a trade-off: the unlearning completeness decreases, while the generated images' performance on the forget set progressively improves, and, simultaneously, the performance on the retain set also improves.
This observation clearly demonstrates the controllability of our proposed method, which can cater to varied user expectations. 
Please refer to Appendix \ref{sec-t-sne} for additional results of the generated images and T-SNE analysis, which corroborates the above numerical results.

\begin{table}[t]
  \centering
  \caption{Results of center cropping 50\% of the images under different unlearning completeness. 
  ``Highest'' and ``Lowest'' respectively represent the two boundary points of unlearning identified in Phase I. $\varepsilon$ is a coefficient used to control the unlearning completeness in Phase II.}
  \label{tab:unlearn-control}
  \resizebox{\linewidth}{!}{
  \begin{tabular}{c|cccccc|cccccc|cccccc}
    \toprule
    & \multicolumn{6}{c|}{MAE} & \multicolumn{6}{c|}{VQ-GAN} & \multicolumn{6}{c}{Diffusion Models} \\
    \cmidrule(lr){2-7} \cmidrule(lr){8-13} \cmidrule(lr){14-19}
    & \multicolumn{2}{c}{IS} & \multicolumn{2}{c}{FID} & \multicolumn{2}{c|}{CLIP}
    & \multicolumn{2}{c}{IS} & \multicolumn{2}{c}{FID} & \multicolumn{2}{c|}{CLIP}
    & \multicolumn{2}{c}{IS} & \multicolumn{2}{c}{FID} & \multicolumn{2}{c}{CLIP} \\
    \cmidrule(lr){2-7} \cmidrule(lr){8-13} \cmidrule(lr){14-19}
    & \multicolumn{1}{c}{F $\downarrow$} & \multicolumn{1}{c}{R $\uparrow$}
    & \multicolumn{1}{c}{F $\uparrow$} & \multicolumn{1}{c}{R $\downarrow$}
    & \multicolumn{1}{c}{F $\downarrow$} & \multicolumn{1}{c|}{R $\uparrow$}
    & \multicolumn{1}{c}{F $\downarrow$} & \multicolumn{1}{c}{R $\uparrow$}
    & \multicolumn{1}{c}{F $\uparrow$} & \multicolumn{1}{c}{R $\downarrow$}
    & \multicolumn{1}{c}{F $\downarrow$} & \multicolumn{1}{c|}{R $\uparrow$} & \multicolumn{1}{c}{F $\downarrow$} & \multicolumn{1}{c}{R $\uparrow$}
    & \multicolumn{1}{c}{F $\uparrow$} & \multicolumn{1}{c}{R $\downarrow$}
    & \multicolumn{1}{c}{F $\downarrow$} & \multicolumn{1}{c}{R $\uparrow$} \\
    \midrule
    Original & 21.59 & 21.83 & 16.28 & 14.87 & 0.88 & 0.88 & 23.74 & 24.06 & 21.80 & 18.17 & 0.78 & 0.85 & 16.90 & 19.65 & 82.12 & 81.51 & 0.89 & 0.91 \\
    \midrule
    Highest & 12.33 & 17.47 & 154.60 & 68.453 & 0.69 & 0.75 & 13.23 & 22.55 & 139.21 & 26.39 & 0.46 & 0.82 & 11.84 & 18.47 & 165.05 & 95.42 & 0.55 & 0.81 \\
    $\varepsilon$-25\% & 17.93 & 20.55 & 85.36 & 59.09 & 0.74 & 0.77 & 14.14 & 22.65 & 130.71 & 24.57 & 0.46 & 0.82 & 15.12 & 19.27 & 137.95 & 84.21 & 0.60 & 0.81 \\
    $\varepsilon$-50\% & 19.47 & 21.42 & 57.81 & 50.99 & 0.77 & 0.79 & 14.60 & 22.25 & 123.32 & 22.65 & 0.47 & 0.83 & 15.92 & 18.70 & 118.76 & 71.43 & 0.66 & 0.83 \\
    $\varepsilon$-75\% & 20.68 & 22.87 & 42.51 & 31.80 & 0.80 & 0.82 & 15.20 & 22.53 & 116.59 & 20.63 & 0.47 & 0.84 & 16.33 & 19.53 & 104.21 & 63.62 & 0.73 & 0.83 \\
    Lowest & 21.23 & 22.92 & 31.28 & 25.83 & 0.82 & 0.84 & 15.77 & 22.75 & 109.28 & 20.26 & 0.48 & 0.84 & 16.36 & 20.78 & 90.03 & 52.96 & 0.77 & 0.84 \\
    \bottomrule
  \end{tabular}
  }
\end{table}

\subsection{Unlearning Efficiency} \label{sec:5.4}

To enhance the efficiency of our controllable unlearning framework, we modify the selections of control function $\psi(\theta)$ during various phases. 
Specifically, we empirically examine the convergence under these conditions to assess the framework's unlearning performance of efficiency.
\textit{In Phase I}, with the control function satisfying $\psi(\theta) = \alpha {\| \nabla f_1(\theta) \|}^{\delta}$, we manipulate the value of the exponent $\delta$ to change the control function. 
Additionally, we verify the changes in the convergence rates of $f_1(\theta)$ and $f_2(\theta)$ under four different $\delta$ values across three models, with results shown in Appendix \ref{sec-efficiency}. 
It is evident that 
$f_1(\theta)$ and $f_2(\theta)$ achieve an optimal balance in convergence rates when $\delta = 2$, and the overall rate of convergence is fastest.
\textit{In Phase II}, where the control function satisfies $\psi(\theta) = \beta (f_1(\theta)-\varepsilon)^{\delta}$, we test the changes in the convergence rates of $f_1(\theta)$ and $f_2(\theta)$ for two different $\delta$ values on three models. 
To stabilize the optimization process, we scale the form of the control function (i.e., $\psi(\theta)=\beta(f_1(\theta)-\varepsilon)^{\delta} {\| \nabla f_1(\theta) \|}^{2}$), selecting two different $\delta$ values, with results presented in Appendix \ref{sec-efficiency}. 
It can be observed that at $\delta = 1$ 
the overall rate of convergence was optimized.

\section{Conclusion}
In this paper, we propose a controllable unlearning framework for I2I generative models to overcome the limitation of the existing method's incapability to fulfill varied user expectations.
Our approach allows for a controllable trade-off between unlearning completeness and model utility by introducing a control coefficient $\varepsilon$ to control the degrees of unlearning completeness. 
We reformulate unlearning as a $\varepsilon$-constrained optimization problem and solve it with a gradient-based method to find two boundary points that guide the valid range for $\varepsilon$.
Within this range, every chosen value of $\varepsilon$ will lead to a Pareto optimal solution,  addressing the existing method's issue of lacking theoretical guarantee.
Extensive experiments on two large datasets (i.e., ImageNet-1K and Places-365) across three mainstream I2I models (i.e., MAE, VQ-GAN, diffusion model) demonstrate significant advantages of our method over the SOTA methods with higher unlearning efficiency, and a controllable balance between the unlearning completeness and model utility.


\section*{Acknowledgments}
This work was supported in part by the National Natural Science Foundation of China (No.~62402148) and Ant Group.

\appendix
\newpage

\section{Broader Impacts and Limitations}\label{sec:limit}
The abundance of training data not only enhances the performance of generative models but also introduces issues with privacy, unfairness, and bias. 
Our proposed controllable unlearning framework offers a viable solution to these issues. 
Our proposed framework is not limited to unlearning in I2I generation models but can be easily extended to other types of generative models, including text-to-image and text-to-text models. 
However, the unlearning framework presented herein has certain limitations.
Note that Propositions \ref{prop:1} and \ref{prop:2} in Section~\ref{section:met} assume the convexity of the objective function and the feasible set. 
This assumption is essential to guarantee that the yielded solutions are Pareto optimal. 
In cases where the objective function and the feasible set are non-convex, the solutions obtained from solving Eq. (\ref{eq:6}) can only be guaranteed to be weakly Pareto optimal~\citep{miettinen1999nonlinear} or Pareto stability~\citep{chen2024three}.


%

\section{Discussion on the Objective of Unlearning}

Describing the unlearning target as inpainting an image using only background content is feasible to some extent, such as concept unlearning~\citep{wu2024unlearning}. For instance, if we aim to protect privacy by unlearning parts of an image generation model that contain personal information (i.e., an abstract concept), we can first identify the region of the image containing such information, then simply mask this region, and subsequently generate a new image through inpainting, ensuring that the model’s output aligns with the inpainted new image. However, this approach has two issues:

\begin{itemize}[leftmargin=*] \setlength{\itemsep}{-\itemsep}
    \item Firstly, it must be ensured that the new image generated through inpainting does not contain the information that needs to be forgotten. We believe this can be accomplished by incorporating an additional adversarial discriminator using GAN training strategies or by employing reinforcement strategies.
    \item Secondly, aligning the model's output with the inpainted new image merely confuses the knowledge learned by the model, increasing uncertainty during generation, which constitutes a superficial form of unlearning. However, based on our experimental experience, if the goal is merely to erase the influence of certain samples on the model, directly aligning with Gaussian noise may yield a more pronounced unlearning effect.
\end{itemize}

\section{Theoretical Validation}\label{sec:theory}

\subsection{Proof of Equivalence} \label{pro:eq}

Given the original problem
\begin{gather} 
    \min_{\theta \in \mathbb{R}^d} f_2(\theta) 
    \quad \text{s.t.} \quad 
    f_1(\theta) \leq \varepsilon,
    \label{eq:pro-ori}
\end{gather}
which is a constrained nonlinear programming problem. 
To solve it, we formulate its Lagrangian equation:
\begin{equation}
    \mathcal{L}(\theta,\lambda)=f_2(\theta)+\lambda(f_1(\theta)-\varepsilon).
    \label{eq:pro-lag}
\end{equation}
Further, we derive the KKT conditions for Eq.\ref{eq:pro-lag}:
\begin{equation}
\begin{aligned}
& \nabla_{\theta} \mathcal{L}(\theta^*,\lambda^*) = \nabla f_2(\theta^*) + \lambda^* \nabla [f_1(\theta^*) - \varepsilon] = 0 \\
& f_1(\theta^*) - \varepsilon \leq 0\\
& \lambda^* \geq 0 \\
& \lambda^* (f_1(\theta^*) - \varepsilon) = 0.
\end{aligned}
\label{eq:pro-kkt}
\end{equation}
The standard Newton's Method searches for the solution $\mathcal{L}_{\theta}(\theta,\lambda)=0$ by iterating the following equation:
\begin{align}
\left[\begin{array}{c}
\theta_{t+1} \\
\lambda_{t+1}
\end{array}\right] &= \left[\begin{array}{c}
\theta_{t} \\
\lambda_{t}
\end{array}\right] - \underbrace{\left[\begin{array}{ccc}
\nabla_{\theta}^2 \mathcal{L} & \nabla [f_1(\theta_t)-\varepsilon] \\
{\nabla [f_1(\theta_t)-\varepsilon]}^T & 0
\end{array}\right]^{-1}}_{\nabla^2 \mathcal{L}^{-1}} \underbrace{\left[\begin{array}{c}
\nabla_{\theta} \mathcal{L}(\theta_t, \lambda_t) \\
f_1(\theta_t)-\varepsilon
\end{array}\right]}_{\nabla \mathcal{L}},
\label{eq:pro-newton}
\end{align}
where $\nabla_{\theta}^2$ denotes the Hessian matrix.
However, the Newton step $g_t=(\nabla_{\theta}^2 \mathcal{L})^{-1} \nabla_{\theta} \mathcal{L}$ cannot be calculated directly and we also have other optimal condition in Eq.~\ref{eq:pro-kkt} introduced by the inequality constraints. 
Instead, the basic sequential quadratic programming algorithm defines an appropriate search direction $g_t$ at an iterate $(\theta_t,\lambda_t)$, as a solution to the quadratic programming subproblem.

Denoting by $g_t=(g_t^{\theta},g_t^{\lambda})$ the change in the variables at the current point $(\theta_{t},\lambda_{t})$, where $(g_t^{\theta},g_t^{\lambda})$ solve the Newton–KKT system~\citep{nocedal1999numerical}:
\begin{equation}
\begin{aligned}
& \nabla_{\theta} \mathcal{L}(\theta_t,\lambda_t) g_t^{\theta} + \nabla [f_1(\theta_t)-\varepsilon] g_t^{\lambda} = -\nabla_{\theta} \mathcal{L}(\theta_t,\lambda_t)\\
& f_1(\theta_t) - \varepsilon + \nabla [f_1(\theta_t)-\varepsilon] g_t^{\theta} \leq 0\\
& \lambda_{t}+g_t^{\lambda} \geq 0 \\
& (\lambda_{t}+g_t^{\lambda}) \Big(f_1(\theta^*) - \varepsilon + \nabla [f_1(\theta_t)-\varepsilon] g_t^{\theta}\Big) = 0.
\end{aligned}
\label{eq:pro-newton-kkt-k}
\end{equation}
Denoting by $\lambda_{t+1}=\lambda_{t}+g_t^{\lambda}$, we have
\begin{equation}
\begin{aligned}
& \nabla_{\theta} \mathcal{L}(\theta_t,\lambda_t) g_t^{\theta} + \nabla [f_1(\theta_t)-\varepsilon] \lambda_{t+1} = -\nabla f_2(\theta_t)\\
& f_1(\theta_t) - \varepsilon + \nabla [f_1(\theta_t)-\varepsilon] g_t^{\theta} \leq 0\\
& \lambda_{t+1} \geq 0 \\
& \lambda_{t+1} \Big(f_1(\theta^*) - \varepsilon + \nabla [f_1(\theta_t)-\varepsilon] g_t^{\theta}\Big) = 0.
\end{aligned}
\label{eq:pro-newton-kkt-k+1}
\end{equation}
It is easy to check that Eq.~\ref{eq:pro-newton-kkt-k+1}  is the optimality system of the following quadratic problem (QP)
\begin{equation}
\begin{aligned}
\min_{g} \quad 
& f_2\left(\theta_{t}\right)+\nabla f_2\left(\theta_{t}\right)^{\top} g+\frac{1}{2} g^{\top} \nabla_{\theta}^{2} \mathcal{L}\left(\theta_{t}, \lambda_{t}\right) g \\
& f_1(\theta_t)-\varepsilon + \nabla [f_1(\theta_t)-\varepsilon] g \leq 0.
\end{aligned}
\label{eq:sqp}
\end{equation}
Setting $g_t^{\theta}=g$, the KKT conditions for Eq.~\ref{eq:sqp} are consistent with the constraints specified in Eq.~\ref{eq:pro-newton-kkt-k+1}.
Further, according to Lemma~\ref{the:robinson}, the optimal solution for Eq.~\ref{eq:sqp}, when approaching the optimal solution of the original Problem (i.e., Eq.~\ref{eq:pro-ori}), satisfies the KKT conditions of Eq.~\ref{eq:pro-ori}. 
Considering that the models discussed in this paper are all deep neural networks, based on previous studies~\citep{welling2011bayesian,martens2016second,zhang2021understanding,zhang2022advancing,zhang2024introduction}, the initial guess Hessian matrix can be approximated as an identity matrix. 
Additionally, for consistency with the main text (i.e., $\theta_{t+1} \xleftarrow[]{} \theta_{t} - \mu_{t}g_t$), setting $g = - g_t$ yields the following form:
\begin{equation}
\begin{aligned}
\min_{g_t} \quad 
& \nabla f_2\left(\theta_{t}\right)^{\top} 
\nabla f_2\left(\theta_{t}\right) - 2 \nabla f_2 \left( \theta_{t}\right)^{\top} g_t + g_t^{\top} g_t \\
& \nabla f_1(\theta_t) g_t \geq f_1(\theta_t) - \varepsilon.
\end{aligned}
\label{eq:sqp-g}
\end{equation}
%

\begin{lemma} Theorem of~\citet{robinson1974perturbed}.
Suppose that $\theta^{*}$ is a local solution of Eq.~\ref{eq:pro-ori} at which the KKT conditions are satisfied for some $\lambda^{*}$.
Suppose, too, that the linear independence constraint qualification (LICQ), the strict complementarity condition, and the second-order sufficient conditions hold at $\left(\theta^{*}, \lambda^{*}\right)$. 
Then if $\left(\theta_{t}, \lambda_{t}\right)$ is sufficiently close to  $\left(\theta^{*}, \lambda^{*}\right)$, there is a local solution of the subproblem Eq.~\ref{eq:sqp} whose active set $\mathcal{A}_{t}$ is the same as the active set  $\mathcal{A}\left(\theta^{*}\right)$ of the nonlinear program Eq.~\ref{eq:pro-ori} at $\theta^{*}$.
\label{the:robinson}
\end{lemma}

\subsection{Basic Components} \label{theory.1}

Before exploring the proofs of Propositions \ref{prop:1} and \ref{prop:2}, it is essential to define some fundamental concepts and lemmas.
This references some works~\citep{boyd2004convex, pardalos2017non, gong2021automatic} mentioned earlier; for the sake of readability, we will reiterate them here.

\textbf{Penalty Function.}
An alternative method to evaluate the optimality of Algorithm \ref{alg:1} involves the $L_1$ penalty function given by:
\begin{equation} \label{eq:10}
P_{\xi}(\theta) = f_2(\theta) + \xi [f_1(\theta) - \varepsilon]_+,
\end{equation}
where $\xi > 0$ is a scaling coefficient.
The minima of Eq. (\ref{eq:10}) align with the solutions to Eq. (\ref{eq:6}) for sufficiently large values of $\xi$~\citep{nocedal1999numerical}.

\textbf{First-order KKT Condition and KKT Function.}
We revisit the first-order KKT condition~\citep{nocedal1999numerical} for the constrained optimization described in Eq. (\ref{eq:9}). 
Assume $\theta^{*}$ is a local optimum with continuously differentiable $f_1(\theta)$ and $f_2(\theta)$, and $\| \nabla f_1(\theta^{*}) \| \neq 0$. 
There exists a Lagrange multiplier $\omega^{*} \in [0, +\infty)$ such that:
\begin{equation}\label{eq:11}
    \nabla f_2(\theta^{*}) + \omega^{*} \nabla f_1(\theta^{*}) = 0,
    \quad
    f_1(\theta^{*}) \leq \varepsilon,
    \quad
    \omega^{*} (f_1(\theta^{*}) - \varepsilon) = 0.
\end{equation}
This setup highlights the importance of $\| \nabla f_1(\theta^*) \| \neq 0$ as a constraint qualification condition.

Utilizing Algorithm \ref{alg:1} for Eq. (\ref{eq:9}), and for $\eta \geq 0$, the KKT function~\citep{gong2021automatic} to verify the first-order KKT condition is defined as:
\begin{equation} \label{eq:12} 
    K_{\tau}(\theta_t, \eta_t) = {\| \nabla f_2(\theta_t) + \eta_t \nabla f_1(\theta_t) \|}^2 + \tau {[\psi(\theta_t)]}_{+} + \eta_t {[-\psi(\theta_t)]}_{+},
\end{equation}
where $\tau > 0$, and $[x]_+ = \max(x, 0)$. It is clear that $K{\tau}(\theta_t, \eta_t) \geq 0$ for all $\theta_t \in \mathbb{R}^d$ and $\eta_t \geq 0$, achieving $K_{\tau}(\theta_t, \eta_t) = 0$ iff $(\theta_t, \eta_t)$ satisfies the first-order KKT condition.

\textbf{Second-order KKT Condition and KKT Function}

In the context of Algorithm \ref{alg:1} applied to Eq. (\ref{eq:8}), we expect that $\| \nabla f_1(\theta_t) \|$ approaches zero, leading to $\eta_t$ potentially diverging to infinity. 
This scenario indicates a violation of the first-order KKT condition, potentially interpreted as $\eta^{*} = +\infty$.

While the first-order condition (Eq. (\ref{eq:11})) is inadequate, the second-order KKT conditions involving the Hessian $\nabla^2 f_1(\theta)$ are applicable~\citep{dempe2010optimality}. 
Consider the relaxed form of Eq. (\ref{eq:8}) as:
\begin{equation}\label{eq:13}
    \min_{\theta \in \mathbb{R}^{d}} f_2(\theta) 
    \quad \text { s.t. } \quad 
    \nabla f_1(\theta) = 0.
\end{equation}

If $\theta^{*}$ is a local minimum of Eq. (\ref{eq:8}), it coincides with a local minimum of Eq. (\ref{eq:13}). 
Assuming $f_2(\theta)$ and $\nabla f_1(\theta)$ are continuously differentiable, with the Hessian $\nabla^2 f_1(\theta)$ maintaining constant rank near $\theta^{*}$~\citep{janin1984directional}, the first-order KKT condition for Eq. (\ref{eq:13}) can be formulated. 
There exists a vector $\omega^{*} \in \mathbb{R}^d$ such that:
\begin{equation}\label{eq:14}
    \nabla f_2 \left( \theta^{*} \right) + \nabla^{2} f_1 \left( \theta^{*} \right) \omega^{*} = 0.
\end{equation}

This condition implies that $\nabla f_2(\theta^{*})$ is orthogonal to the null space of $\nabla^2 f_1(\theta^{*})$, defining the tangent space of the stationary manifold $\{ {\theta : \nabla f_1(\theta) = 0} \}$ for $f_1(\theta)$.

For verifying local optimality under the constraints of Eq. (\ref{eq:8}) where $\psi(\theta) \geq 0$, the KKT function is proposed as:
\begin{equation} \label{eq:15}
    K_{\tau} \left( \theta_{t}, \eta_{t} \right) = {\left \| \nabla f_2 \left( \theta_{t} \right) + \eta_{t} \nabla f_1 \left( \theta_{t} \right) \right \|}^{2} + \tau \psi \left( \theta_{t} \right),
\end{equation}
where $\psi(\theta_t) = 0$ asserts that $\theta_t$ is stationary for $f_1(\theta)$, and $\| \nabla f_2(\theta_t) + \eta_t \nabla f_1(\theta_t) \| = 0$ signifies local optimality with respect to $f_2(\theta)$, aligning with the KKT condition for the relaxed problem $\min_{\theta} \{ { f_2(\theta) \text{ s.t. } f_1(\theta) \leq \varepsilon_t } \}$, with $\varepsilon_t = f_1(\theta_t)$.

In the analysis of Algorithm \ref{alg:1}, a fundamental theorem concerning the behavior of the penalty function $P_{\xi}(\theta)$ and the KKT function $K_{\tau}(\theta, \eta)$, given in Eqs. (\ref{eq:12}) and (\ref{eq:15}), is essential for understanding the algorithm's convergence and feasibility characteristics. 
This lemma is stated as follows:
\begin{lemma}\label{lem:2}
    Theorem 3.2 of~\citet{gong2021automatic}.
    Assume Assumption \ref{asp:1} holds, for any $\xi \geq 0$, we have 
    \begin{equation} \label{eq:16} 
    \frac{d}{d_t} P_{\xi}(\theta_t) \leq -K_{\xi-\eta_t}(\theta_t, \eta_t), \forall t \in [0, +\infty).
    \end{equation}
    This equation indicates that $P_{\xi}(\theta_t)$ is non-increasing w.r.t. time $t$ provided that $K_{\xi - \eta_t}(\theta_t, \eta_t) \geq 0$. 
    This condition is satisfied if $\xi$ is sufficiently large such that $\xi - \eta_t \geq 0$, or when the constraint is met, i.e., $f_1(\theta_t) \leq \varepsilon$, ensuring $\left[\psi(\theta_t)\right]_+ = 0$.

\end{lemma}

\subsection{Proof of Proposition \ref{prop:1}}\label{theory.2}

\begin{proof}[Proof of Proposition \ref{prop:1}]

    As $\theta_t$ converges to $\theta^{*}$ for $t \to +\infty$ and given the continuity of $\psi(\theta)$ and $\nabla f_1(\theta)$, it follows that $\lim_{t \rightarrow +\infty} \psi \left( \theta_{t} \right) = \psi \left( \theta^{*} \right)$, and $\lim_{t \rightarrow +\infty} \left \| \nabla f_1 \left( \theta_{t} \right) \right \| = \left \| \nabla f_1 \left( \theta^{*} \right) \right \|$.

    Let $f_1^{*} = \inf_{\theta \in \mathbb{R}^d} f_1(\theta)$ and $f_2^{*} = \inf_{\theta \in \mathbb{R}^d} f_2(\theta)$.
    Since $\psi(\theta) \geq 0$, by substituting Eq. (\ref{eq:15}) into Eq. (\ref{eq:16}), we have for any $\xi \geq 0$, 
    $$
    \frac{d}{dt} \left( f_2 \left( \theta_{t} \right) + \xi \left [ f_1 \left( \theta_{t} \right) - \varepsilon \right ]_{+} \right) \leq - {\left \| \nabla f_2 \left( \theta_{t} \right ) + \eta_{t} \nabla f_1 \left( \theta_{t} \right) \right \|}^{2} - \left( \xi - \eta_{t} \right) \psi \left( \theta_{t} \right), \quad \forall t \in[0, +\infty).
    $$
    Integrating both sides from $0$ to $t$ yields:
    \begin{align}
        \int_{0}^{t} \left( {\left \| \nabla f_2 \left( \theta_{s} \right) + \eta_{s} \nabla f_1 \left( \theta_{s} \right) \right \|}^{2} + \left( \xi - \eta_{s} \right) \psi \left( \theta_{s} \right) \right) ds 
        & \leq \left( f_2 \left( \theta_{0} \right) - f_2^{*} \right) + \xi \left( f_1 \left( \theta_{0} \right) - f_1^{*} \right) \label{eq:18}.
    \end{align}
    
    Given $\psi(\theta) \geq 0$ and $\varepsilon = f_1^{*}$, Eq. (\ref{eq:18}) establishes that $\int_{0}^{+\infty} \psi \left( \theta_{t} \right) dt \leq f_1 \left( \theta_{0} \right) - f_1^{*} < +\infty$. 
    Consequently, $\lim_{t \rightarrow +\infty} \psi \left( \theta_{t} \right) = \psi \left( \theta^{*} \right) = 0$. 
    
    Given $\theta^{*}$ as a limit point of $\left \{ \theta_{t} \right\}$, there exists an increasing sequence  $\left \{t_{n}: n=1,2, \cdots\right \}$ such that $t_{n} \rightarrow +\infty$ and $\theta_{t_{n}} \rightarrow \theta^{*}$ as $n \rightarrow +\infty$. 
    The continuity of $\psi(\theta)$ and $\nabla f_1(\theta)$ ensures $\lim_{n \rightarrow +\infty} \psi \left( \theta_{t_{n}} \right) = \psi \left( \theta^{*} \right) = 0$, and $\lim_{n \rightarrow +\infty} \left \| \nabla f_1 \left( \theta_{t_{n}} \right) \right \| = \left \| \nabla f_1 \left( \theta^{*} \right) \right \|$.
    
    Since $\psi \left( \theta^{*} \right) = 0$ and the sign condition of $\psi(\theta)$, it implies $\operatorname{sign} \left( f_1 \left( \theta^{*} \right) - f_1^{*} \right) = \operatorname{sign} \left( \psi \left( \theta^{*} \right) \right) = 0$. 
    Therefore $f_1 \left( \theta^{*} \right) = f_1^{*}$ and $\theta^{*}$ is a minimum point of $f_1(\theta)$. 
    This gives  $ \lim_{n \rightarrow +\infty} \left \| \nabla f_1 \left( \theta_{t_{n}} \right) \right \| = \left \| \nabla f_1 \left( \theta^{*} \right) \right \| = 0$. 
    
    Given $\lim_{t \rightarrow +\infty} g_t = 0$, we deduce that $\lim_{t \rightarrow +\infty} \left \| \nabla f_2 \left( \theta_{t} \right) + \eta_{t} \nabla f_1 \left( \theta_{t} \right) \right \| = \lim_{t \rightarrow +\infty} \| g_t \| = 0$.
    Additionally, employing $\psi(\theta) \geq 0$, Eq.~(\ref{eq:15}) implies $\lim_{t \to +\infty} K_{\tau}(\theta_t, \eta_t) = 0$ for some $\tau > 0$.
    
    Combining $\lim_{t \rightarrow +\infty} \left \| \nabla f_2 \left( \theta_{t} \right) + \eta_{t} \nabla f_1 \left( \theta_{t} \right) \right \| = 0$ and $\nabla f_1 \left( \theta^{*} \right) = \lim_{n \rightarrow +\infty} \nabla f_1 \left( \theta_{t_{n}} \right) = 0$, we can derive 
    \begin{align*}
        \left \| \nabla f_2 \left( \theta_{t} \right) + \eta_{t} \nabla f_1 \left( \theta_{t} \right) \right \|
        & = \left \| \nabla f_2 \left( \theta_{t} \right) + \eta_{t} \left( \nabla f_1 \left( \theta_{t} \right) - \nabla f_1 \left( \theta^{*} \right) \right) \right \|  = \left \| \nabla f_2 \left( \theta_{t} \right) + \nabla^{2} f_1 \left( \theta_{t}^{\prime} \right) \omega_{t}^{\prime} \right \|.\\
    \end{align*}
    where $\theta_{t}^{\prime}$ is a convex combination of  $\theta_{t}$ and $\theta^{*}$, and we defined  $\omega_{t}^{\prime} = \eta_{t} \left( \theta_{t} -\theta^{*} \right)$.
    
    Define $\omega_{t} = \left( \nabla^{2} f_1 \left( \theta_{t}^{\prime} \right) \right)^{+} \nabla f_2 \left( \theta_{t} \right)$, where $\left( \nabla^{2} f_1 \left( \theta_{t}^{\prime} \right )\right)^{+}$ denotes the Moore-Penrose pseudo-inverse of matrix $\nabla^{2} f_1 \left( \theta_{t}^{\prime}\right)$, which satisfies that
    $$
    \omega_{t} = \underset{\omega \in \mathbb{R}^{d}}{\arg \min } \left \{ \| \omega \| 
    \quad \text { s.t. } \quad
    \omega \in \underset{w} {\arg \min } \left \| \nabla f_2 \left( \theta_{t} \right) + \nabla^{2} f_1 \left( \theta_{t}^{\prime} \right) \omega \right \| \right \}.
    $$
    It follows that
    $$
    \left \| \nabla f_2 \left( \theta_{t} \right) + \nabla^{2} f_1 \left( \theta_{t}^{\prime}\right) \omega_{t} \right \| \leq \left \| \nabla f_2 \left( \theta_{t} \right) + \nabla^{2} f_1 \left( \theta_{t} \right) \omega_{t}^{\prime} \right \| = \left \| \nabla f_2 \left( \theta_{t} \right) + \eta_{t} \nabla f_1 \left( \theta_{t} \right) \right \|.
    $$
    Given $\left \| \nabla f_2 \left( \theta_{t_{n}} \right) + \eta_{t_{n}} \nabla f_1 \left( \theta_{t_{n}} \right) \right \| \rightarrow 0$ as $n \rightarrow +\infty$, we have  $\left \| \nabla f_2 \left( \theta_{t_{n}} \right) + \nabla^{2} f_1 \left( \theta_{t_{n}}^{\prime}\right) \omega_{t_{n}} \right \| \rightarrow 0$.
    Assuming $\theta_{t_{n}} \rightarrow \theta^{*}$ and  $\theta_{t_{n}}^{\prime} \rightarrow \theta^{*}$ as $n \rightarrow +\infty$, and by the constant rank condition and relevant corollary of~\citet{stewart1977perturbation} (rephrased in Lemma \ref{lem:4}), we deduce $\left( \nabla^{2} f_1 \left( \theta_{t_{n}}^{\prime} \right) \right)^{+} \rightarrow \left( \nabla^{2} f_1 \left( \theta^{*} \right) \right)^{+}$ and hence $\omega_{t_{n}} \rightarrow \omega^{*}$ as $n \rightarrow +\infty$, where  $\omega^{*} := \left( \nabla^{2} f_1 \left( \theta^{*} \right) \right)^{+} \nabla f_2 \left( \theta^{*} \right)$.
    Thus, $\left \| \nabla f_2 \left( \theta_{t}\right)+\nabla^{2} f_1 \left( \theta_{t}^{\prime}\right) \omega_{t} \right \| \rightarrow \left \| \nabla f_2 \left( \theta^{*} \right) + \nabla^{2} f_1 \left( \theta^{*} \right) \omega^{*} \right \|$, leading to $\left \| \nabla f_2 \left( \theta^{*} \right) + \nabla^{2} f_1 \left( \theta^{*} \right) \omega^{*} \right \| = 0$, which implies that $\theta^{*}$ satisfies the second-order KKT conditions for Eq. (\ref{eq:14}). 
    
    Given the convexity of $f_1(\theta)$ and $f_2(\theta)$ with respect to $\theta$, then $f_2(\theta^{*})$ is the minimum in the feasible set $\Omega = \{\theta : f_1(\theta) \leq \varepsilon\}$, without any $\hat{\theta} \in \Omega$ such that $f_2(\hat{\theta}) < f_2(\theta^{*})$. 
    Consequently, $\theta^{*}$ is a solution to Eq. (\ref{eq:8}). 
    According to~\citet{chankong1982characterization}, this solution is unique without further checking, as affirmed by theorem of~\citet{miettinen1999nonlinear} (rephrased in Lemma \ref{lem:5}), $\theta^{*}$ is Pareto optimal. 

    Therefore, combining the conclusions, $\theta^{*}$ is established as both the minimum of $f_1(\theta)$ and Pareto optimal, confirming its status as Pareto optimal for complete unlearning.

\end{proof}

\begin{lemma}\label{lem:4} 
    Corollary 3.5 of~\citet{stewart1977perturbation}.
    Let $\{ A_t \}$ be a sequence of matrices converging to $A_{*}$ as $t \to +\infty$. 
    The condition $\lim_{t \to +\infty} A_t^+ = A_{*}^+$ is equivalent to the condition that $rank(A_t) = rank(A_{*})$ for all $t$ sufficiently large.
\end{lemma}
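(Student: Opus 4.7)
The plan is to reduce the statement to a perturbation analysis of the singular value decomposition (SVD). For each $t$ write $A_t = U_t \Sigma_t V_t^{\top}$ and $A_{*} = U_{*} \Sigma_{*} V_{*}^{\top}$, so that $A_t^{+} = V_t \Sigma_t^{+} U_t^{\top}$, where $\Sigma_t^{+}$ inverts the positive singular values and keeps zeros in place. The entire question then reduces to understanding how the singular values and the orthogonal factors behave under $A_t \to A_{*}$, and these two behaviors are separated naturally by the rank hypothesis.

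For the direction ``$rank(A_t) = rank(A_{*})$ eventually implies $A_t^{+} \to A_{*}^{+}$'', I would start from Weyl's perturbation inequality, $|\sigma_i(A_t) - \sigma_i(A_{*})| \leq \|A_t - A_{*}\|$, giving $\sigma_i(A_t) \to \sigma_i(A_{*})$ for every $i$. If $rank(A_{*}) = r$, then under the rank assumption $\sigma_1(A_t), \dots, \sigma_r(A_t)$ stay bounded below by some $c > 0$ for large $t$, while $\sigma_{r+1}(A_t), \dots$ are identically zero. Hence $\Sigma_t^{+} \to \Sigma_{*}^{+}$ in norm. Because individual singular vectors are discontinuous in the presence of repeated singular values, I would pass from $\Sigma_t^{+}$ to $A_t^{+}$ via the spectral projectors onto the leading $r$-dimensional left and right singular subspaces, which, under the uniform spectral gap between the positive cluster of singular values and the zero eigenvalue, depend continuously on $A_t$ (Davis--Kahan / sin-$\Theta$ theorem). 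This routes around the non-uniqueness of $U_t, V_t$.

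For the converse, I would argue by contradiction. Suppose the equal-rank condition fails. Since rank is lower semicontinuous, $rank(A_{*}) \leq \liminf_t rank(A_t)$, so failure must occur through a subsequence with $rank(A_t) > r = rank(A_{*})$. Along such a subsequence $\sigma_{r+1}(A_t) > 0$, yet Weyl forces $\sigma_{r+1}(A_t) \to \sigma_{r+1}(A_{*}) = 0$. Therefore $\|A_t^{+}\| \geq 1/\sigma_{r+1}(A_t) \to +\infty$, contradicting convergence of $A_t^{+}$ to the finite-norm matrix $A_{*}^{+}$.

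The main obstacle is exactly the non-uniqueness and discontinuity of the SVD factors when $A_{*}$ has repeated singular values or, more importantly here, sits precisely on the rank-$r$ stratum; the correct device is to never touch individual singular vectors but to work with the orthogonal projectors onto the top-$r$ singular subspaces, whose continuity is guaranteed by the strict separation $\sigma_r(A_{*}) > 0 = \sigma_{r+1}(A_{*})$ that the rank-preservation hypothesis supplies. Everything else is a direct Weyl-type estimate.
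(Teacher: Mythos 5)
Your argument is correct, but note that the paper itself does not prove this lemma at all: it is quoted verbatim as Corollary~3.5 of Stewart (1977), whose proof runs through explicit norm perturbation bounds for pseudo-inverses (an inequality of the form $\| B^{+}-A^{+}\| \le c\,\|B^{+}\|\,\|A^{+}\|\,\|B-A\|$ under rank preservation, and a lower bound $\|B^{+}-A^{+}\| \ge 1/\|B-A\|$ when the rank changes, which makes the converse immediate). Your route is genuinely different and self-contained: Weyl's inequality gives convergence of singular values, rank preservation turns this into a uniform gap $\sigma_r(A_t)\ge c>0$ with $\sigma_{r+1}(A_t)=0$, and the converse follows from lower semicontinuity of rank plus $\|A_t^{+}\|\ge 1/\sigma_{r+1}(A_t)\to\infty$ — that half is airtight and arguably more transparent than Stewart's. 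The one step you should make concrete in the forward direction is how projector continuity yields $A_t^{+}\to A_*^{+}$, since convergence of $\Sigma_t^{+}$ and of the singular subspace projectors does not by itself assemble the pseudo-inverse; the clean finish is the identity $A_t^{+}=A_t^{\top}(A_tA_t^{\top})^{+}$ together with $(A_tA_t^{\top})^{+}=h(A_tA_t^{\top})$ for a fixed continuous $h$ equal to $1/x$ on $[c^{2},M]$ and $0$ near $0$ (legitimate precisely because, under rank equality, the spectrum of $A_tA_t^{\top}$ avoids $(0,c^{2})$ for large $t$), after which continuity of the functional calculus does the work and Davis--Kahan is not even needed. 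In exchange for this extra care, your approach avoids Stewart's quantitative machinery entirely; Stewart's approach buys explicit error bounds that the qualitative statement does not require.
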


\begin{lemma}\label{lem:5} 
    Theorem 3.2.4 of~\citet{miettinen1999nonlinear}.
    A point $\theta^{*} \in \Omega$ is Pareto optimal if it is a unique solution of $\varepsilon$-constraint problem (Eq. (\ref{eq:6})) for any given upper bound vector $\boldsymbol{\varepsilon} = \left( \varepsilon_{1}, \ldots, \varepsilon_{\ell-1}, \varepsilon_{\ell+1}, \ldots, \varepsilon_{t}\right)^{T}$.
\end{lemma}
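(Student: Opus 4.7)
The plan is a direct proof by contradiction that mirrors the classical argument justifying the $\varepsilon$-constraint scalarization. Fix the index $\ell$ that is being minimized and the upper bounds $\boldsymbol{\varepsilon} = (\varepsilon_{1},\ldots,\varepsilon_{\ell-1},\varepsilon_{\ell+1},\ldots,\varepsilon_{k})^{T}$, so that the $\varepsilon$-constraint problem reads $\min_{\theta} f_{\ell}(\theta)$ subject to $f_{i}(\theta) \leq \varepsilon_{i}$ for all $i \neq \ell$. Assume $\theta^{*}$ is the unique solution of this problem, and suppose, for contradiction, that $\theta^{*}$ is not Pareto optimal. By the definition of Pareto dominance recalled in the preliminaries, there would exist $\hat{\theta} \in \Omega$ with $\hat{\theta} \prec \theta^{*}$, meaning $f_{i}(\hat{\theta}) \leq f_{i}(\theta^{*})$ for every $i$ and $f_{j}(\hat{\theta}) < f_{j}(\theta^{*})$ for at least one index $j$.

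First I would verify that this putative dominator $\hat{\theta}$ is feasible for the $\varepsilon$-constraint problem. Since $\theta^{*}$ itself is a solution, it satisfies $f_{i}(\theta^{*}) \leq \varepsilon_{i}$ for every $i \neq \ell$; combining this with $f_{i}(\hat{\theta}) \leq f_{i}(\theta^{*})$ yields $f_{i}(\hat{\theta}) \leq \varepsilon_{i}$ for each $i \neq \ell$, so $\hat{\theta}$ lies in the feasible region.

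Next I would split on the location of the strict-inequality index $j$. If $j = \ell$, then $f_{\ell}(\hat{\theta}) < f_{\ell}(\theta^{*})$ together with the feasibility of $\hat{\theta}$ immediately contradicts the fact that $\theta^{*}$ minimizes $f_{\ell}$ over the feasible set. If instead $j \neq \ell$, then $f_{\ell}(\hat{\theta}) \leq f_{\ell}(\theta^{*})$, so $\hat{\theta}$ is also a minimizer; but $f_{j}(\hat{\theta}) < f_{j}(\theta^{*})$ forces $\hat{\theta} \neq \theta^{*}$ (since a function takes a single value at a single point), contradicting the hypothesis that $\theta^{*}$ is the \emph{unique} solution. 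Either way we reach a contradiction, so no dominator can exist and $\theta^{*}$ is Pareto optimal.

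The proof is essentially a one-page bookkeeping exercise, and the only subtle point is identifying which hypothesis supplies the contradiction in each branch. In the first branch optimality alone suffices, whereas in the second branch it is uniqueness, not optimality, that rules out $\hat{\theta}$. This is exactly why the theorem needs a \emph{unique} solution rather than merely a solution: dropping uniqueness would leave the $j \neq \ell$ branch untouched and collapse the conclusion to weak Pareto optimality, consistent with the caveat already flagged in Appendix~\ref{sec:limit}. No convexity, differentiability, or continuity of the $f_{i}$ is needed at any stage, which makes the lemma applicable to the general setting in which it is invoked in the proof of Proposition~\ref{prop:1}.
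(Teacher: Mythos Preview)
Your argument is correct and is precisely the classical proof of this result. Note, however, that the paper does not supply its own proof of Lemma~\ref{lem:5}: it is stated as a quotation of Theorem~3.2.4 in Miettinen's monograph and invoked as a black box inside the proof of Proposition~\ref{prop:1}. Your contradiction argument---showing that any dominator $\hat{\theta}$ is feasible for the $\varepsilon$-constraint problem and then splitting on whether the strict improvement occurs in the minimized coordinate $\ell$ (contradicting optimality) or in some other coordinate (contradicting uniqueness)---is exactly the proof Miettinen gives, so there is nothing to contrast. Your closing remark that uniqueness is what upgrades weak Pareto optimality to Pareto optimality is also on point and aligns with the caveat the paper records in Appendix~\ref{sec:limit}.
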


\subsection{Proof of Proposition \ref{prop:2}}\label{theory.3}

\begin{proof}[Proof of Proposition \ref{prop:2}]

    Since $\theta_t$ is stationary, $\dot{\theta}_t = - g_t = 0$, implying $\frac{d}{dt} P_{\xi}(\theta_t) = 0$ for all $\xi \geq 0$. 
    From Eq. (\ref{eq:16}), we have $\frac{d}{dt} P_{\xi}(\theta_t) \leq -K_{\xi - \eta_t}(\theta_t, \eta_t)$. Consequently, $K_{\xi - \eta_t}(\theta_t, \eta_t) \leq 0$ for all $\xi \geq \eta_t$. 
    Setting $\xi = \eta_t + \tau$, where $\tau \geq 0$, it follows that $K_{\tau}(\theta_t, \eta_t) = 0$. 
    This implies that $\theta^{*}$ satisfies the first-order KKT conditions for Eq. (\ref{eq:11}), i.e., there exists a Lagrange multiplier $\eta^{*} \in [0, +\infty)$ such that
    $$
    \nabla f_{2}\left(\theta^{*}\right)+\eta^{*} \nabla f_{1}\left(\theta^{*}\right)=0, \quad f_{1}\left(\theta^{*}\right) \leq \varepsilon, \quad \eta^{*}\left(f_{1}\left(\theta^{*}\right)-\varepsilon\right)=0.
    $$

    As affirmed by theorem of~\citet{miettinen1999nonlinear} (rephrased in Lemma \ref{lem:6}), $\theta_t$ is a Pareto optimal solution.
    
\end{proof}

\begin{lemma}\label{lem:6} 
    Theorem 3.1.8 of~\citet{miettinen1999nonlinear}.
    (Karush-Kuhn-Tucker sufficient condition for Pareto optimality) Let the objective and the constraint functions of problem Eq. (\ref{eq:9}) be convex and continuously differentiable at a decision vector $\theta^{*} \in \Omega$.
    A sufficient condition for $\theta^{*}$ to be Pareto optimal is that there exist multipliers $\boldsymbol{\mu}^{*} > \mathbf{0}$ and $\boldsymbol{\eta}^{*} > \mathbf{0}$ such that
    \begin{align*}
    & \text{(1)} \quad \boldsymbol{\mu}^{*} \nabla f_2 \left( \theta^{*} \right) + \boldsymbol{\eta}^{*} \nabla f_1 \left( \theta^{*} \right) = 0 \\
    & \text{(2)} \quad \boldsymbol{\eta}^{*} \left( f_1 \left( \theta^{*} \right) - \varepsilon \right) = 0.
\end{align*}
\end{lemma}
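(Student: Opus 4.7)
The plan is to prove Lemma \ref{lem:6} by a classical contradiction argument built on the first-order characterization of convex functions. Suppose, toward contradiction, that $\theta^{*}$ is \emph{not} Pareto optimal. By the definition in Section 3.2, there then exists a feasible $\hat{\theta} \in \Omega$ that dominates $\theta^{*}$, i.e., $f_{i}(\hat{\theta}) \leq f_{i}(\theta^{*})$ for $i \in \{1,2\}$ with strict inequality at some index $j$. The goal is to derive a contradiction with condition (1) of the hypothesis.

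Because $f_{1}$ and $f_{2}$ are convex and continuously differentiable at $\theta^{*}$, the gradient inequality yields $\nabla f_{i}(\theta^{*})^{\top}(\hat{\theta} - \theta^{*}) \leq f_{i}(\hat{\theta}) - f_{i}(\theta^{*}) \leq 0$, with strict inequality when $i = j$. Scaling these two inequalities by the strictly positive multipliers $\mu^{*}$ and $\eta^{*}$, respectively, and summing them gives
\begin{equation*}
\bigl(\mu^{*}\nabla f_{2}(\theta^{*}) + \eta^{*}\nabla f_{1}(\theta^{*})\bigr)^{\top}(\hat{\theta} - \theta^{*}) \leq \mu^{*}\bigl(f_{2}(\hat{\theta}) - f_{2}(\theta^{*})\bigr) + \eta^{*}\bigl(f_{1}(\hat{\theta}) - f_{1}(\theta^{*})\bigr) < 0,
\end{equation*}
where the strict inequality on the right follows from strict positivity of both multipliers combined with the strict dominance at index $j$. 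Condition (1), however, asserts precisely that $\mu^{*}\nabla f_{2}(\theta^{*}) + \eta^{*}\nabla f_{1}(\theta^{*}) = 0$, so the left-hand side is identically $0$, producing $0 < 0$---a contradiction. Hence no such dominator $\hat{\theta}$ can exist, and $\theta^{*}$ is Pareto optimal.

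The main obstacle, though elementary, is the need to ensure that the strict dominance survives the weighting-and-summation step. This is exactly why the hypothesis insists on \emph{strictly} positive $\mu^{*}$ and $\eta^{*}$: were either multiplier allowed to vanish, a dominator whose strict improvement lay in the objective paired with the zero multiplier could evade the contradiction, and one would only recover weak Pareto optimality. Condition (2), the complementary slackness relation $\eta^{*}(f_{1}(\theta^{*}) - \varepsilon) = 0$, does not enter the contradiction directly; it instead reconciles strict positivity of $\eta^{*}$ with primal feasibility $f_{1}(\theta^{*}) \leq \varepsilon$, forcing the constraint to be active whenever $\eta^{*} > 0$. No further lemmas from the paper or the literature are required beyond the convex subgradient inequality.
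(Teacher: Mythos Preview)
Your argument is correct and is precisely the classical proof one finds in Miettinen's book. Note, however, that the paper does not supply its own proof of Lemma~\ref{lem:6}: the result is quoted verbatim as Theorem~3.1.8 of Miettinen~(1999) and used as a black box inside the proof of Proposition~\ref{prop:2}. So there is no ``paper's proof'' to compare against; you have simply filled in the referenced textbook argument, which the paper omits. One cosmetic remark: the contradiction you obtain is literally $0 = \text{LHS} \leq \text{RHS} < 0$, i.e.\ $0 \leq (\text{strictly negative})$, rather than ``$0<0$'', but the logic is of course sound.
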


\section{Enhancing the Efficiency of Unlearning} \label{4-4}

To enhance the efficiency of unlearning, we investigate the influence of the control function $\psi(\theta)$ on convergence rates across different phases, as outlined in the lemma below:

\begin{lemma} \label{prop:3}
An extension based on Proposition 3.7 of~\citet{gong2021automatic}. Under Assumption \ref{asp:1}, with $f_2^{*}=\inf_{\theta \in \mathbb{R}^d} f_2(\theta) > -\infty$, then: For Phase I, if $\psi(\theta) = \alpha {\| \nabla f_1(\theta) \|}^{\delta}$ with $\alpha > 0$ and $\delta \geq 1$, the convergence rates of $f_1(\theta)$ and $f_2(\theta)$ are $O \left({1}/{t^{\frac{1}{\delta}}}\right)$ and $O \left({1}/{t^{\frac{1}{2}-\frac{1}{2\delta}}}\right)$, respectively. For Phase II, if $\psi(\theta) = \beta (f_1(\theta)-\varepsilon)^{\delta}$ with $\beta > 0$, $\delta = 2n + 1$, $n \in \mathbb{N}$, and $\sup_{t \in [0,+\infty)} \eta_t = \eta_{max} < +\infty$, the convergence rate of $[f_1(\theta)-\varepsilon]_{+}$ is $O \left( {{1}/{t^{\frac{1}{\delta}}}}\right)$.
\end{lemma}

Lemma~\ref{prop:3} demonstrates that the convergence rate depends on the exponent $\delta$ in $\psi(\theta)$, where higher values of $\delta$ result in a faster convergence rate of $f_1(\theta)$. 
However, excessively large $\delta$ can also lead to a slower convergence rate of $f_2(\theta)$ and instabilities in training.
To balance convergence rate and training stability, we explore various $\varepsilon$ in $\psi(\theta)$ in both phases with extensive empirical studies. The results can be found in Section \ref{sec:5.4}.

\section{More Details of Experiments} \label{sec-detail-experiment}

\subsection{Hyper-parameter of Experiments} \label{sec-detail-experiment-hyper}

\paragraph{MAE.} We set the learning rate to $10^{-4}$ with no weight decay. Both baselines and our method employ AdamW as the foundational optimizer with $\beta=(0.90,0.95)$, with the distinction being that our method necessitates some improvements on the basic optimizer. We set the input image resolution to 224×224 and batch size to 32. Simultaneously, we set the coefficient of $\psi(\theta)$ in Phase I to $\alpha=5$, and the coefficient of $\psi$ in Phase II to $\beta=5$, followed by training for 8 epochs. Overall, it takes an hour on an NVIDIA A40 (48G) server.

\paragraph{VQ-GAN.} We set the learning rate to $10^{-4}$ with no weight decay. Both baselines and our method employ AdamW as the foundational optimizer with $\beta=(0.90,0.95)$. Our method necessitates some improvements on the basic optimizer. We set the input image resolution to 256×256 and batch size to 16. Simultaneously, we set the coefficient of $\psi(\theta)$ in Phase I to $\alpha=10$, and the coefficient of $\psi(\theta)$ in Phase II to $\beta=10$, followed by training for 10 epochs. Overall, it takes two hours on an NVIDIA A40 (48G) server.

\paragraph{Diffusion model.} We set the learning rate to $10^{-5}$ with no weight decay. Both baselines and our method employ Adam as the foundational optimizer. Our method necessitates some improvements on the basic optimizer. We set the input image resolution to 256×256 and batch size to 16. Simultaneously, we set the coefficient of $\psi(\theta)$ in Phase I to $\alpha=1$, and the coefficient of $\psi(\theta)$ in Phase II to $\beta=1$, followed by training for 4 epochs. Overall, it takes twelve hours on an NVIDIA A40 (48G) server.

\subsection{Evaluation Metrics} \label{sec-detail-experiment-evaluate}

\paragraph{IS.} Following~\citep{li2024machine}, for ImageNet-1K, we directly use the Inception-v3 model checkpoint to calculate the IS score.
For Places-365, we use the Resnet-50 model checkpoint to calculate IS scores~\citep{zhou2017places}.

\paragraph{FID.} Regardless of whether it is ImageNet-1K or Places-365, we directly use the Inception-v3 model checkpoint to calculate the FID score.

\paragraph{CLIP.} Following~\citep{li2024machine}, whether it is for ImageNet-1K or Places-365, we use the ViT-H-14 model checkpoint to calculate the clip embedding vectors of the generated images and the ground truth images~\citep{radford2021learning}. Afterward, we calculate the cosine similarity between the two vectors as the clip score.

\section{Robustness to Retain Samples Availability} \label{sec-retain}

In machine unlearning, sometimes the real retain samples are not available due to data retention policies.
To tackle this challenge, following~\citep{li2024machine}, we assess our method using images from other classes as substitutes for real retain samples. 
For instance, on ImageNet-1K, since we have already selected 200 classes, we randomly chose some images from the remaining 800 classes to act as a "proxy retain set" during the unlearning process. 
We incrementally reduce the proportion of real retain samples in the retain set and increased the proportion of proxy retain samples, with the experimental results presented in Table \ref{tab:unlearn-retain}. 
As demonstrated, our method is largely unaffected by the reduced availability of retain samples, indicating robust performance.

\begin{table}[htbp]
  \centering
  \caption{Results of center cropping 50\% of the images under different retain set usage proportions. ↑ indicates higher is better, and ↓ indicates lower is better. `F' and `R' stand for the forget set and retain set, respectively. Here, all results are based on the solution with the highest degree of unlearning completeness in Phase I.}
  \label{tab:unlearn-retain}
  \resizebox{\linewidth}{!}{
  \begin{tabular}{c|cccccc|cccccc|cccccc}
    \toprule
    & \multicolumn{6}{c|}{MAE} & \multicolumn{6}{c|}{VQ-GAN} & \multicolumn{6}{c}{Diffusion Models} \\
    \cmidrule(lr){2-7} \cmidrule(lr){8-13} \cmidrule(lr){14-19}
    & \multicolumn{2}{c}{IS} & \multicolumn{2}{c}{FID} & \multicolumn{2}{c|}{CLIP}
    & \multicolumn{2}{c}{IS} & \multicolumn{2}{c}{FID} & \multicolumn{2}{c|}{CLIP}
    & \multicolumn{2}{c}{IS} & \multicolumn{2}{c}{FID} & \multicolumn{2}{c}{CLIP} \\
    \cmidrule(lr){2-7} \cmidrule(lr){8-13} \cmidrule(lr){14-19}
    & \multicolumn{1}{c}{F $\downarrow$} & \multicolumn{1}{c}{R $\uparrow$}
    & \multicolumn{1}{c}{F $\uparrow$} & \multicolumn{1}{c}{R $\downarrow$}
    & \multicolumn{1}{c}{F $\downarrow$} & \multicolumn{1}{c|}{R $\uparrow$}
    & \multicolumn{1}{c}{F $\downarrow$} & \multicolumn{1}{c}{R $\uparrow$}
    & \multicolumn{1}{c}{F $\uparrow$} & \multicolumn{1}{c}{R $\downarrow$}
    & \multicolumn{1}{c}{F $\downarrow$} & \multicolumn{1}{c|}{R $\uparrow$} & \multicolumn{1}{c}{F $\downarrow$} & \multicolumn{1}{c}{R $\uparrow$}
    & \multicolumn{1}{c}{F $\uparrow$} & \multicolumn{1}{c}{R $\downarrow$}
    & \multicolumn{1}{c}{F $\downarrow$} & \multicolumn{1}{c}{R $\uparrow$} \\
    \midrule
    Original & 21.59 & 21.83 & 16.28 & 14.87 & 0.88 & 0.88 & 23.74 & 24.06 & 21.80 & 18.17 & 0.78 & 0.85 & 16.90 & 19.65 & 82.12 & 81.51 & 0.89 & 0.91 \\
    \midrule
    100\% & 12.33 & 17.47 & 154.60 & 68.453 & 0.69 & 0.75 & 13.23 & 22.55 & 139.21 & 26.39 & 0.46 & 0.82 & 11.84 & 18.47 & 165.05 & 95.42 & 0.55 & 0.81 \\
    80\% & 12.32 & 17.46 & 150.05 & 73.14 & 0.70 & 0.73 & 13.27 &22.30 & 138.49 & 24.83 & 0.46 & 0.81 & 11.91 & 18.10 & 167.32 & 98.82 & 0.55 & 0.80 \\
    60\% & 12.22 & 17.42 & 150.55 & 74.22 & 0.70 & 0.73 & 13.24 & 22.54 & 140.35 & 24.92 & 0.61 & 0.81 & 12.06 & 18.53 & 165.24 & 98.43 & 0.60 & 0.80 \\
    40\% & 112.29 & 17.43 & 150.27 & 73.63 & 0.70 & 0.74 & 12.77 & 22.39 & 141.67 & 25.84 & 0.61 & 0.81 & 12.05 & 18.64 & 168.83 & 96.42 & 0.60 & 0.79 \\
    20\% & 12.50 & 17.68 & 147.45 & 70.75 & 0.70 & 0.74 & 12.77 &22.39 & 144.38 & 28.08 & 0.60 & 0.81 & 13.49 & 18.67 & 168.26 & 95.47 & 0.57 & 0.79 \\
    0 & 12.21 & 17.68 & 147.31 & 68.09 & 0.70 & 0.74 & 12.39 & 22.35 & 147.17 & 29.79 & 0.62 & 0.80 & 13.24 & 18.76 & 168.43 & 96.63 & 0.60 & 0.79 \\
    \bottomrule
  \end{tabular}
  }
\end{table}

\section{More Generated Images: Baselines vs Ours } \label{sec-more-baseline}

We conduct various generative tasks on three mainstream I2I generative models (i.e., MAE, VQ-GAN, and the diffusion model), including image expansion, inpainting, and reconstruction, to assess both baselines and our proposed method. 
Specifically, we conduct evaluations of image inpainting and expansion tasks on VQ-GAN, image reconstruction tasks on MAE, and image inpainting tasks on the diffusion model.
The results indicate that our method can adapt to mainstream I2I generative models and various image generation tasks.

\paragraph{VQ-GAN.} We conduct experiments on image inpainting and expansion task unlearning on VQ-GAN, where examples of the image inpainting tasks are illustrated in Figure \ref{fig:vqgan-baseline-center-0.75}, and examples of image expansion can be referred to in Appendix \ref{sec-ablation-study}. 
Our unlearning method is effective for both image inpainting and image expansion tasks, and it significantly surpasses baselines.

\begin{figure}[htbp]
  \centering
  \begin{subfigure}[t]{\textwidth}
    \includegraphics[width=\linewidth]{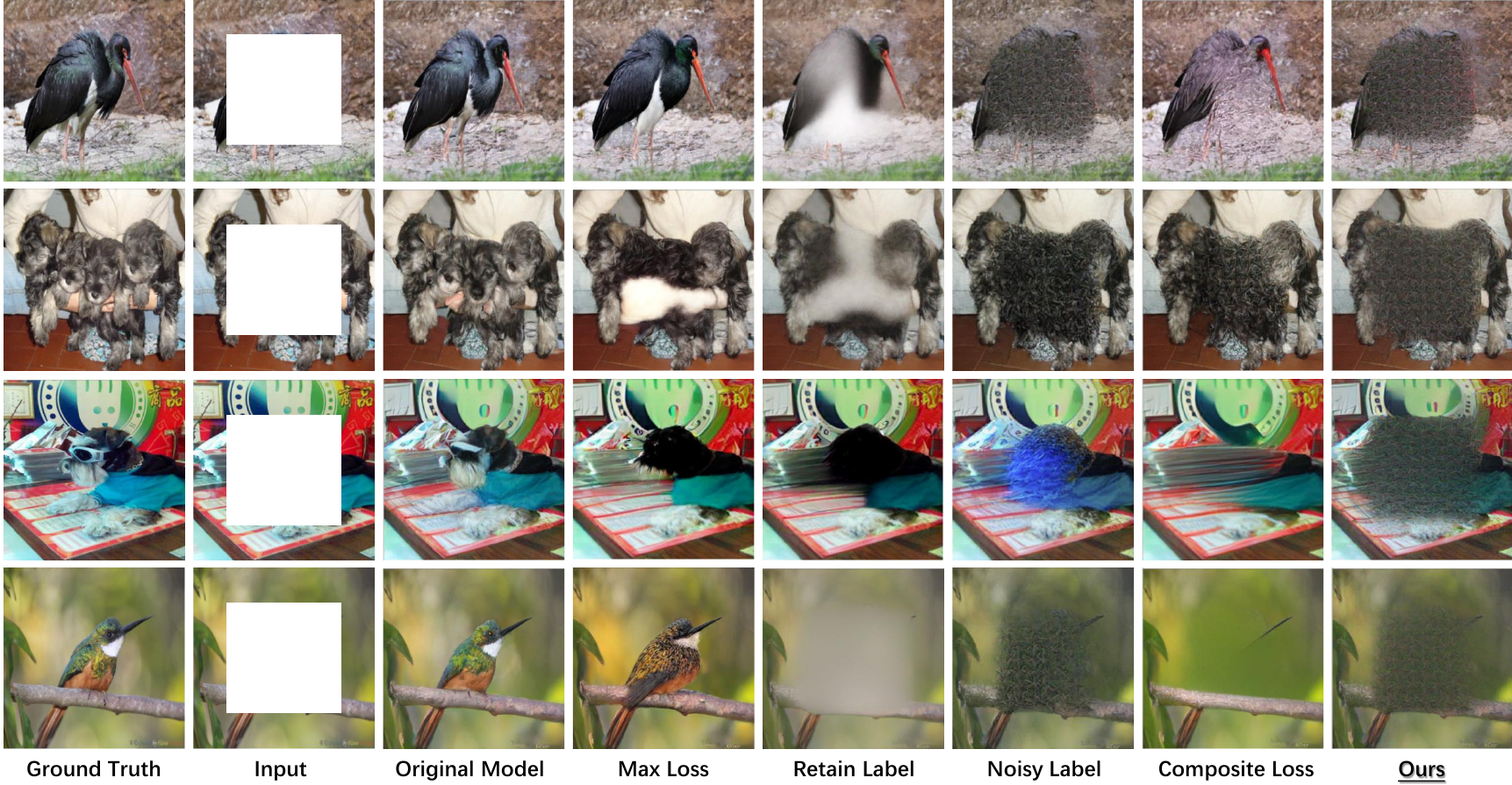}
    \caption{Forget Set}
  \end{subfigure}

  \vspace{10pt}  

  \begin{subfigure}[t]{\textwidth}
    \includegraphics[width=\linewidth]{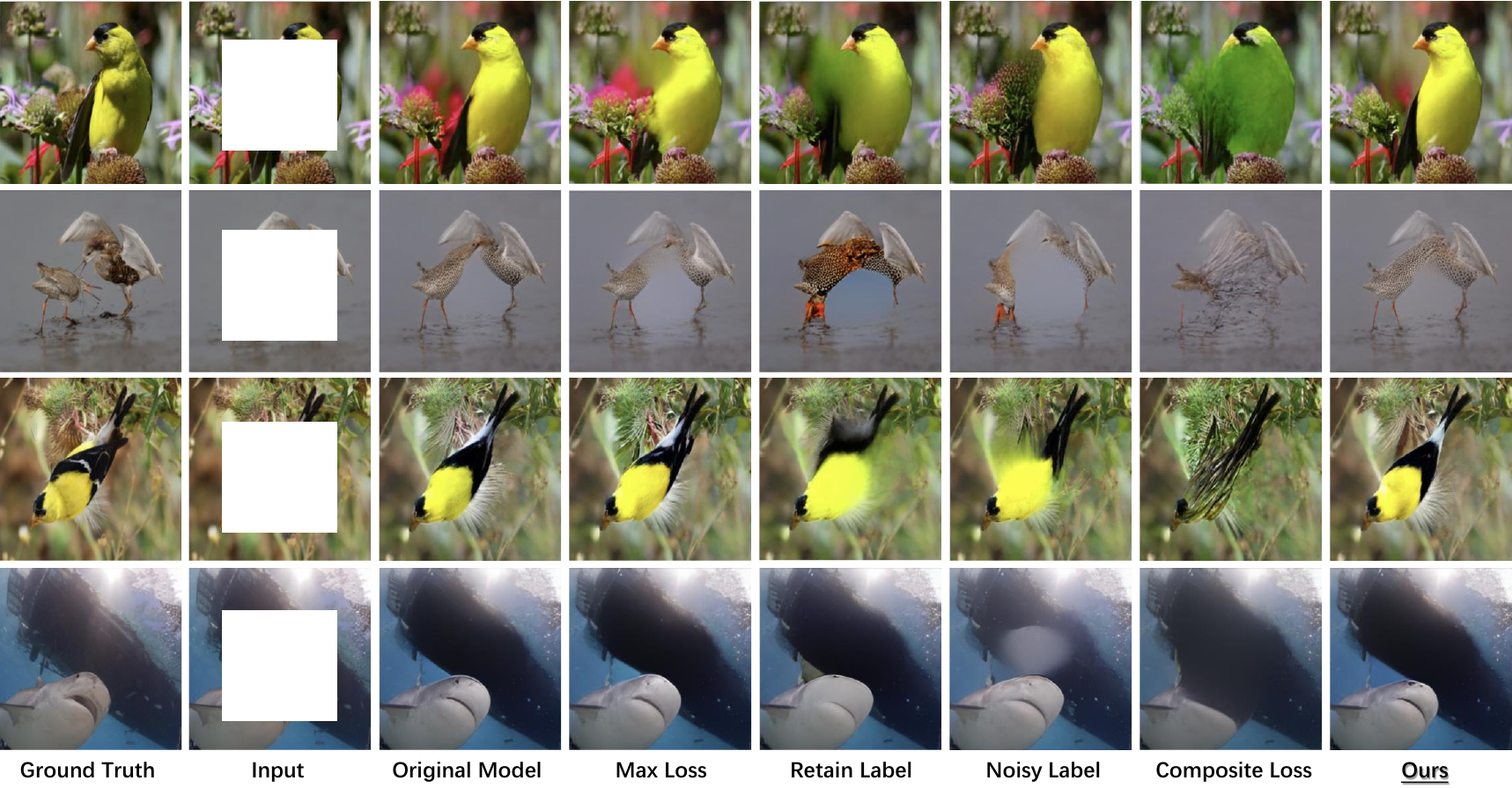}
    \caption{Retain Set}
  \end{subfigure}
  
  \caption{VQ-GAN: generated images of cropping 50\% at the center of the image. 
  The upper part (a) represents the forget set, while the lower part (b) represents the retain set. 
  "Ours" denotes the boundary condition of unlearning obtained in Phase I, which represents the point of the highest degree of unlearning completeness. 
  It is evident that our method significantly outperforms baselines in terms of the unlearning effect on the forget set, most closely approximating Gaussian noise, and exhibits the least performance degradation on the retain set.}
  \label{fig:vqgan-baseline-center-0.75}
\end{figure}

\paragraph{MAE.} We conduct experiments on unlearning image reconstruction tasks on the MAE.
As shown in figure \ref{fig:mae-baseline-random-0.50}, our unlearning method is also effective in the task of image reconstruction, with the effects of unlearning showing a significant advantage over baselines.

\begin{figure}[htbp]
  \centering
  \begin{subfigure}[t]{\textwidth}
    \includegraphics[width=\linewidth]{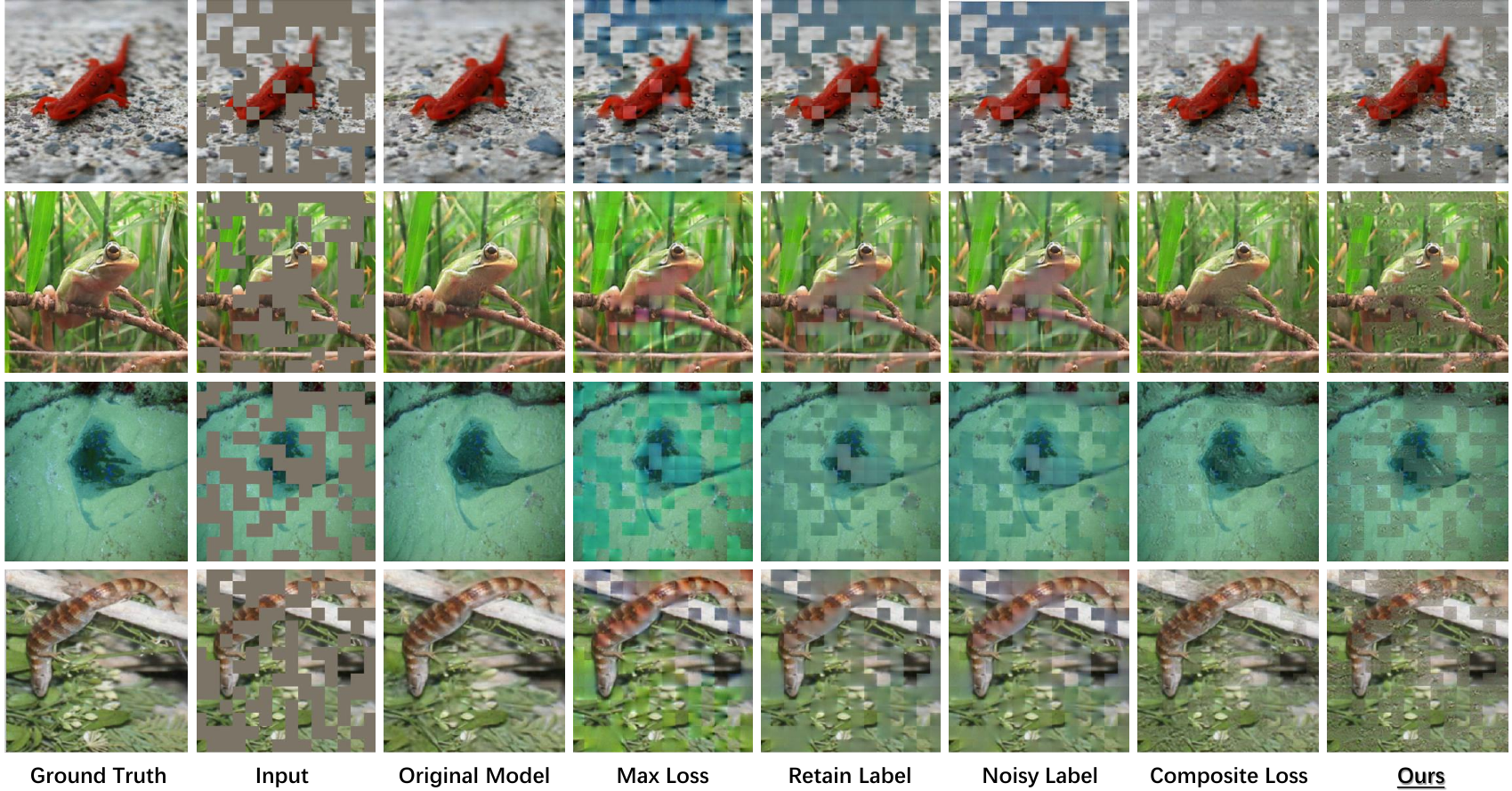}
    \caption{Forget Set}
  \end{subfigure}

  \vspace{10pt}  

  \begin{subfigure}[t]{\textwidth}
    \includegraphics[width=\linewidth]{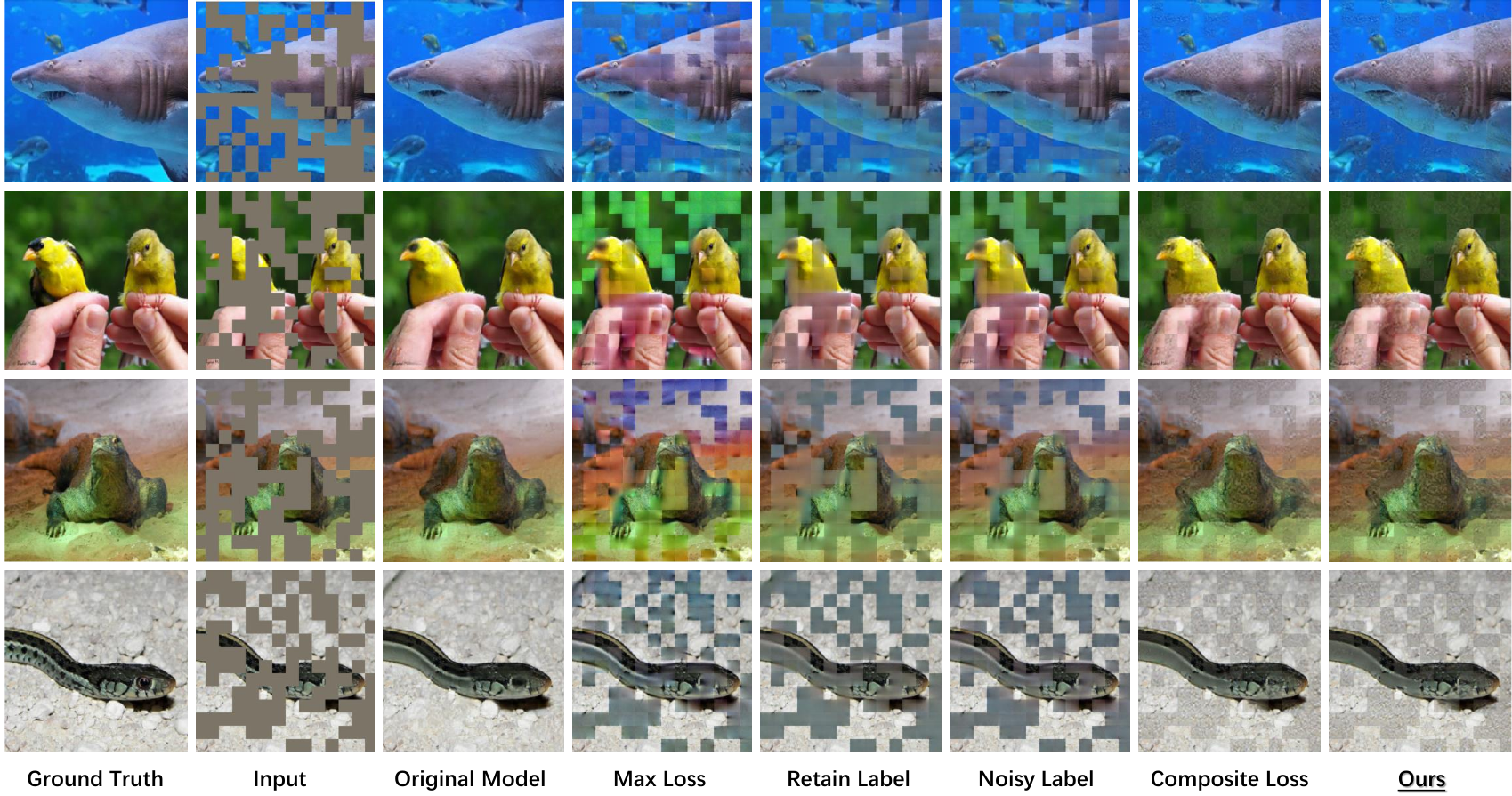}
    \caption{Retain Set}
  \end{subfigure}
  
  \caption{MAE: reconstruction of random masked images. We set the proportion of the random mask to 50\%. 
  The upper part (a) represents the forget set, while the lower part (b) represents the retain set. 
  "Ours" denotes the boundary condition of unlearning obtained in Phase I, which represents the point of the highest degree of unlearning completeness. 
  }
  \label{fig:mae-baseline-random-0.50}
\end{figure}

\paragraph{Diffusion model.} We validate our unlearning framework on the diffusion model task for image inpainting. 
As shown in figure \ref{fig:diff-baseline-center-0.25}, the results indicate that our method is equally applicable to diffusion models, and the effectiveness of unlearning surpasses that of baselines.

\begin{figure}[htbp]
  \centering
  \begin{subfigure}[t]{\textwidth}
    \includegraphics[width=\linewidth]{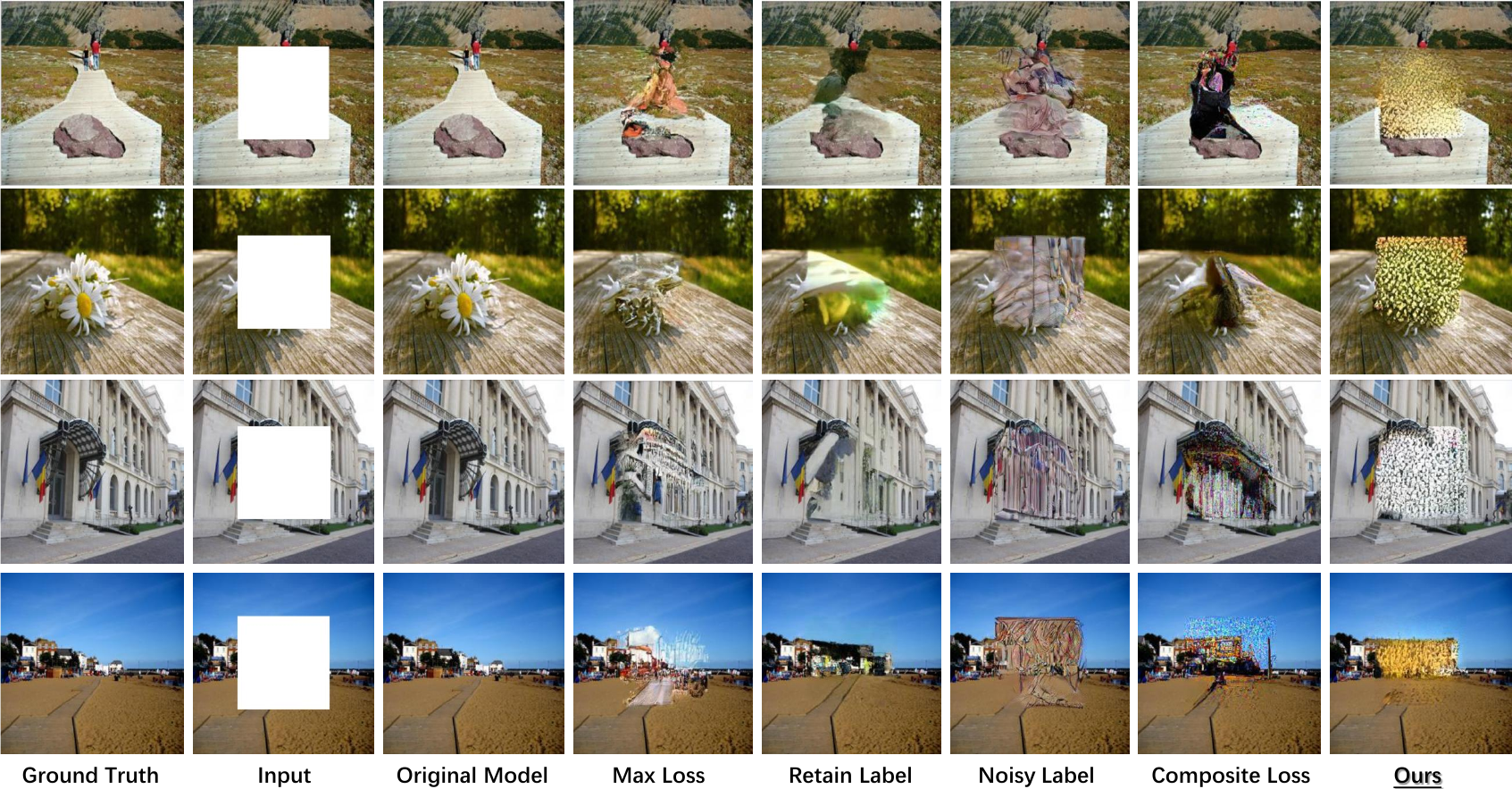}
    \caption{Forget Set}
  \end{subfigure}

  \vspace{10pt}  

  \begin{subfigure}[t]{\textwidth}
    \includegraphics[width=\linewidth]{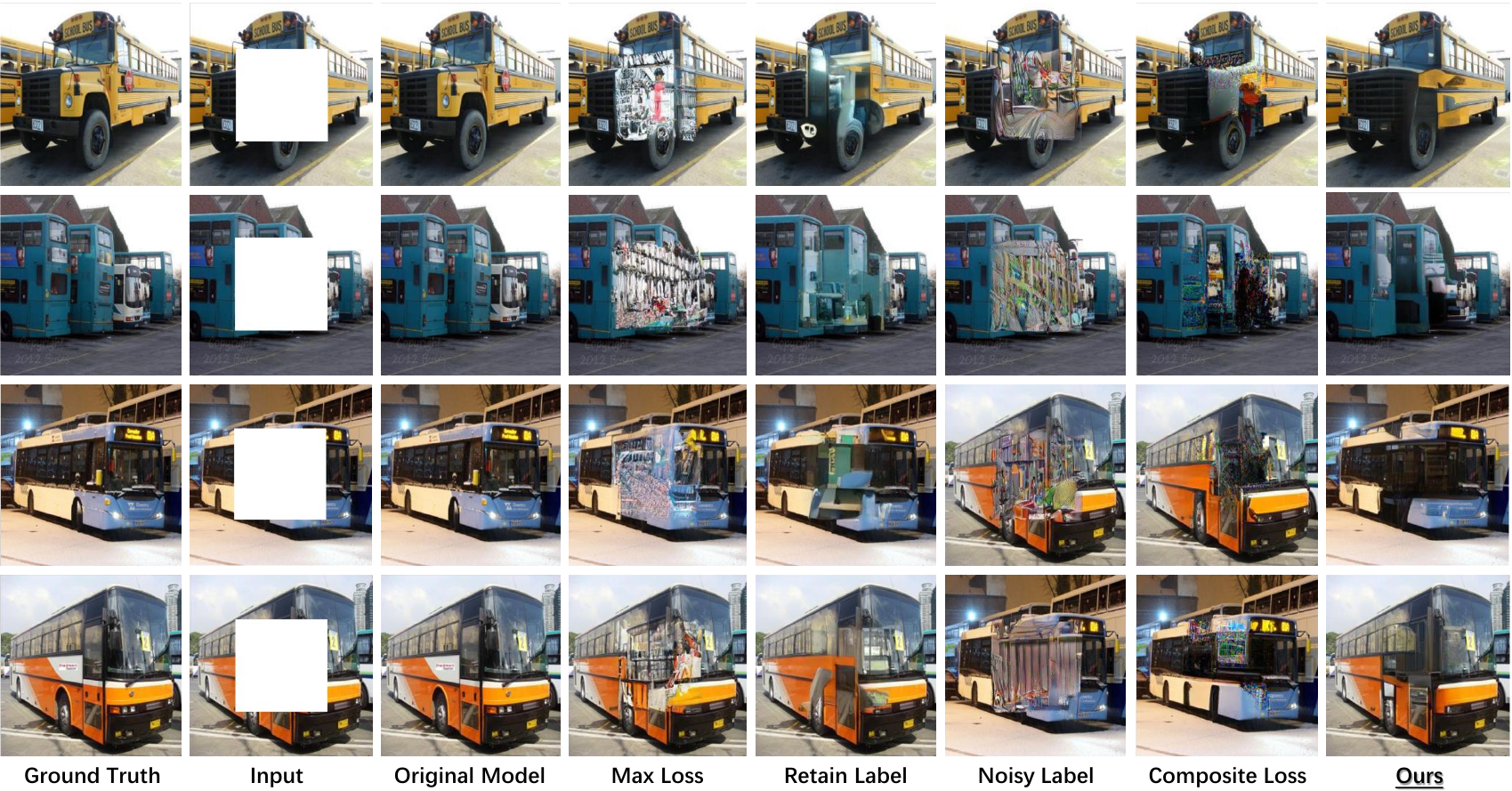}
    \caption{Retain Set}
  \end{subfigure}
  
  \caption{Diffusion model: generated images of cropping 50\% at the center of the image. 
  The upper part (a) represents the forget set, while the lower part (b) represents the retain set. 
  "Ours" denotes the boundary condition of unlearning obtained in Phase I, which represents the point of the highest degree of unlearning completeness. }
  \label{fig:diff-baseline-center-0.25}
\end{figure}

\section{More Generated Images: Different Degrees of Completeness} \label{sec-more-control}

We validate the control effect of our controllable unlearning framework across multiple generative tasks in three mainstream I2I generative models.
The results demonstrate that our controllable unlearning framework can effectively control unlearning across various image generation tasks of mainstream I2I generative models.

\paragraph{VQ-GAN.} We center-cropp the image by 50\% and utilize the VQ-GAN for image inpainting. 
Subsequently, we applied our unlearning framework to enforce unlearning. 
The results in Figure \ref{fig:vqgan-control-center-0.75} demonstrate the effectiveness of our method, with the control effect being very pronounced.

\begin{figure}[htbp]
  \centering
  \begin{subfigure}[t]{\textwidth}
    \includegraphics[width=\linewidth]{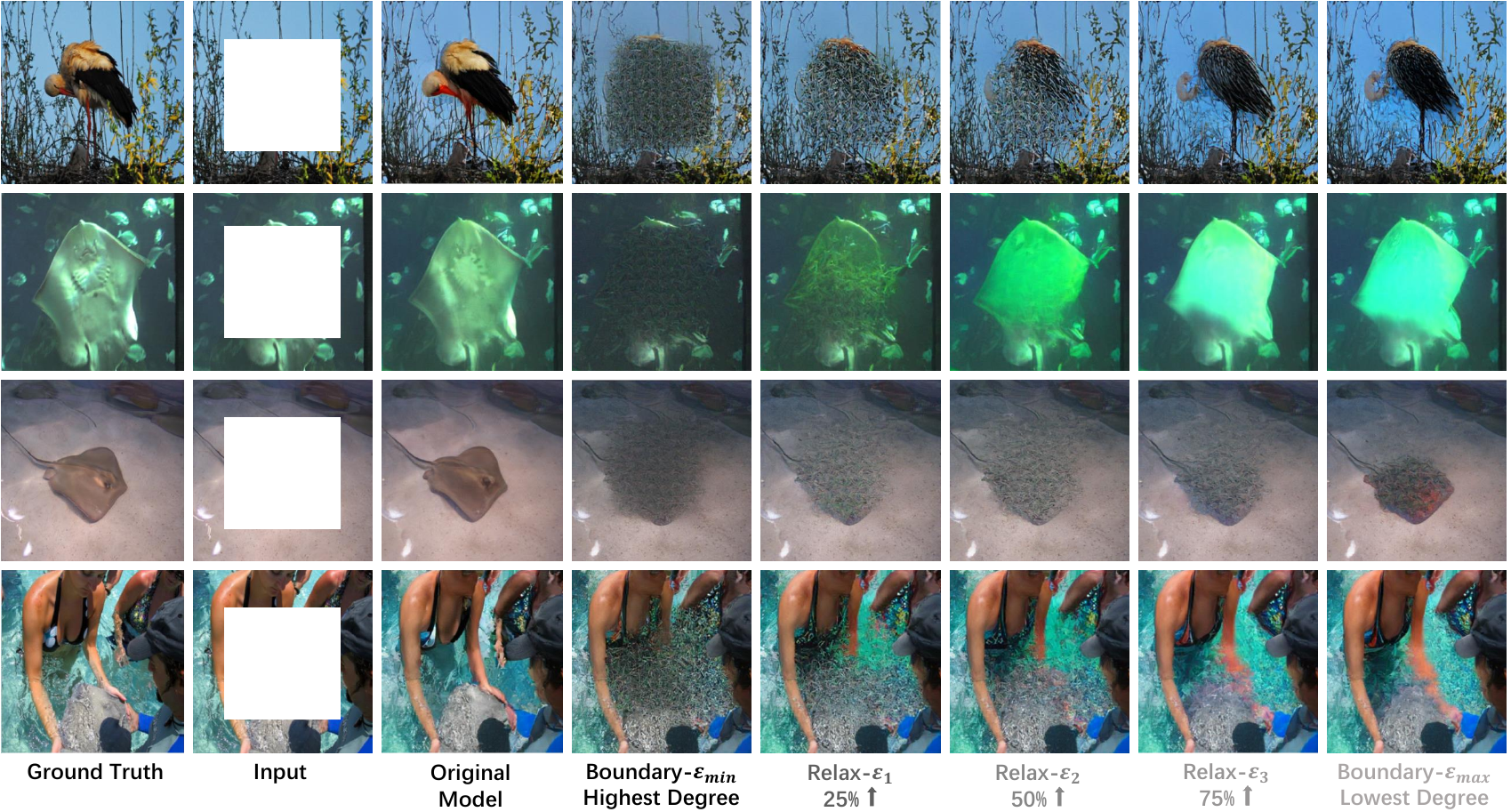}
    \caption{Forget Set}
  \end{subfigure}

  \vspace{10pt}  

  \begin{subfigure}[t]{\textwidth}
    \includegraphics[width=\linewidth]{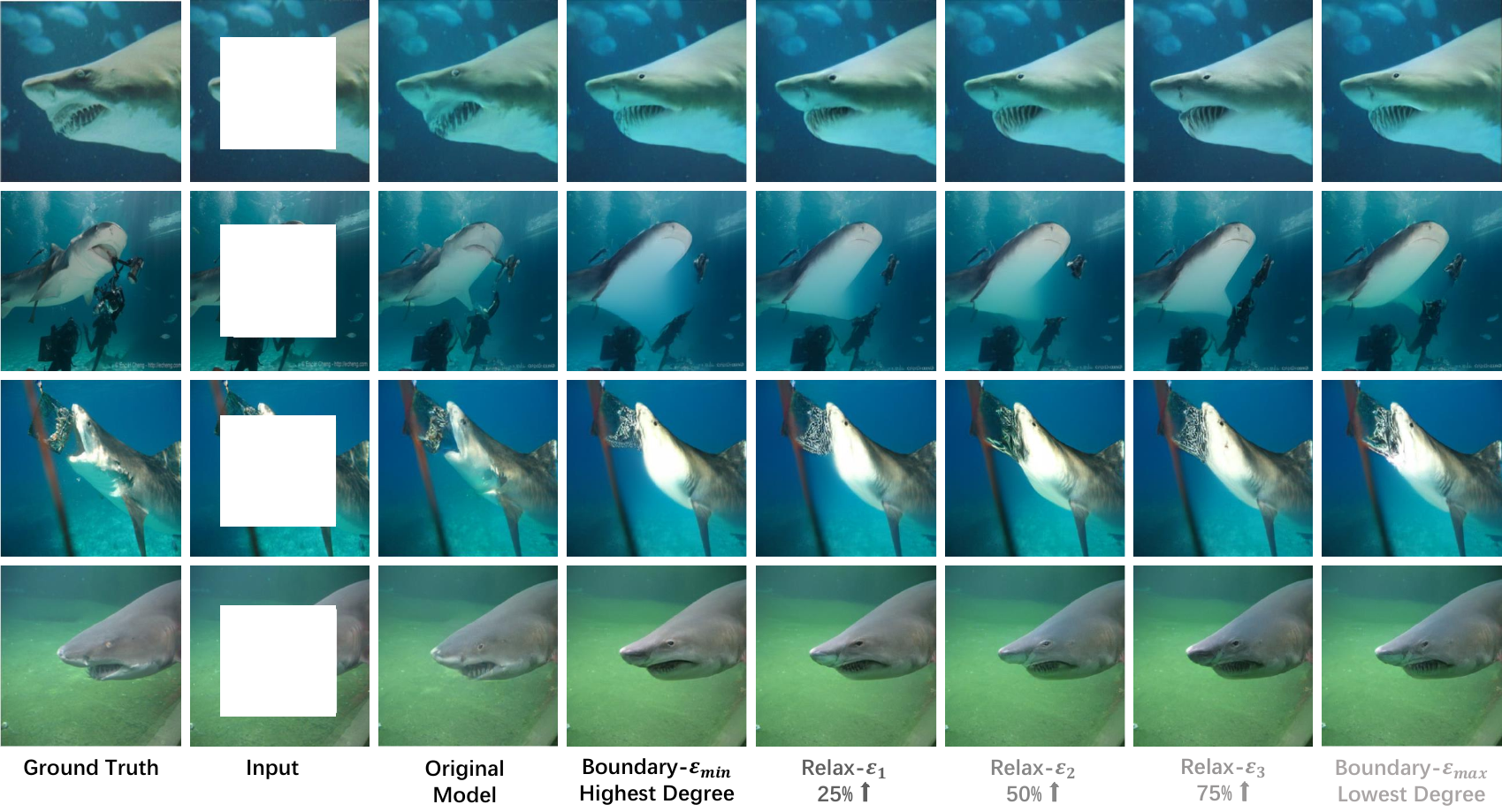}
    \caption{Retain Set}
  \end{subfigure}
  
  \caption{VQ-GAN: generated images of cropping 50\% at the center of the image under different degrees of unlearning completeness requirements. 
  The upper half (a) represents the forget set, and the lower half (b) represents the retain set. 
  Our method first determines the two boundary conditions of unlearning, and then linearly increases the value of $\varepsilon$ within its range (here, we increase by 25\% each time) to adjust the balance between unlearning completeness and model utility.}
  \label{fig:vqgan-control-center-0.75}
\end{figure}

\paragraph{MAE.} We verify the control effect of our controllable unlearning framework within the reconstruction task using the MAE.
The results in Figure \ref{fig:mae-control-random-0.50} indicate that our method can effectively control the completeness of unlearning in image reconstruction tasks as well.

\begin{figure}[htbp]
  \centering
  \begin{subfigure}[t]{\textwidth}
    \includegraphics[width=\linewidth]{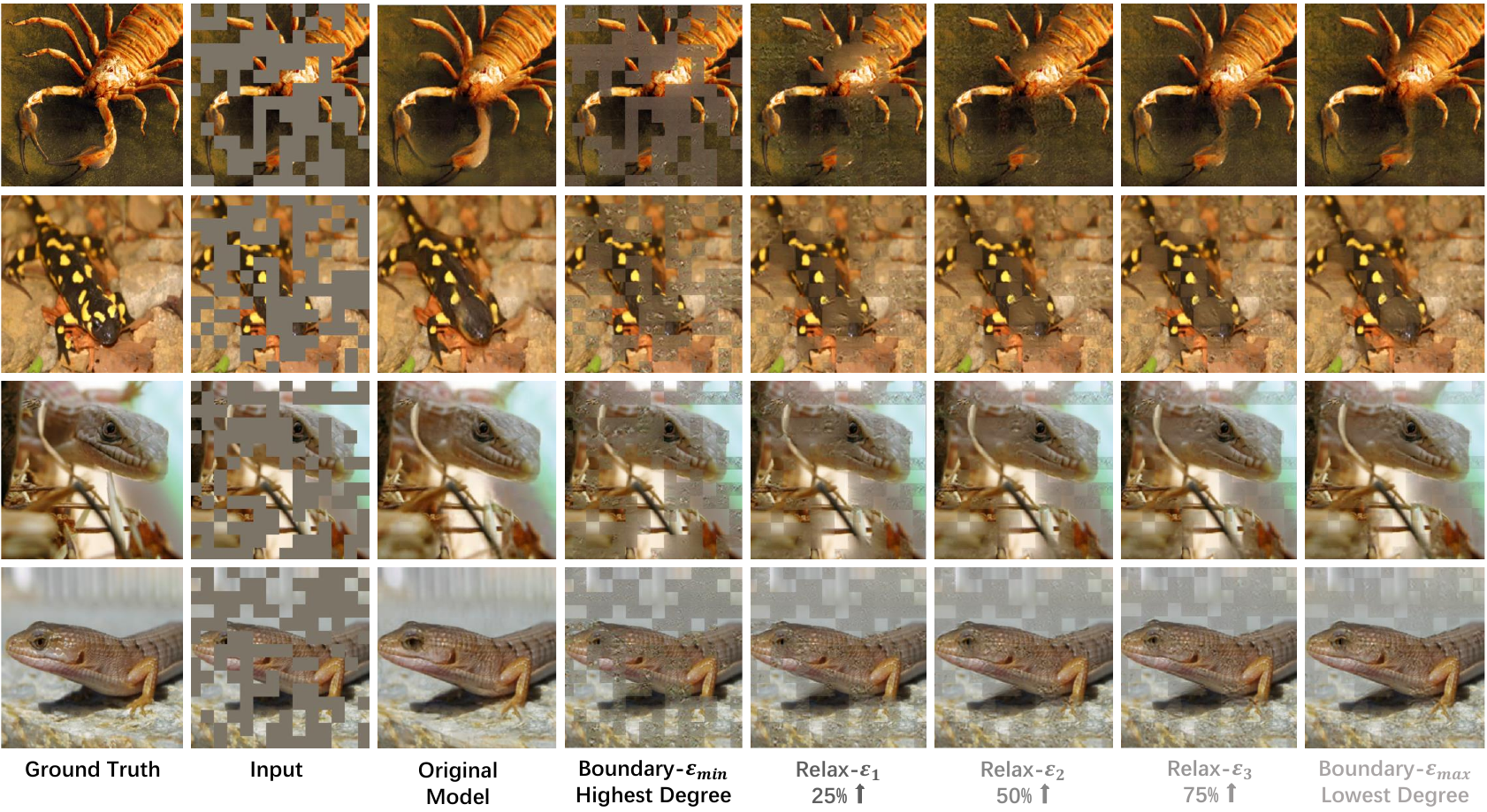}
    \caption{Forget Set}
  \end{subfigure}

  \vspace{10pt}  

  \begin{subfigure}[t]{\textwidth}
    \includegraphics[width=\linewidth]{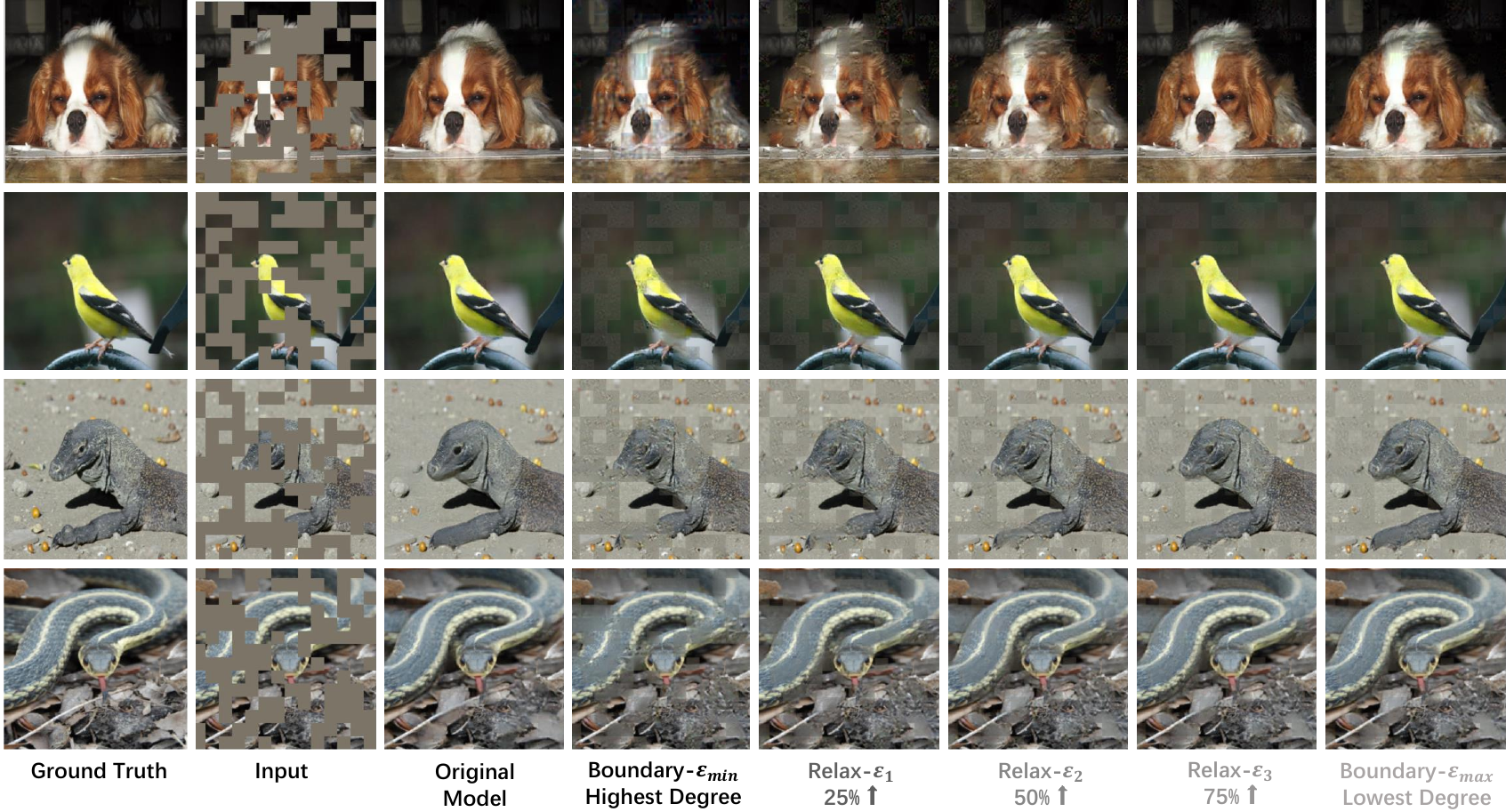}
    \caption{Retain Set}
  \end{subfigure}
  
  \caption{MAE: construction of random masked images under different degrees of unlearning completeness requirements. 
  We set the proportion of the random mask to 50\%. 
  The upper half (a) represents the forget set, and the lower half (b) represents the retain set. 
  Our method first determines the two boundary conditions of unlearning, and then linearly increases the value of $\varepsilon$ within its range (here, we increase by 25\% each time) to adjust the balance between unlearning completeness and model utility.}
  \label{fig:mae-control-random-0.50}
\end{figure}

\paragraph{Diffusion model.} 

We validate the control effect of our controllable unlearning framework within the inpainting task of a diffusion model. 
As shown in Figure \ref{fig:diff-control-center-0.25}, the findings illustrate that our method can effectively adjust the balance between the completeness of unlearning and the utility of the model in the context of a diffusion model.

\begin{figure}[htbp]
  \centering
  \begin{subfigure}[t]{\textwidth}
    \includegraphics[width=\linewidth]{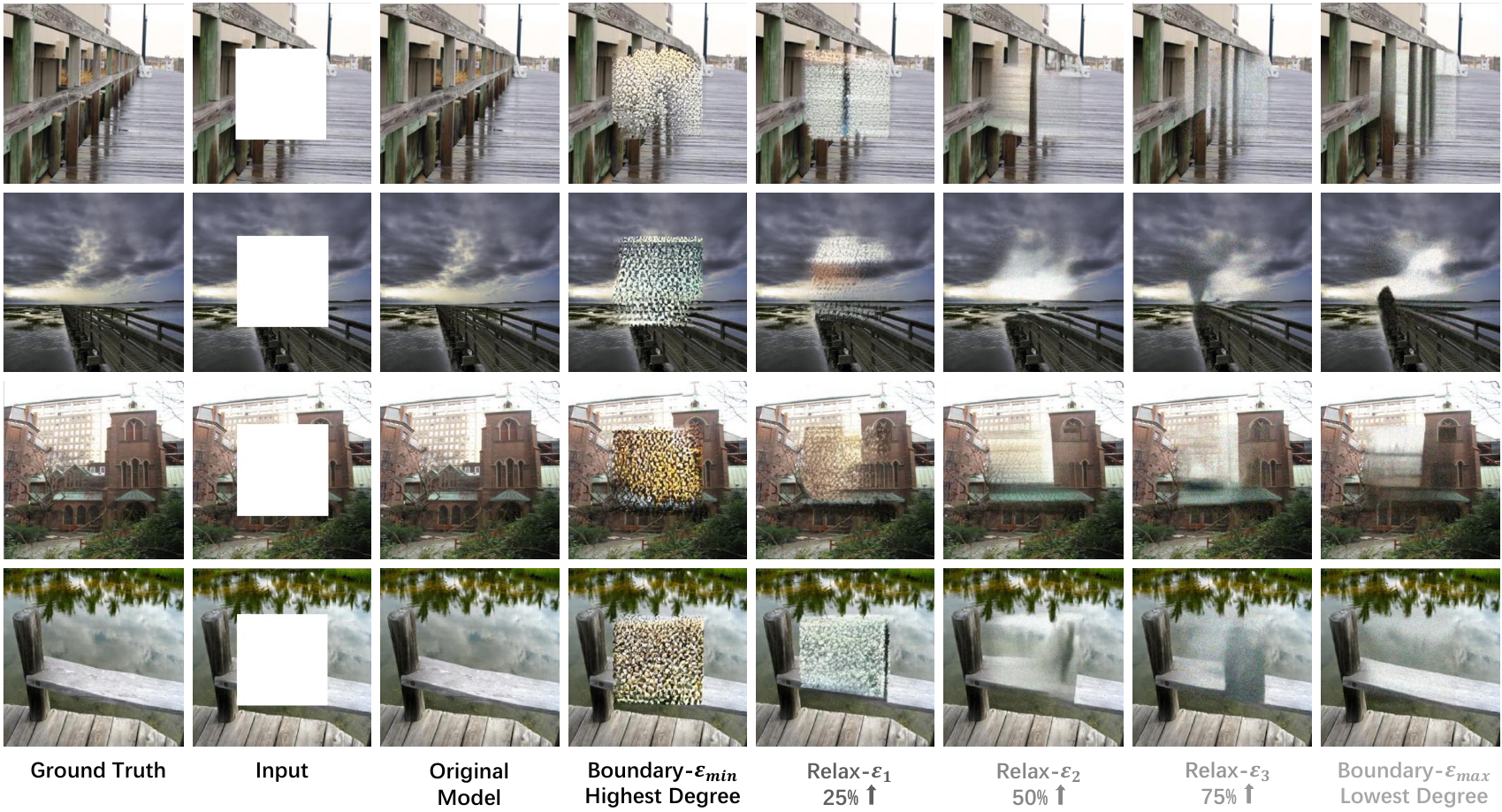}
    \caption{Forget Set}
  \end{subfigure}

  \vspace{10pt}  

  \begin{subfigure}[t]{\textwidth}
    \includegraphics[width=\linewidth]{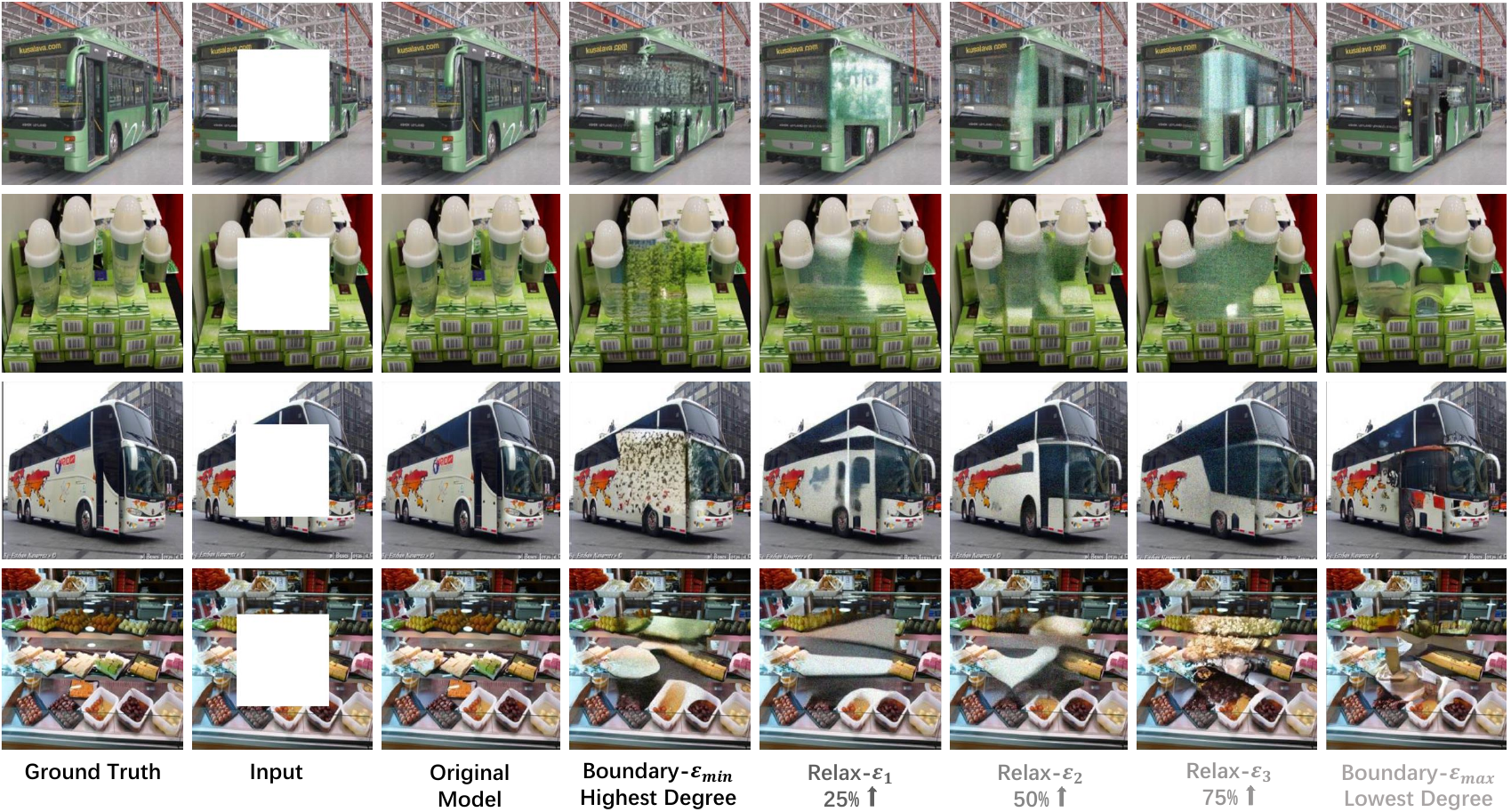}
    \caption{Retain Set}
  \end{subfigure}
  
  \caption{Diffusion model: generated images of cropping 50\% at the center of the image under different degrees of unlearning completeness requirements. 
  The upper half (a) represents the forget set, and the lower half (b) represents the retain set. 
  Our method is also effective when applied to the diffusion model.}
  \label{fig:diff-control-center-0.25}
\end{figure}

\section{Ablation Study} \label{sec-ablation-study}

To verify the robustness of our method on mainstream I2I generative models and various image generation tasks, we conducted the following ablation studies: i) we vary the cropping patterns to demonstrate robustness across multiple image generation tasks; ii) we decrease the linear increment size of $\varepsilon$ to validate that our method allows for more fine-grained control; and iii) we alter the cropping ratios to confirm the robustness of our method to changes in crop ratio.

\subsection{More Generative Tasks} \label{more-generative-task}

Similar to validating unlearning in classification models through Membership Inference Attacks~\cite{choi2023towards}, generative models can also be assessed for unlearning robustness by employing attack methods to reconstruct the forget set. 
Although there is substantial research in this area~\cite{kumari2023ablating,petsiuk2025concept}, it typically focuses on concept unlearning in text-to-image generative models. 
In contrast, our focus is on unlearning in image-to-image generative models. 
Unlike unlearning a single concept, our goal is to unlearn the influence of a set of samples or their distribution on the model. 
This makes it challenging to validate the effectiveness and robustness of our method through attacks.
Specifically, we validate the effectiveness and robustness of our controllable unlearning framework for image extension tasks on VQ-GAN by varying the patterns of cropping. 
The results indicate that our controllable unlearning framework is robust to different cropping patterns.

\subsubsection{Outpainting Task} 

We retain 25\% of the image center and utilize VQ-GAN for image outpainting. 
As shown in Figure \ref{fig:vqgan-baseline-verge-0.75-1}, our method produces outpainting on the forget set that is most similar to Gaussian noise, and the outpainting performance on the retain set shows the least decline compared to the original model.

\begin{figure}[htbp]
  \centering
  \begin{subfigure}[t]{\textwidth}
    \includegraphics[width=\linewidth]{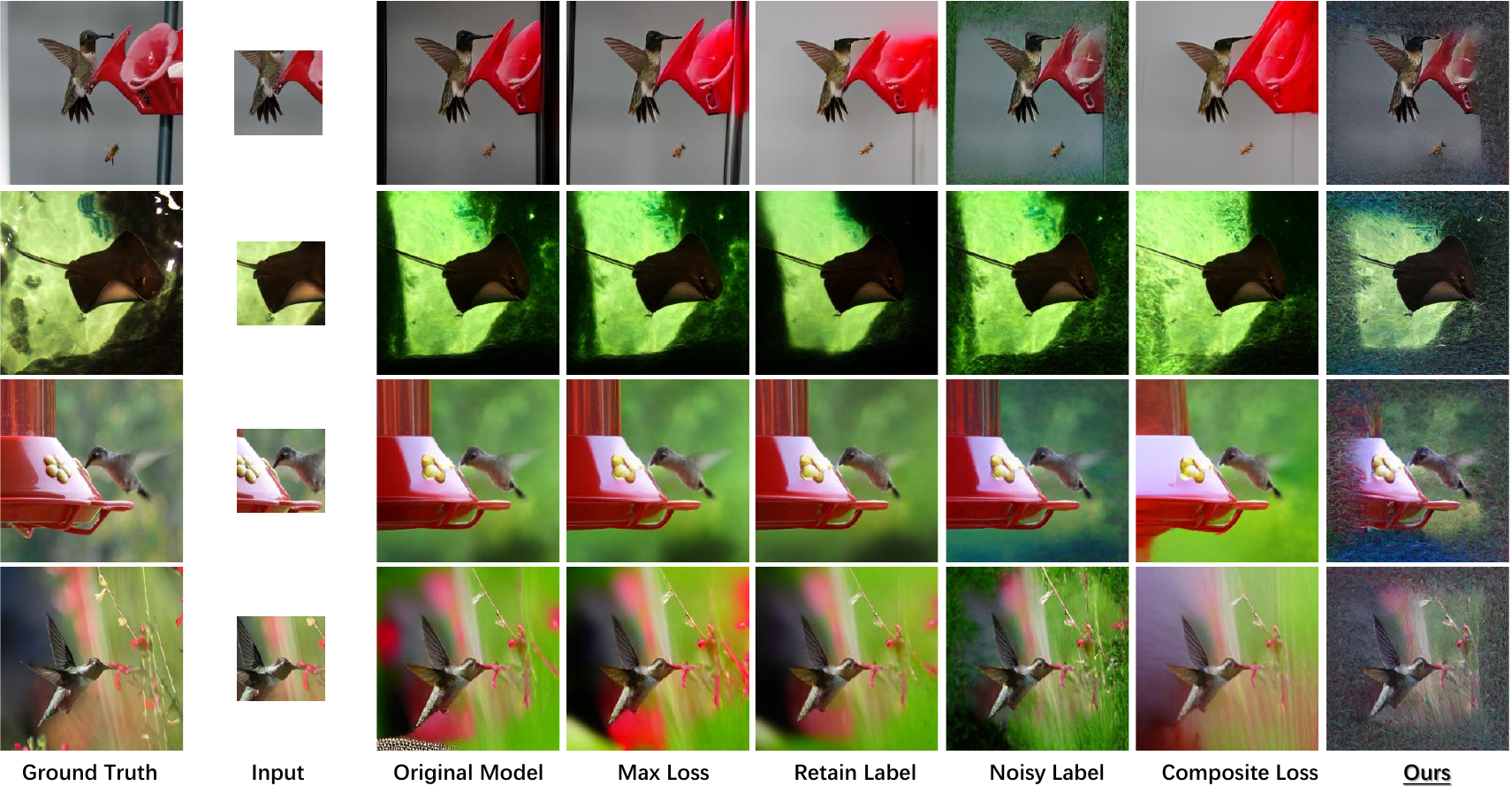}
    \caption{Forget Set}
  \end{subfigure}

  \vspace{10pt}  

  \begin{subfigure}[t]{\textwidth}
    \includegraphics[width=\linewidth]{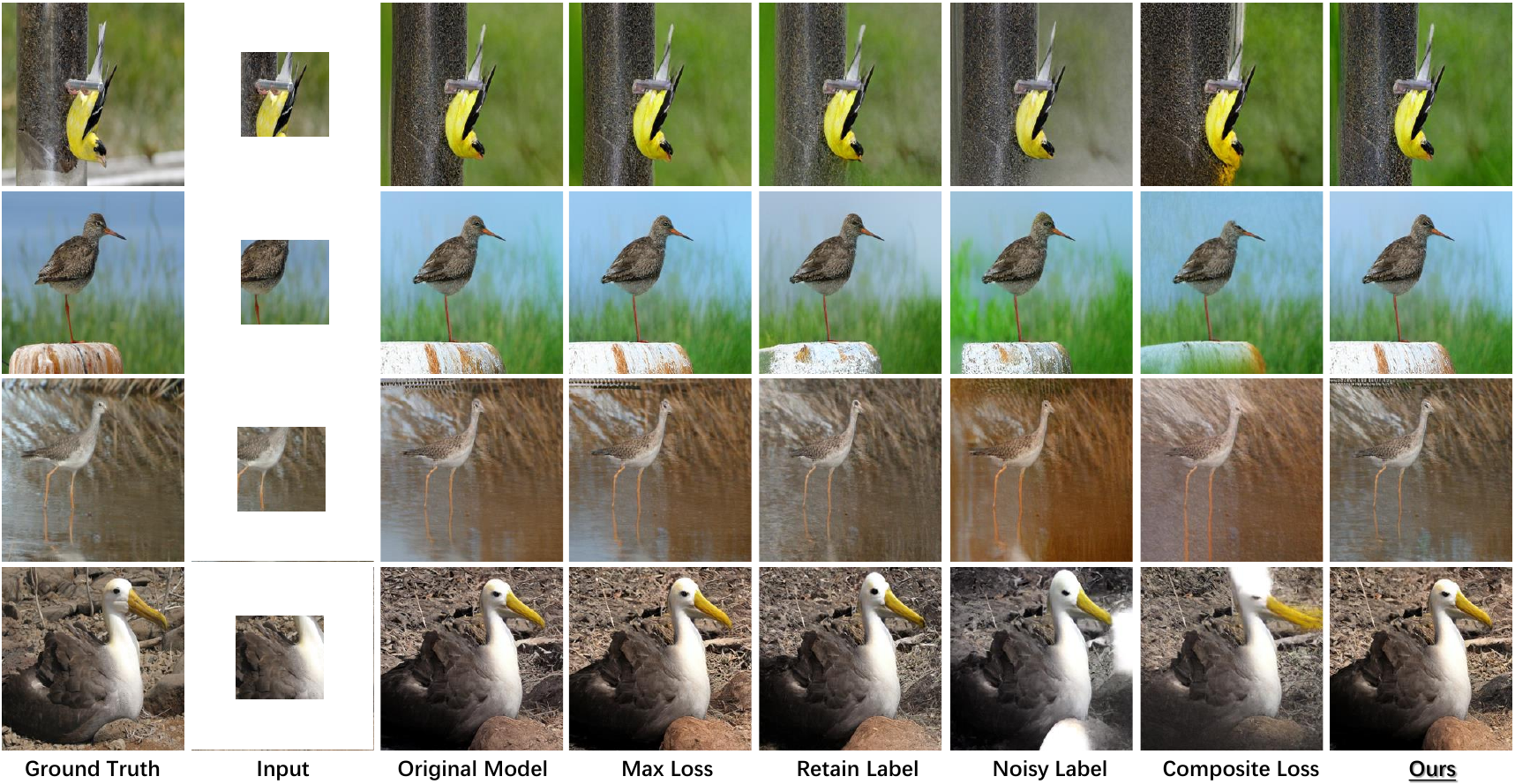}
    \caption{Retain Set}
  \end{subfigure}
  
  \caption{Outpainting by VQ-GAN. We retain 25\% of the image center.
  The upper half (a) designated as the unlearning set and the lower half (b) as the retain set. 
  For each subset, we compared the performance of both the baselines and our method on the outpainting task, where "Ours" represents the boundary condition of unlearning in Phase I, indicating the point of highest degree of unlearning completeness. 
  The results show that our method significantly outperforms the baselines on the outpainting task.}
  \label{fig:vqgan-baseline-verge-0.75-1}
\end{figure}

\subsubsection{Upward Extension Task} \label{crop-ratio-2}

We crop the upper half of the image, retain the lower half, and employ VQ-GAN for image extension. 
The results in Figure \ref{fig:vqgan-baseline-top-0.50-1} indicate that our method produces extension on the unlearning set that closely resembles Gaussian noise, and on the retain set, the extension performance decreases the least compared to the original model.

\begin{figure}[htbp]
  \centering
  \begin{subfigure}[t]{\textwidth}
    \includegraphics[width=\linewidth]{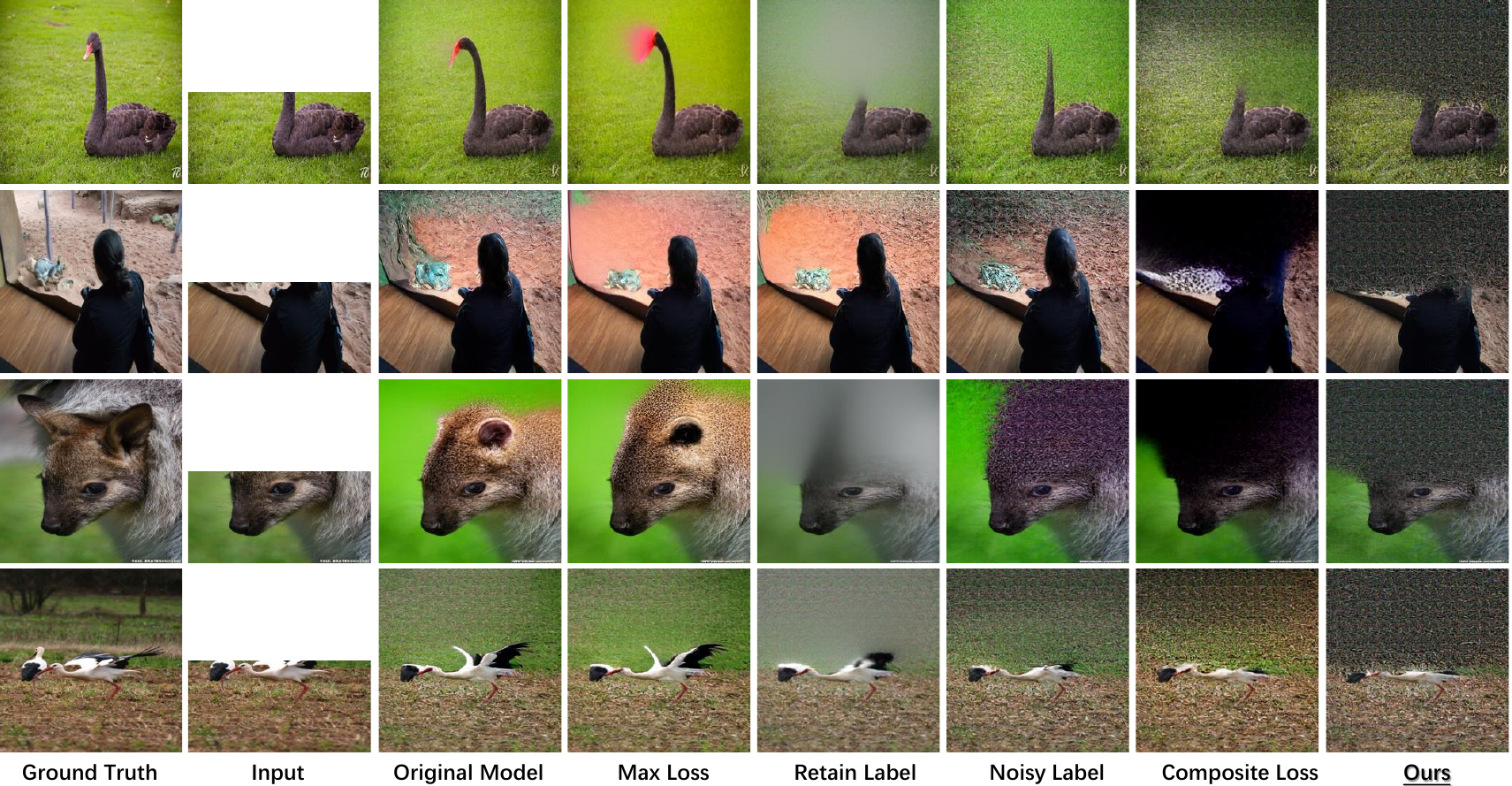}
    \caption{Forget Set}
  \end{subfigure}

  \vspace{10pt}  

  \begin{subfigure}[t]{\textwidth}
    \includegraphics[width=\linewidth]{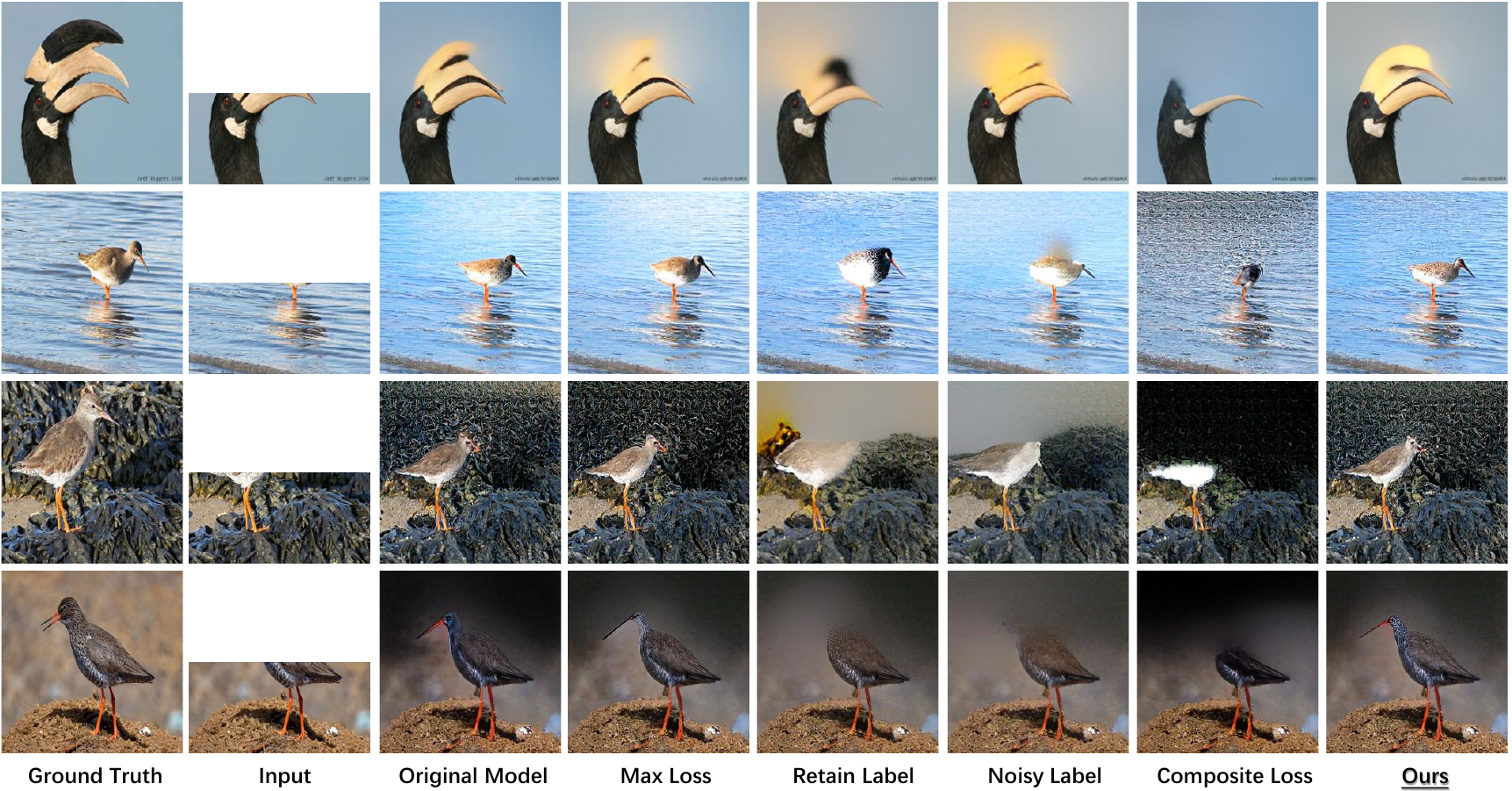}
    \caption{Retain Set}
  \end{subfigure}
  
  \caption{Upward extension by VQ-GAN. 
  We retain 50\% of the lower half of the image. 
  The upper half (a) is the forget set, and the lower half (b) is the retain set. 
  For each set, we compare the performance of the baselines and our method on the upward extension task, where "Ours" represents the unlearning boundary condition in Phase I, which is the point of the highest degree of unlearning completeness. 
  The results suggest that our method also significantly outperforms the baselines on the upward extension task.}
  \label{fig:vqgan-baseline-top-0.50-1}
\end{figure}

\subsubsection{Leftward Extension Task} \label{crop-ratio-3}

We crop the right half of the image, retain the left half, and use VQ-GAN for image extension. 
As shown in Figure \ref{fig:vqgan-baseline-left-0.50-1}, our method produces leftward extension on the forget set that closest resembles Gaussian noise and, on the retain set, the leftward extension performance exhibits the minimal decrease compared to the original model.

\begin{figure}[htbp]
  \centering
  \begin{subfigure}[t]{\textwidth}
    \includegraphics[width=\linewidth]{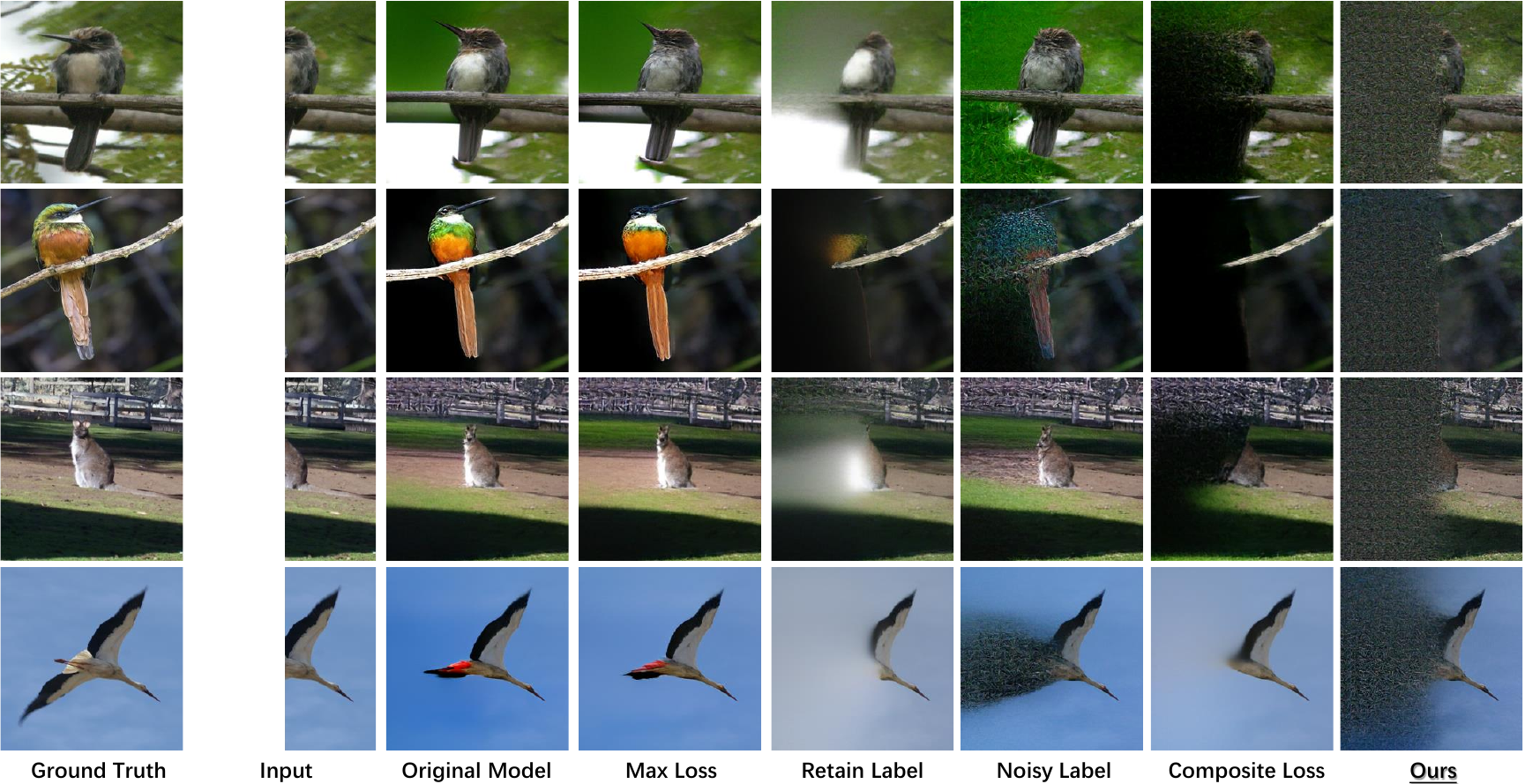}
    \caption{Forget Set}
  \end{subfigure}

  \vspace{10pt}  

  \begin{subfigure}[t]{\textwidth}
    \includegraphics[width=\linewidth]{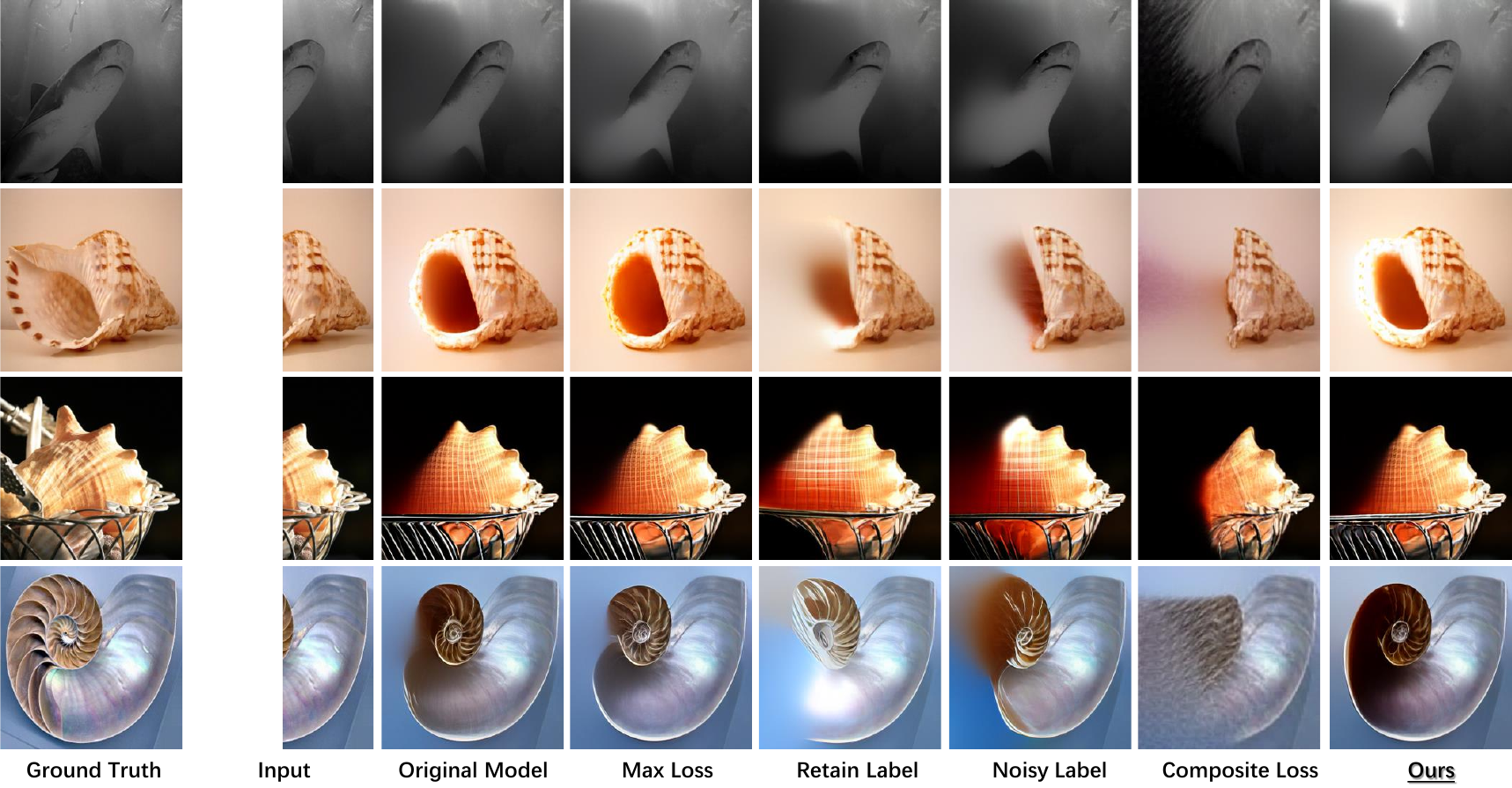}
    \caption{Retain Set}
  \end{subfigure}
  
  \caption{Leftward extension by VQ-GAN. 
  We retain 50\% of the right half of the image. 
  The upper half (a) is the forget set, and the lower half (b) is the retain set. 
  For each set, we compare the performance of the baselines and our method on the upward extension task, where "Ours" represents the unlearning boundary condition in Phase I, which is the point of highest degree of unlearning completeness. 
  The results suggest that our method also significantly outperforms the baselines
  on the upward extension task.}
  \label{fig:vqgan-baseline-left-0.50-1}
\end{figure}

\subsection{More Fine-grained control of unlearning completeness} \label{fine-grained-control}

After obtaining two boundary points of unlearning, our controllable unlearning framework linearly increases  within its valid range to balance the completeness of unlearning and the utility of the model.
However, in the main paper, the increase of $\varepsilon$ is by 25\% each time. 
For example, if the range of $\varepsilon$ is [1,9], then the sequence of $\varepsilon$ values would be \{3,5,7\}. 
It is evident that the increments of $\varepsilon$ are quite substantial, which results in a coarser granularity of control. 
Here, we reduce the linear increment of $\varepsilon$ to extend the effectiveness of our controllable unlearning framework across various image generation tasks in VQ-GAN. 
The results show that our framework can achieve fine-grained control.

\subsubsection{Outpainting Task}

We retain the central 25\% of the image and utilize VQ-GAN for image outpainting. 
The results in Figure \ref{fig:vqgan-control-verge-0.75-1} show that the performance of our controllable unlearning framework on the forget set gradually improves with the increase of $\varepsilon$, and the extent of decline in outpainting performance on the retain set, compared to the original model, is also reducing.

\begin{figure}[htbp]
  \centering
  \begin{subfigure}[t]{\textwidth}
    \includegraphics[width=\linewidth]{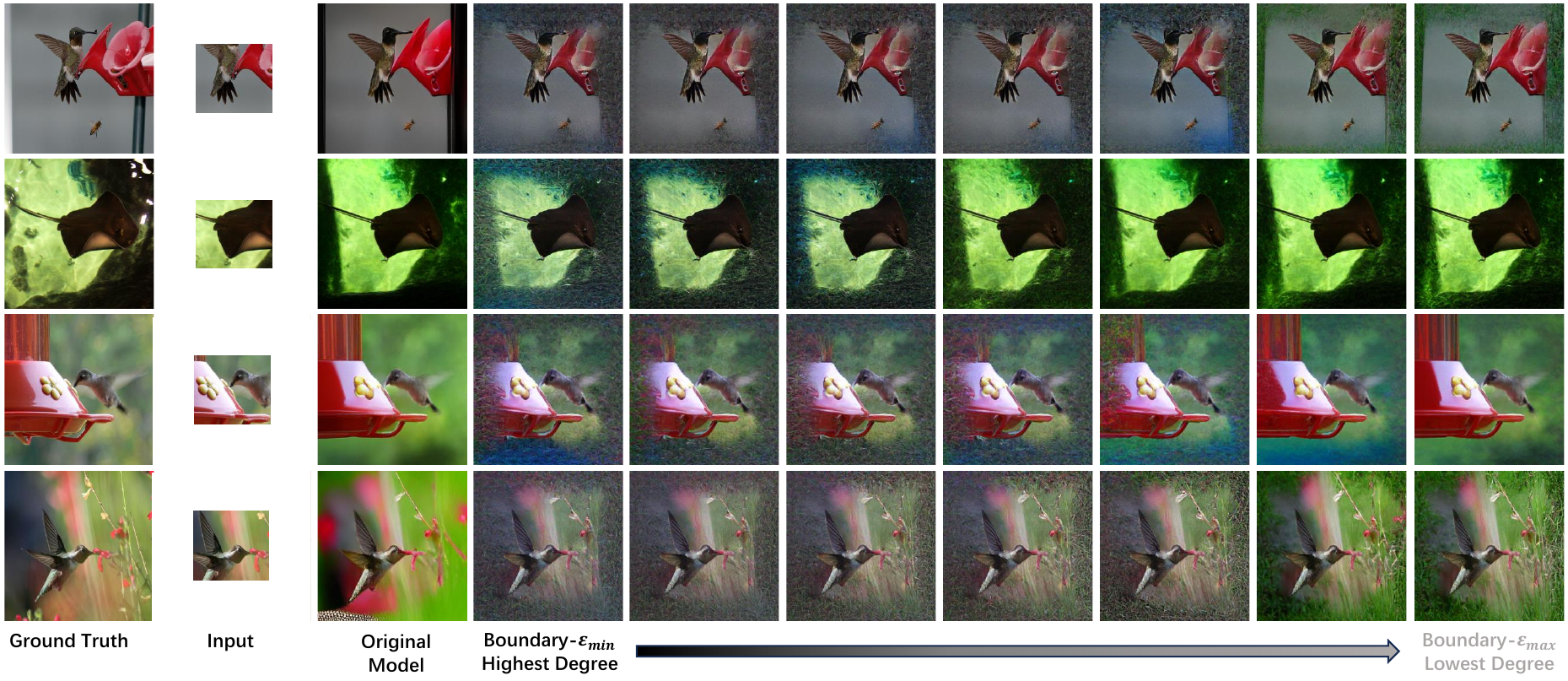}
    \caption{Forget Set}
  \end{subfigure}

  \vspace{10pt}  

  \begin{subfigure}[t]{\textwidth}
    \includegraphics[width=\linewidth]{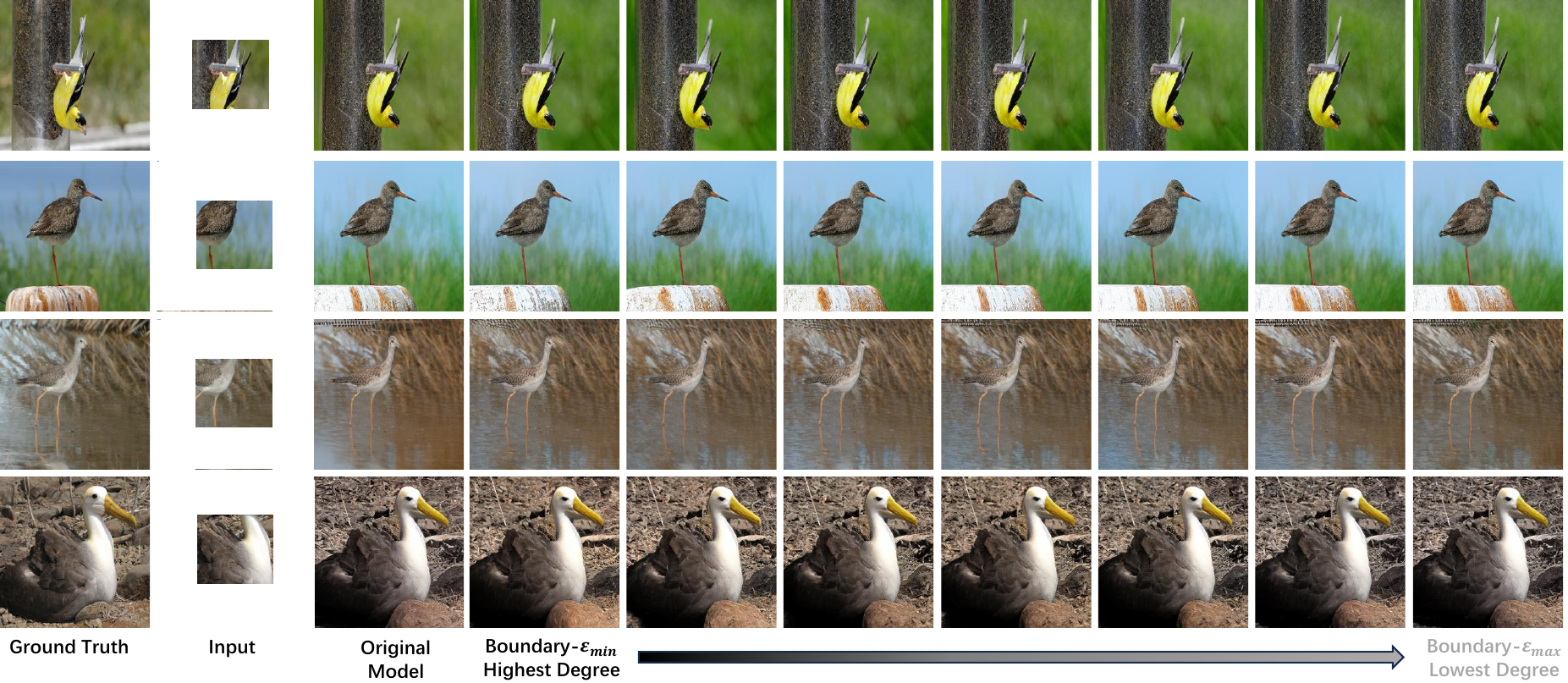}
    \caption{Retain Set}
  \end{subfigure}
  
  \caption{Outpainting by VQ-GAN under different degrees of unlearning completeness. 
  We retain 25\% of the image center. 
  The upper half (a) is the forget set, while the lower half (b) is the retain set. 
  For each part, we compare the unlearning effects of our method at different values of $\varepsilon$. 
  "Highest" and "Lowest" represent the conditions of the highest and lowest degree of unlearning completeness, respectively.
  We increase $\varepsilon$ 16\% each time.
  }
  \label{fig:vqgan-control-verge-0.75-1}
\end{figure}

\subsubsection{Upward Extension Task}

We retain the lower half of the image center and crop the upper half, employing VQ-GAN for image extension. 
As shown in Figure \ref{fig:vqgan-control-top-0.50-1}, results indicate that, with an increase in the value of $\varepsilon$, the upward extension effectiveness on the forget set of our controllable unlearning framework gradually improves.
Concurrently, the degree of decrease in upward extension effectiveness on the retain set, in comparison to the original model, also diminishes.

\begin{figure}[htbp]
  \centering
  \begin{subfigure}[t]{\textwidth}
    \includegraphics[width=\linewidth]{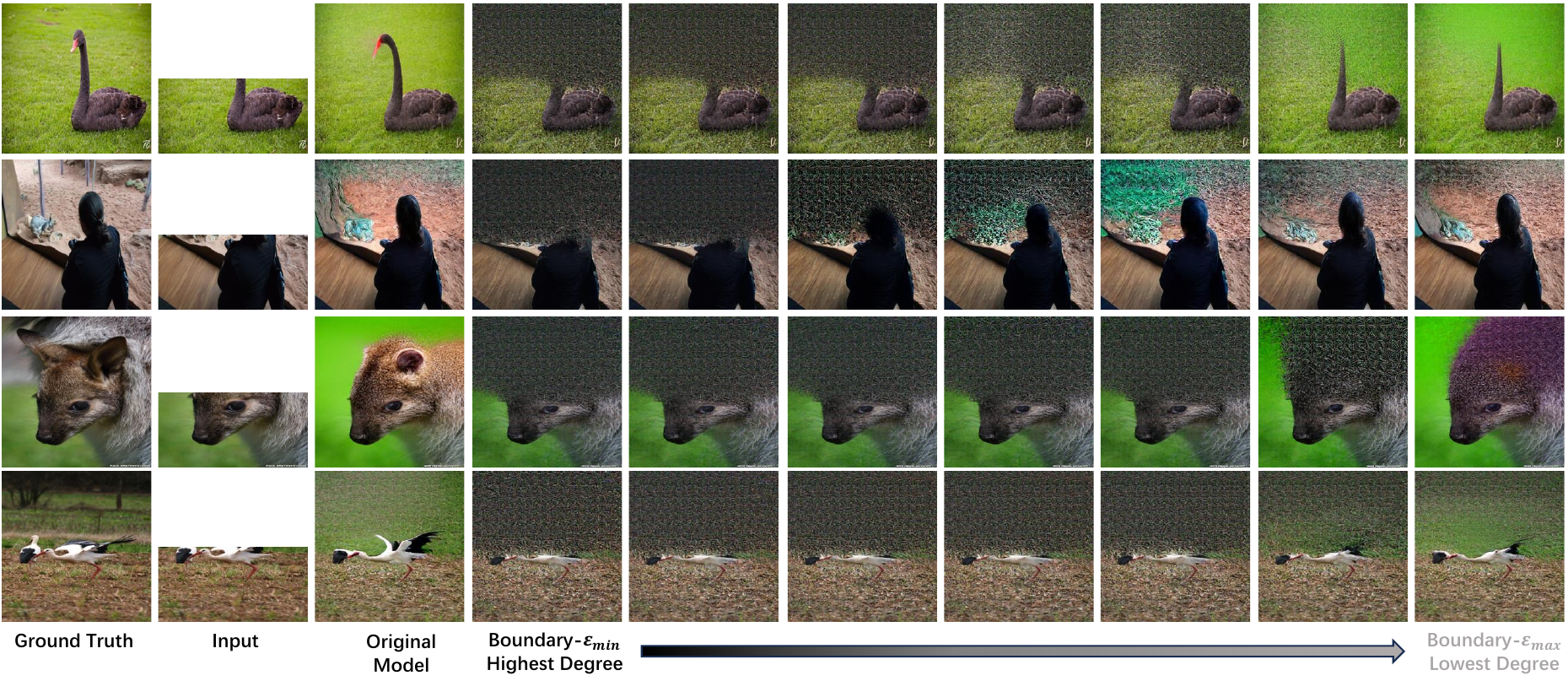}
    \caption{Forget Set}
  \end{subfigure}

  \vspace{10pt}  

  \begin{subfigure}[t]{\textwidth}
    \includegraphics[width=\linewidth]{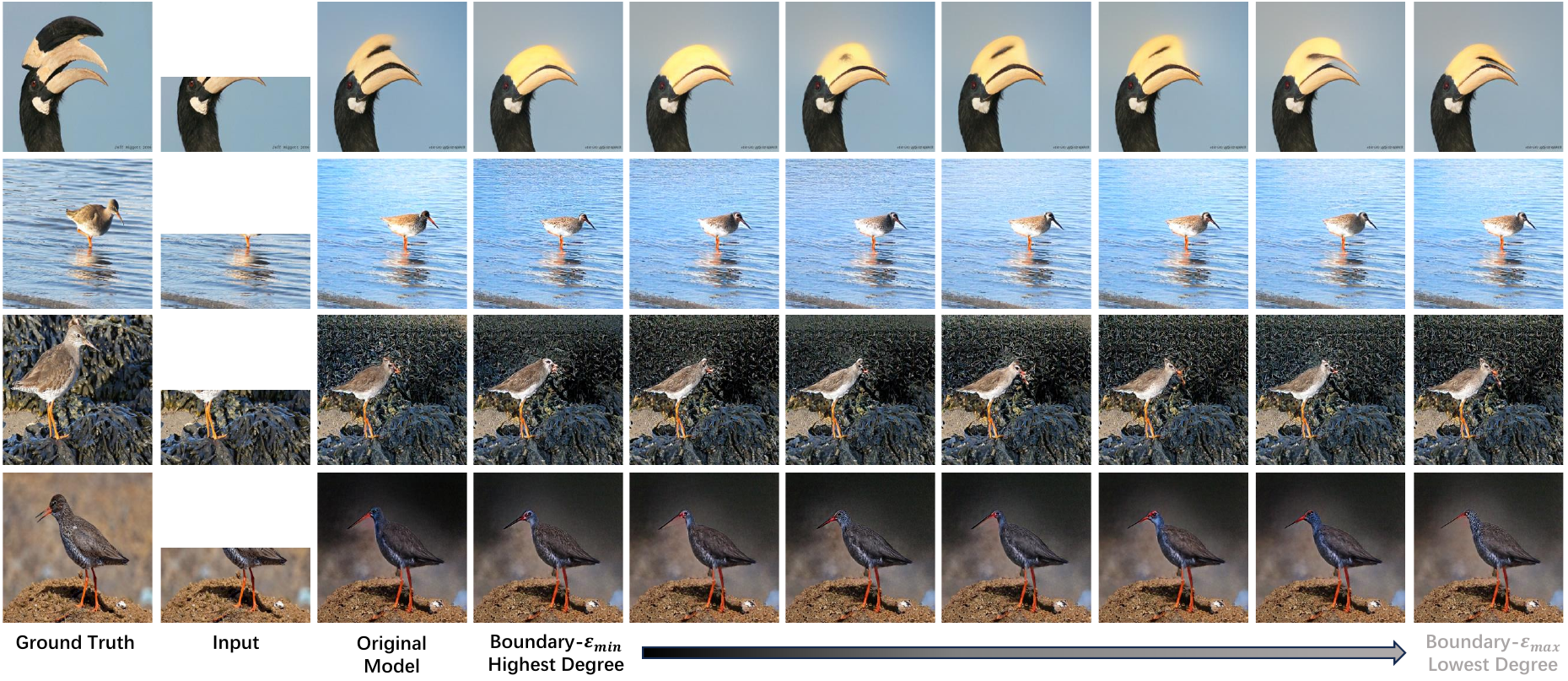}
    \caption{Retain Set}
  \end{subfigure}
  
  \caption{Upward extension by VQ-GAN under different degrees of unlearning completeness.
  We retain 50\% of the lower half of the image. 
  The upper half (a) is the forget set, while the lower half (b) is the retain set. 
  For each part, we compare the unlearning effects of our method at different values of $\varepsilon$. 
  "Highest" and "Lowest" represent the conditions of the highest and lowest degree of unlearning completeness, respectively.
  We increase $\varepsilon$ 16\% each time.}
  \label{fig:vqgan-control-top-0.50-1}
\end{figure}

\subsubsection{Leftward Extension Task}

We retain the right half of the image and utilize VQ-GAN to extend the image from the left. 
The results in Figure \ref{fig:vqgan-control-left-0.50-1} demonstrate that the leftward extension performance on the forget set of our controllable unlearning framework progressively improves with the increase of $\varepsilon$, and the reduction in leftward extension performance on the retain set is also diminishing compared to the original model.

\begin{figure}[htbp]
  \centering
  \begin{subfigure}[t]{\textwidth}
    \includegraphics[width=\linewidth]{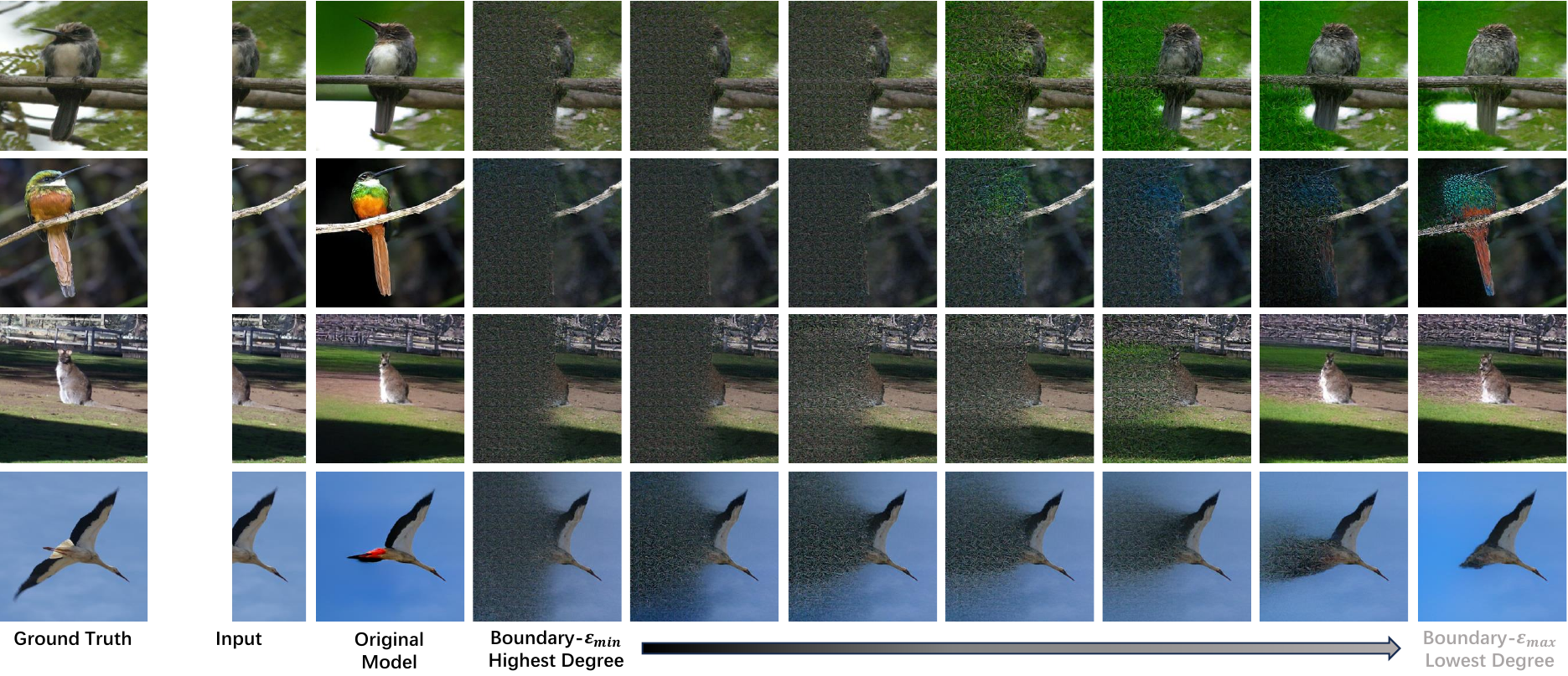}
    \caption{Forget Set}
  \end{subfigure}

  \vspace{10pt}  

  \begin{subfigure}[t]{\textwidth}
    \includegraphics[width=\linewidth]{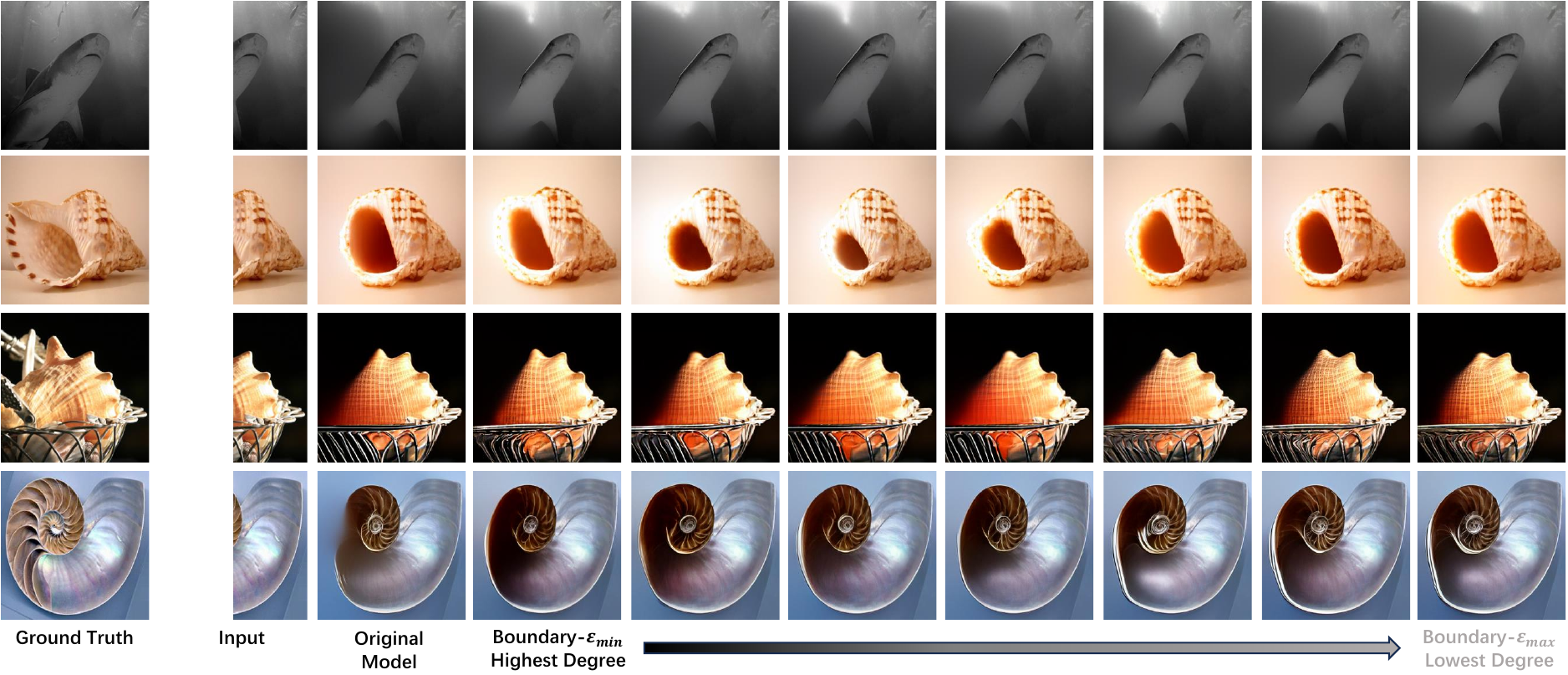}
    \caption{Retain Set}
  \end{subfigure}
  
  \caption{Leftward extension by VQ-GAN under different degrees of unlearning completeness.
  We retain 50\% of the right half of the image. 
  The upper half (a) is the forget set, while the lower half (b) is the retain set. 
  For each part, we compare the unlearning effects of our method at different values of $\varepsilon$. 
  "Highest" and "Lowest" represent the conditions of the highest and lowest degree of unlearning completeness, respectively.
  We increase $\varepsilon$ 16\% each time.}
  \label{fig:vqgan-control-left-0.50-1}
\end{figure}

\subsection{Varying Cropping Patterns and Ratios} \label{more-crop-pattern-ratio}

In the preceding sections, we have demonstrated the performance of our controllable unlearning framework under various cropping patterns, yet the cropping ratio remained constant. 
By altering the cropping ratio on VQ-GAN, we validate the effectiveness of our controllable unlearning framework at different cropping ratios. 
The results indicate that our controllable unlearning framework is robust to different cropping ratios.
Simultaneously, compared to larger cropping ratios, the extent of variation in the images generated under our controllable unlearning framework will be smaller for smaller cropping ratios.

\subsubsection{Inpainting Task}

We retain one-sixteenth of the image center and use VQ-GAN for image inpainting. 
The results in Figure \ref{fig:vqgan-baseline-center-1-16} show that our controllable unlearning framework significantly outperforms the baselines in terms of unlearning effect on the forget set, most closely approximating Gaussian noise, and exhibits a lesser decline in unlearning effect on the retain set than the baselines. 
Simultaneously, we can finely control the balance between unlearning completeness and model utility.

\begin{figure}[htbp]
  \centering
  \begin{subfigure}[t]{\textwidth}
    \includegraphics[width=\linewidth]{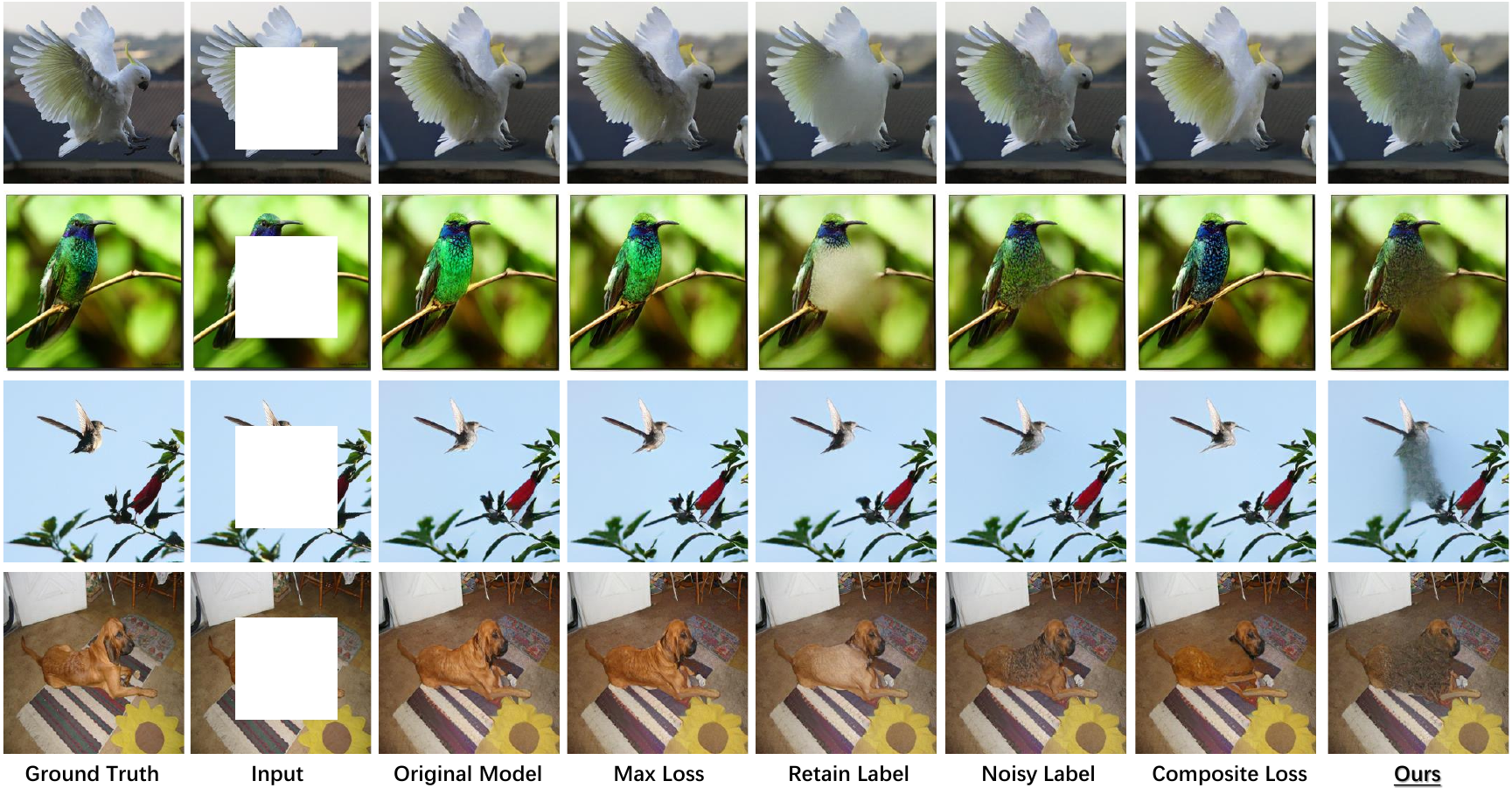}
    \caption{Forget Set}
  \end{subfigure}

  \vspace{10pt}  

  \begin{subfigure}[t]{\textwidth}
    \includegraphics[width=\linewidth]{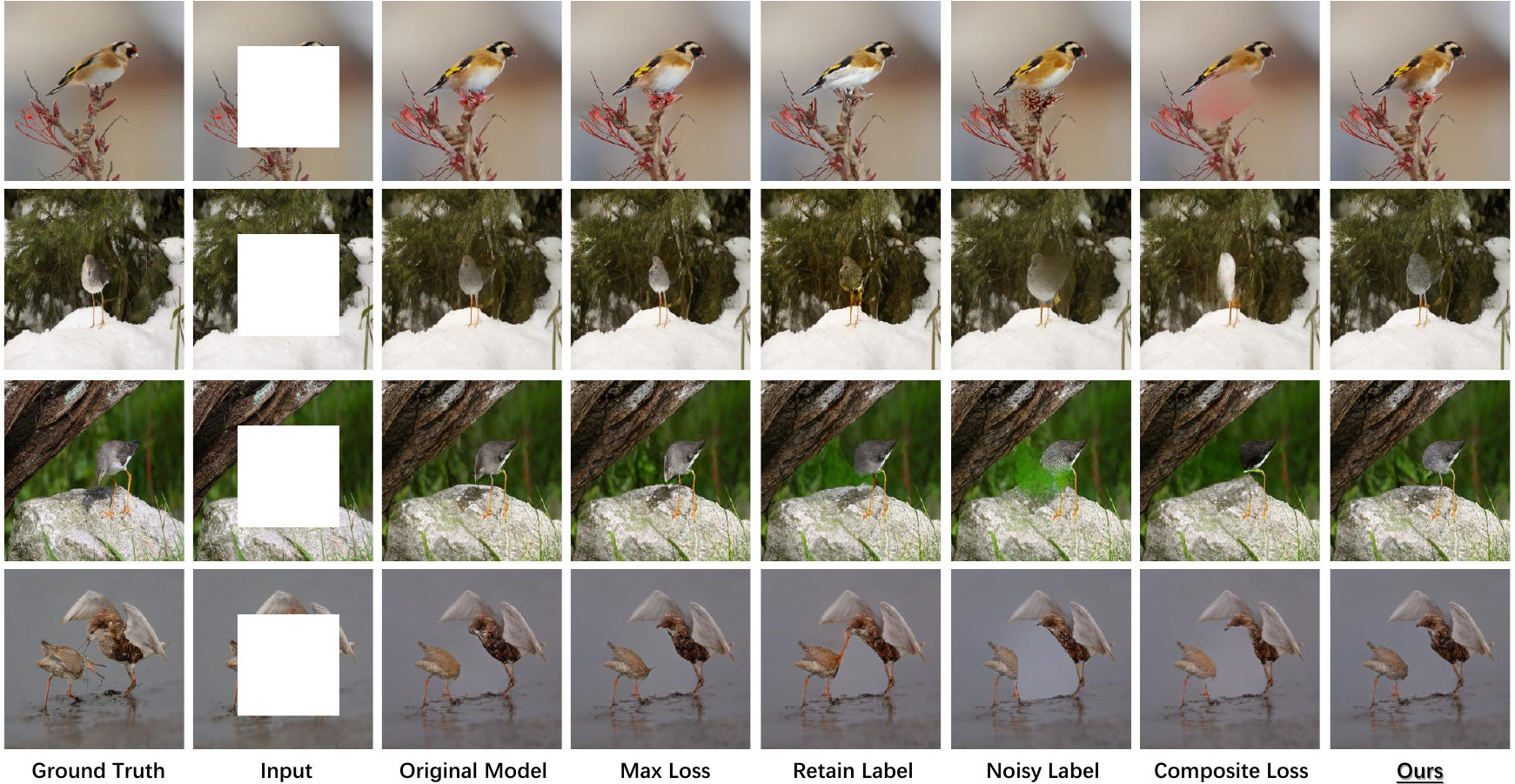}
    \caption{Retain Set}
  \end{subfigure}
  
  \caption{Generated images of cropping 25\% at the center of the image. 
  We crop the center 1/16 of the image. 
  The upper half (a) is the forget set, and the lower half (b) is the retain set. 
  For each set, we compare the performance of the baselines and our method on the inpainting task, where "Ours" represents the extreme case of the unlearning boundary in Phase I, that is, the point of highest degree of unlearning completeness.}
  \label{fig:vqgan-baseline-center-1-16}
\end{figure}

\begin{figure}[htbp]
  \centering
  \begin{subfigure}[t]{\textwidth}
    \includegraphics[width=\linewidth]{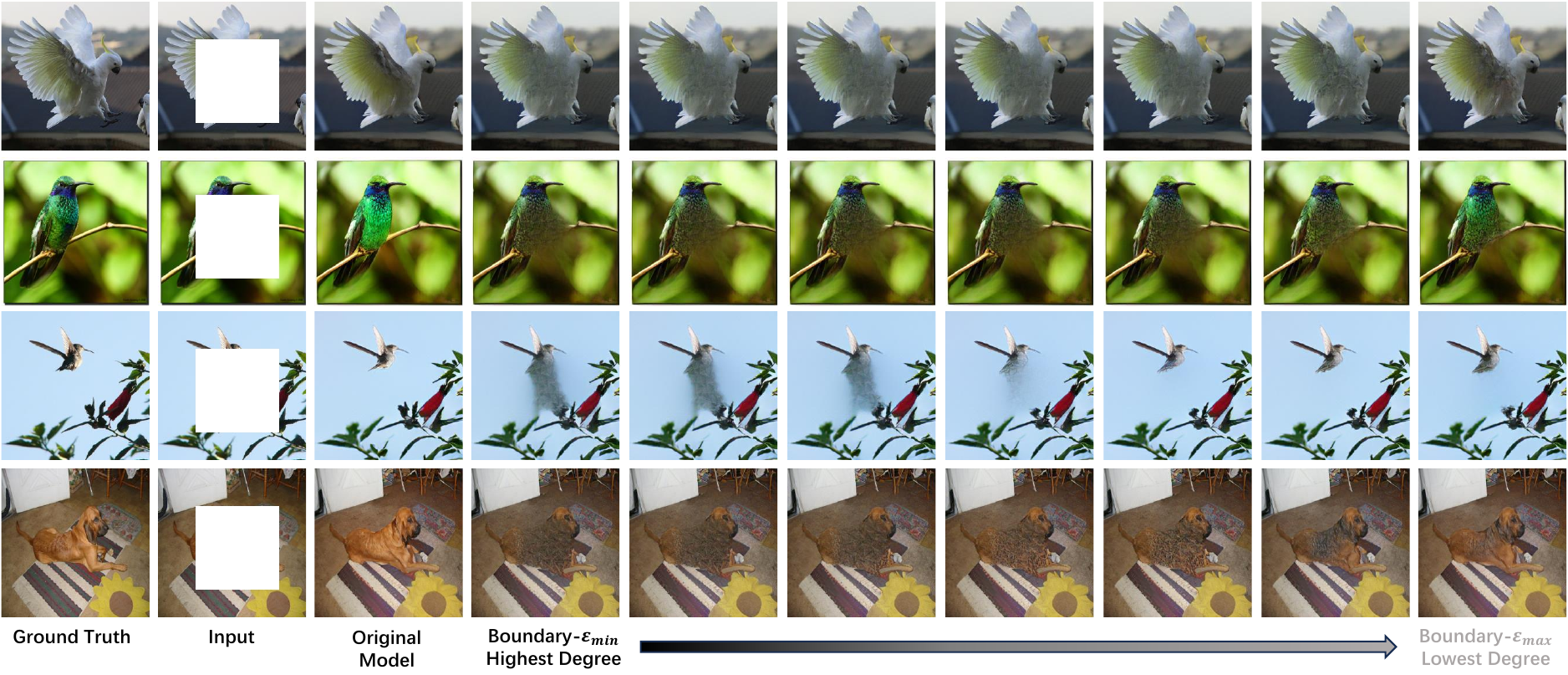}
    \caption{Forget Set}
  \end{subfigure}

  \vspace{10pt}  

  \begin{subfigure}[t]{\textwidth}
    \includegraphics[width=\linewidth]{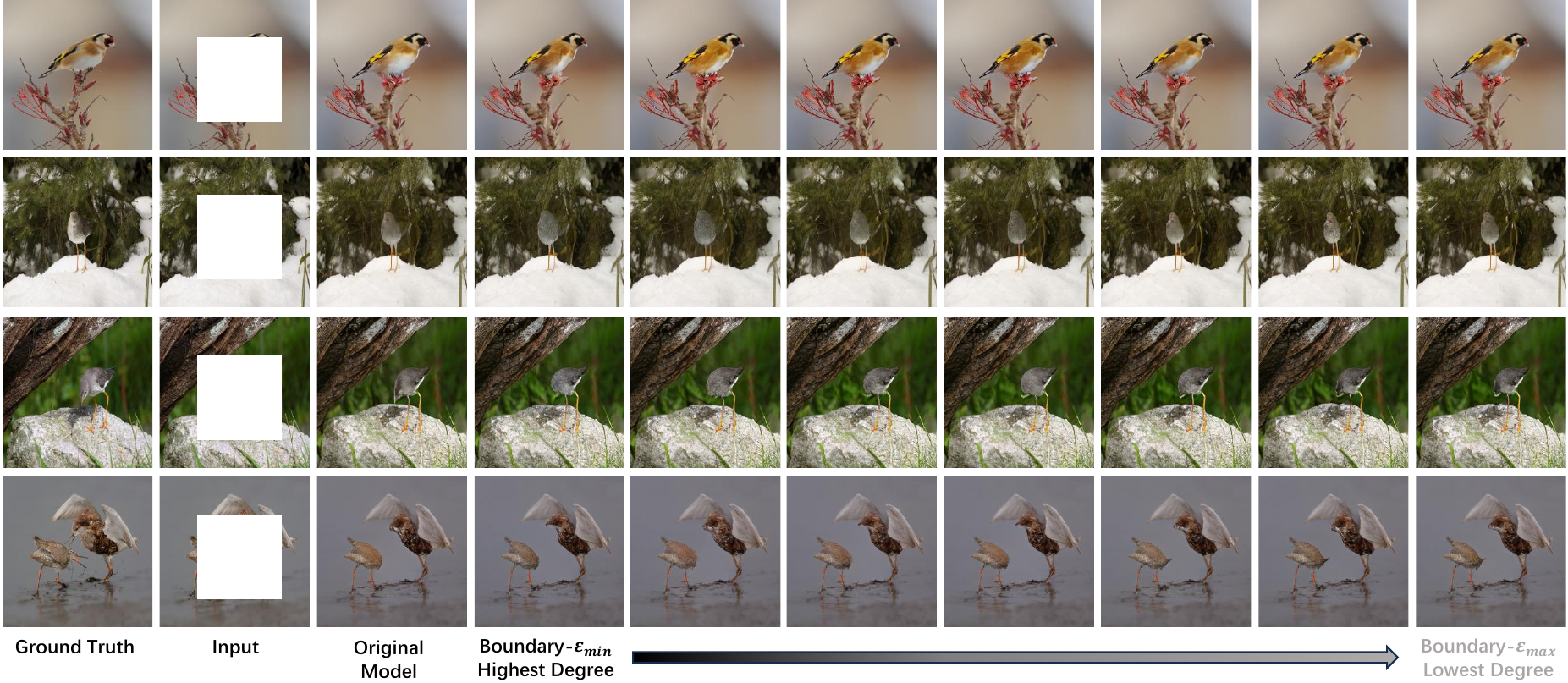}
    \caption{Retain Set}
  \end{subfigure}
  
  \caption{Generated images of cropping 50\% at the center of the image under different degrees of unlearning completeness requirements.
  We crop the central 1/16 of the image. 
  The upper half (a) represents the forget set, and the lower half (b) represents the retain set. 
  For each section, we compare the effectiveness of our method's unlearning under different values of $\varepsilon$. 
  Here, "Highest" and "Lowest" indicate the conditions of the highest and lowest degree of unlearning completeness, respectively.}
  \label{fig:vqgan-control-center-1-16}
\end{figure}

\subsubsection{Downward Extension Task}

We crop the bottom 25\% of the image and utilize VQ-GAN for image extension from the bottom. 
As shown in Figure \ref{fig:vqgan-baseline-bottom-0.25}, the results demonstrate that our controllable unlearning framework significantly surpasses the baselines in terms of the unlearning effect on the forget set, closely approximating Gaussian noise, and shows a lesser reduction in unlearning effect on the retain set compared to the baselines. 
At the same time, we can finely adjust the balance between unlearning completeness and model utility.

\begin{figure}[htbp]
  \centering
  \begin{subfigure}[t]{\textwidth}
    \includegraphics[width=\linewidth]{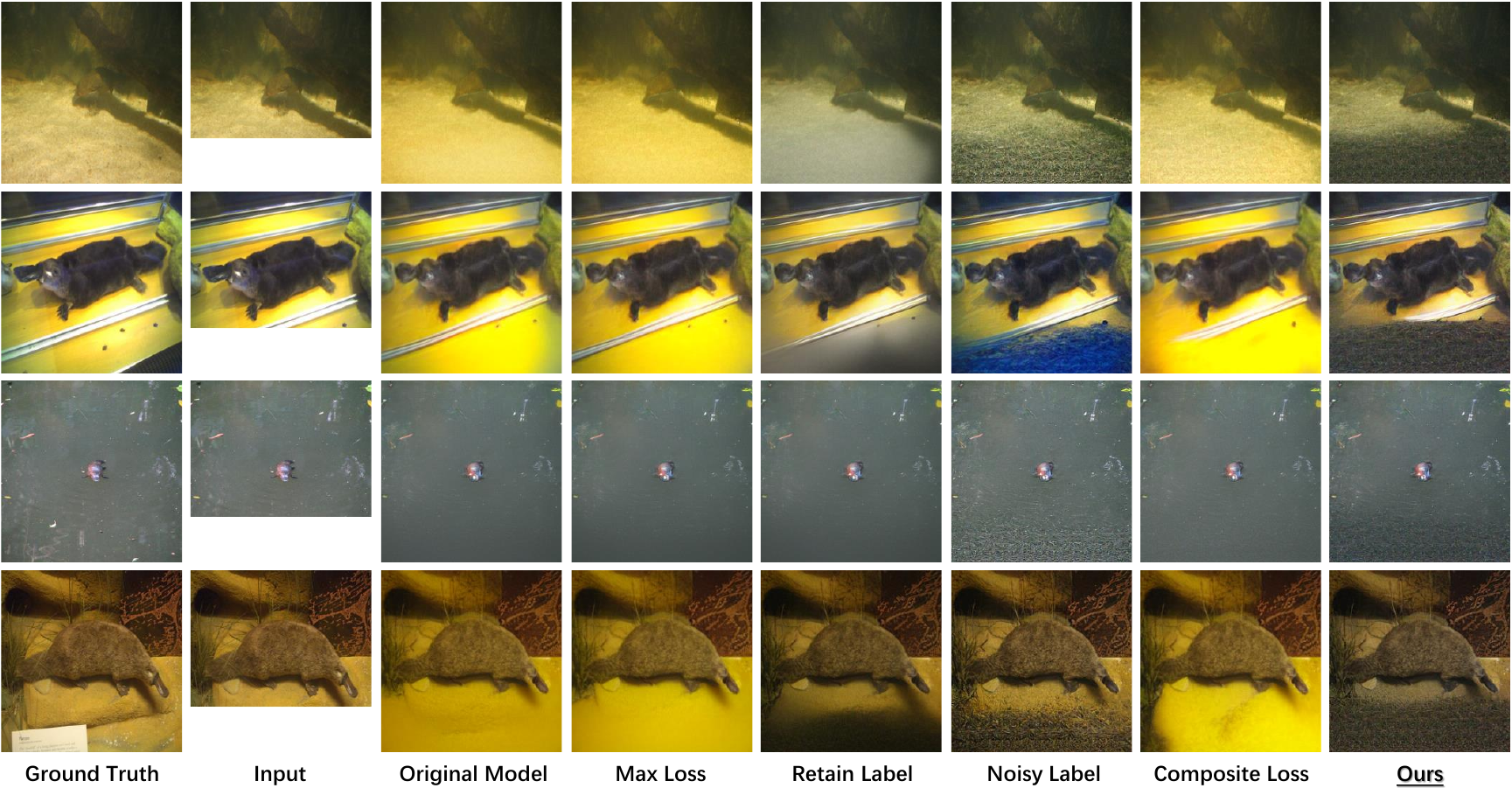}
    \caption{Forget Set}
  \end{subfigure}

  \vspace{10pt}  

  \begin{subfigure}[t]{\textwidth}
    \includegraphics[width=\linewidth]{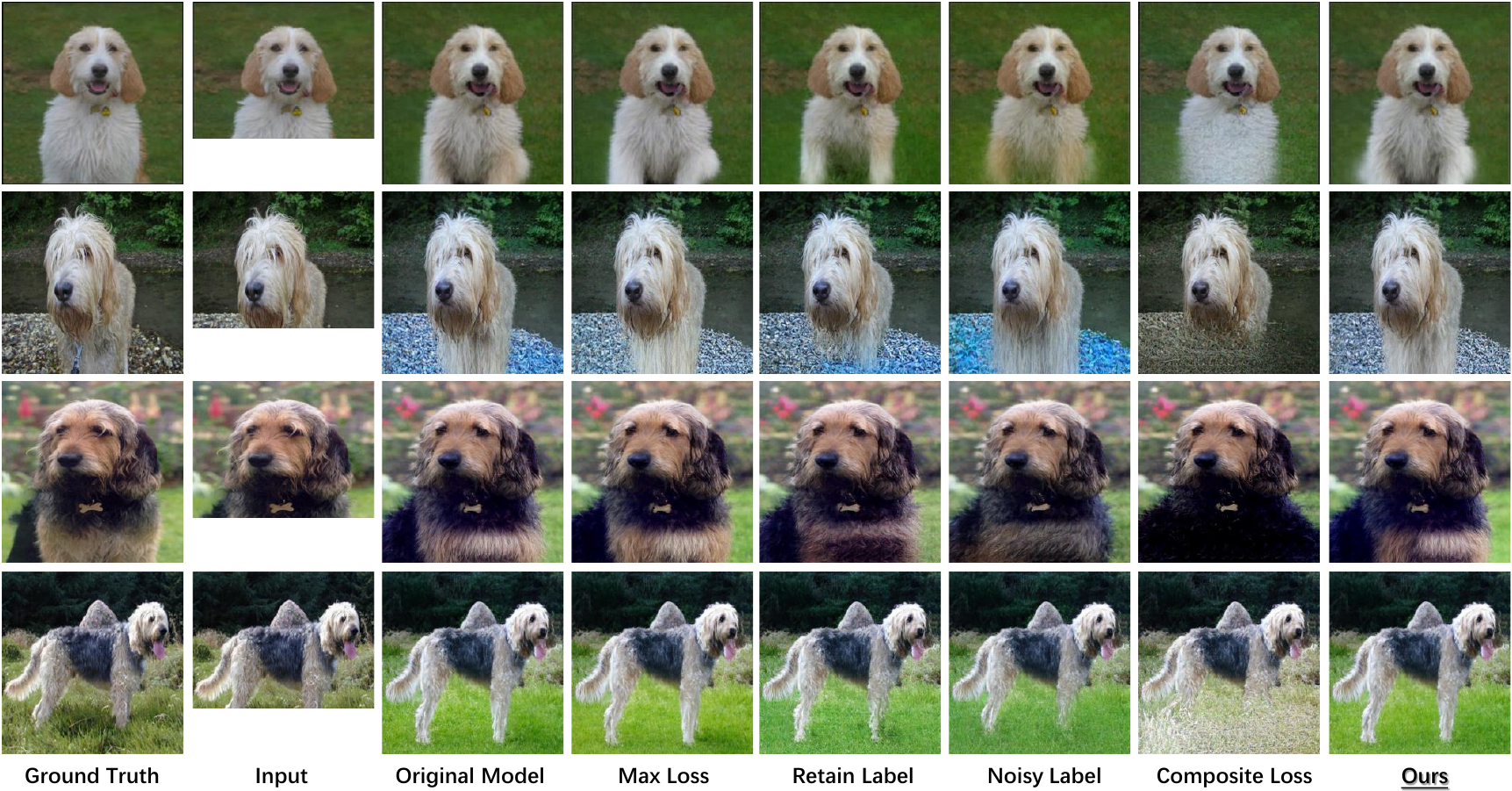}
    \caption{Retain Set}
  \end{subfigure}
  
  \caption{Downward extension by VQ-GAN. 
  We crop the bottom 25\% of the image. 
  The upper half (a) is designated as the forget set, and the lower half (b) as the retain set. 
  For each section, we compared the performance of the baselines and our method on the downward extension task, where "Ours" denotes the unlearning boundary condition in Phase I, that is, the point of highest degree of unlearning completeness.}
  \label{fig:vqgan-baseline-bottom-0.25}
\end{figure}

\begin{figure}[htbp]
  \centering
  \begin{subfigure}[t]{\textwidth}
    \includegraphics[width=\linewidth]{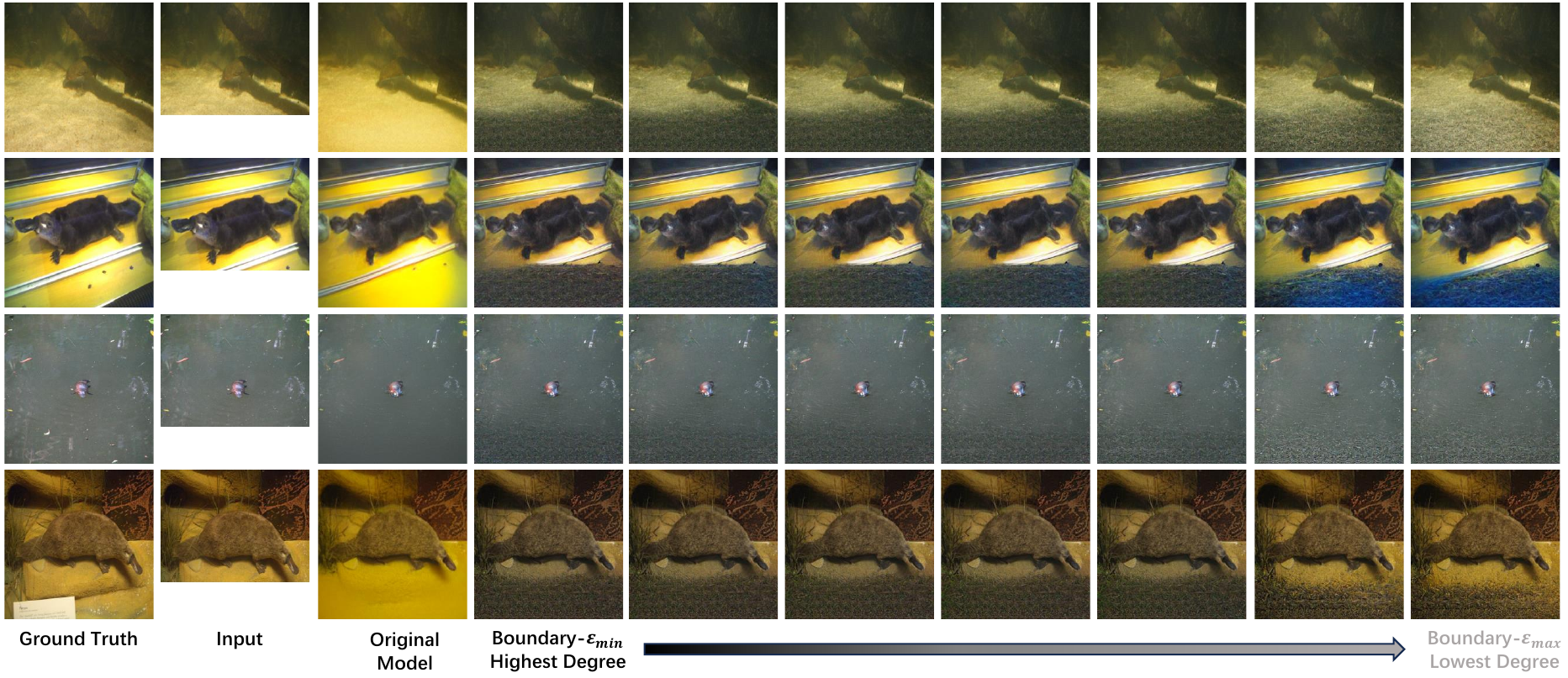}
    \caption{Forget Set}
  \end{subfigure}

  \vspace{10pt}  

  \begin{subfigure}[t]{\textwidth}
    \includegraphics[width=\linewidth]{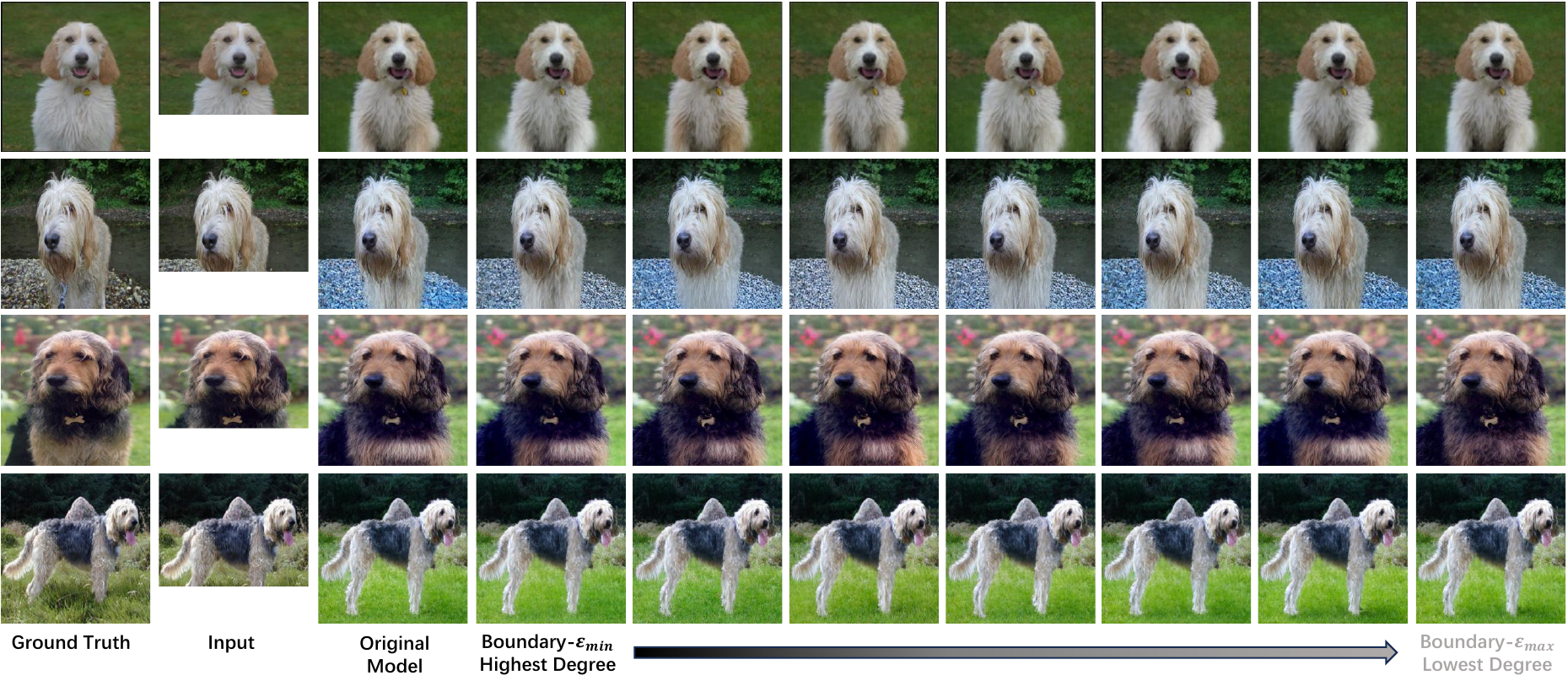}
    \caption{Retain Set}
  \end{subfigure}
  
  \caption{Downward extension by VQ-GAN under different degrees of unlearning completeness.
  We crop the bottom 25\% of the image. 
  The upper half (a) represents the forget set, and the lower half (b) represents the retain set. 
  For each section, we compare the effectiveness of our method's unlearning under different values of $\varepsilon$. 
  Here, "Highest" and "Lowest" indicate the conditions of the highest and lowest degree of unlearning completeness, respectively.}
  \label{fig:vqgan-control-bottom-0.25}
\end{figure}

\subsubsection{Rightward Extension Task}

We crop the right 25\% of the image and utilize VQ-GAN for image extension from the bottom. 
The results in Figure \ref{fig:vqgan-baseline-right-0.25} demonstrate that our controllable unlearning framework significantly surpasses the baselines in terms of the unlearning effect on the forget set, closely approximating Gaussian noise, and shows a lesser reduction in unlearning effect on the retain set compared to the baselines. 
At the same time, we can finely adjust the balance between unlearning completeness and model utility.

\begin{figure}[htbp]
  \centering
  \begin{subfigure}[t]{\textwidth}
    \includegraphics[width=\linewidth]{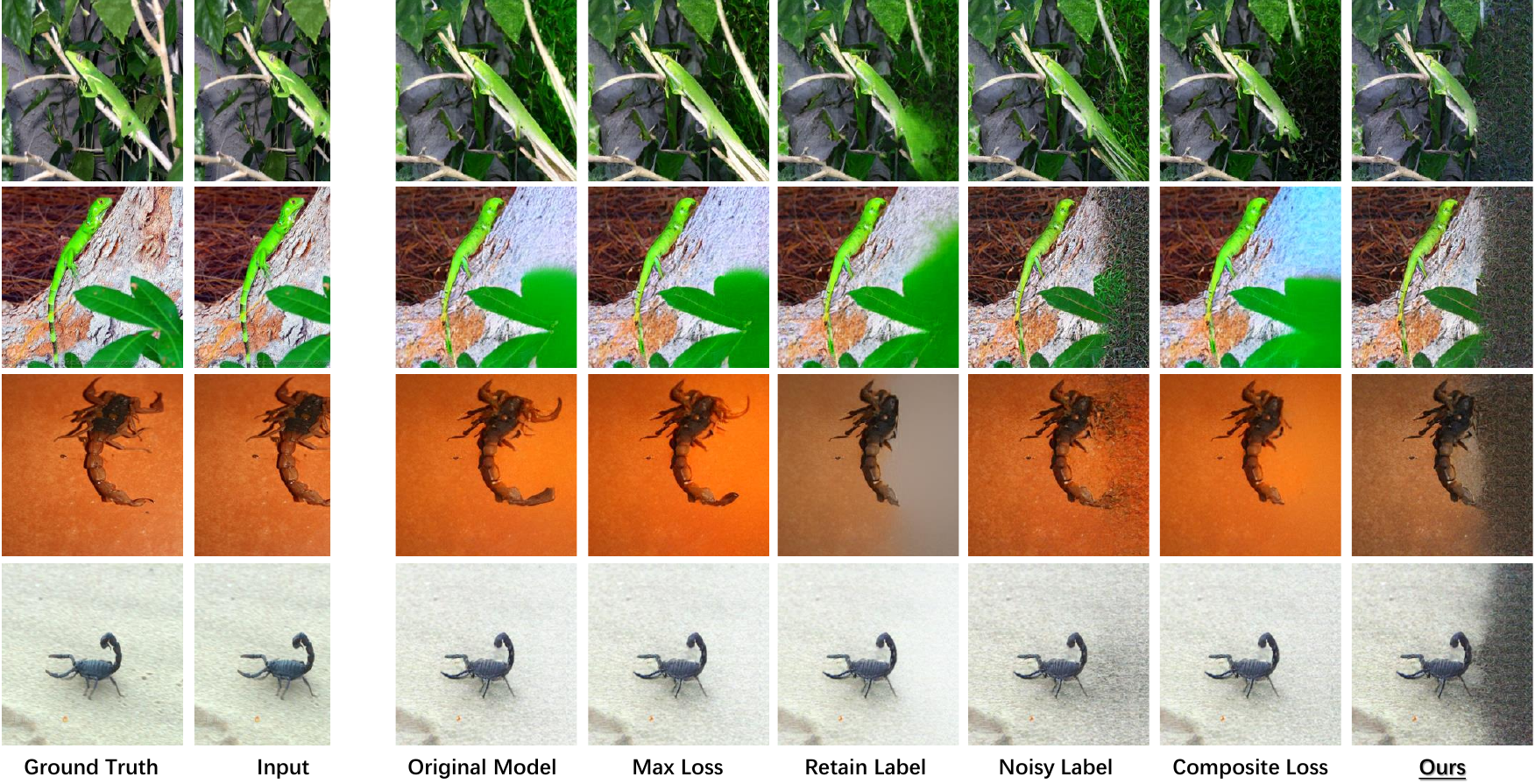}
    \caption{Forget Set}
  \end{subfigure}

  \vspace{10pt}  

  \begin{subfigure}[t]{\textwidth}
    \includegraphics[width=\linewidth]{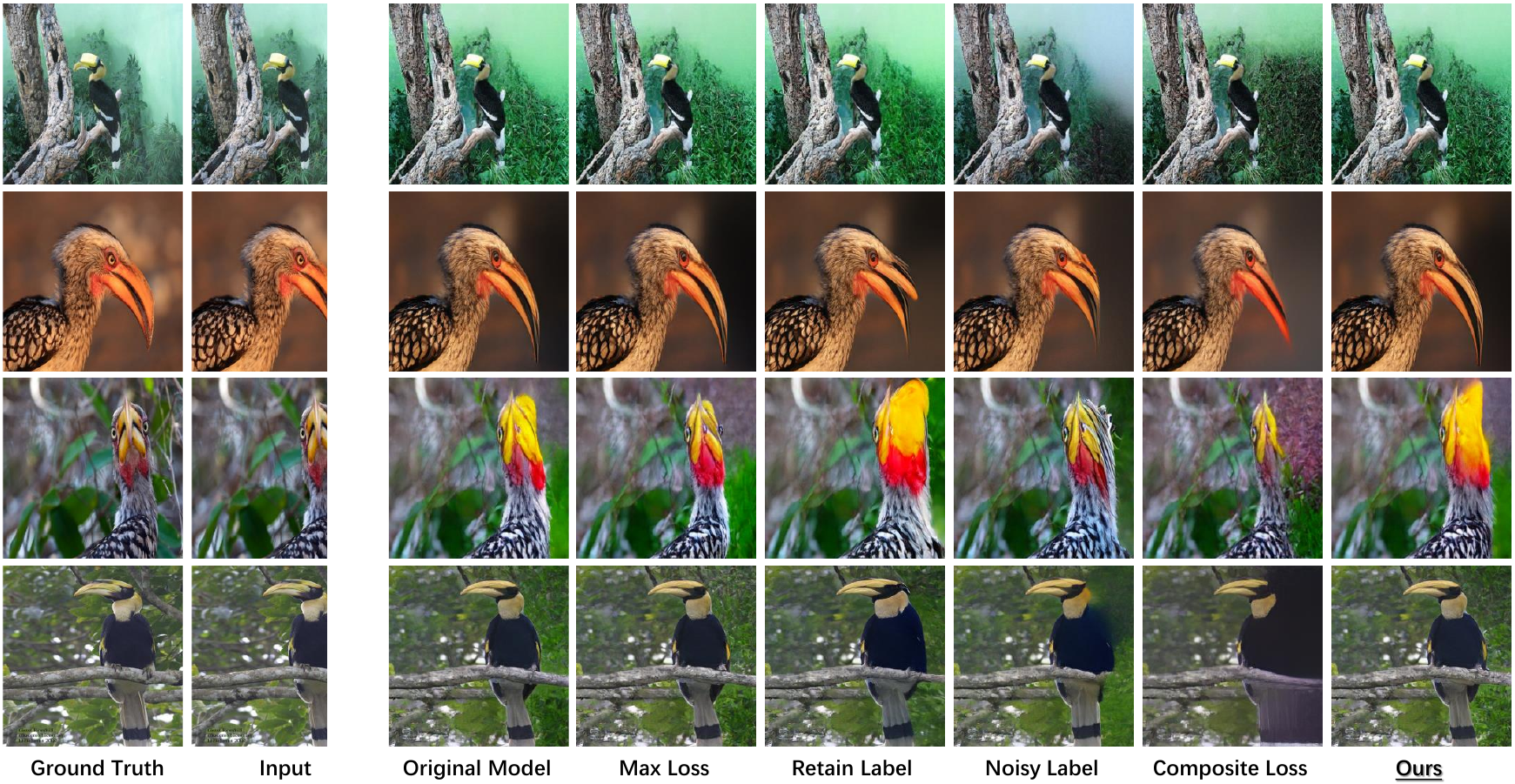}
    \caption{Retain Set}
  \end{subfigure}
  
  \caption{Rightward extension by VQ-GAN.   
  We crop the right 25\% of the image. 
  The upper half (a) is designated as the forget set, and the lower half (b) as the retain set. 
  For each section, we compared the performance of the baselines and our method on the rightward extension task, where "Ours" denotes the unlearning boundary condition in Phase I, that is, the point of highest degree of unlearning completeness.}
  \label{fig:vqgan-baseline-right-0.25}
\end{figure}

\begin{figure}[htbp]
  \centering
  \begin{subfigure}{\textwidth}
    \includegraphics[width=\linewidth]{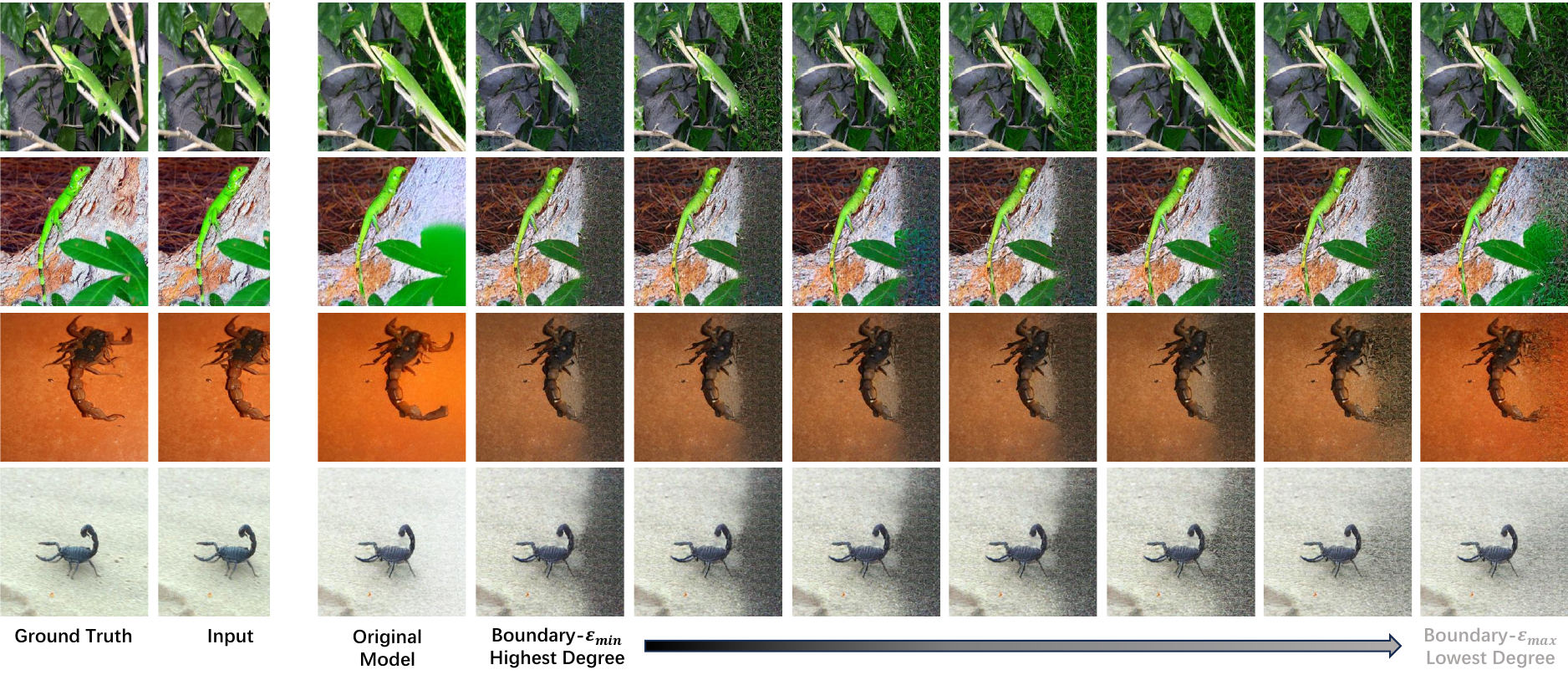}
    \caption{Forget Set}
  \end{subfigure}

  \vspace{10pt}  

  \begin{subfigure}{\textwidth}
    \includegraphics[width=\linewidth]{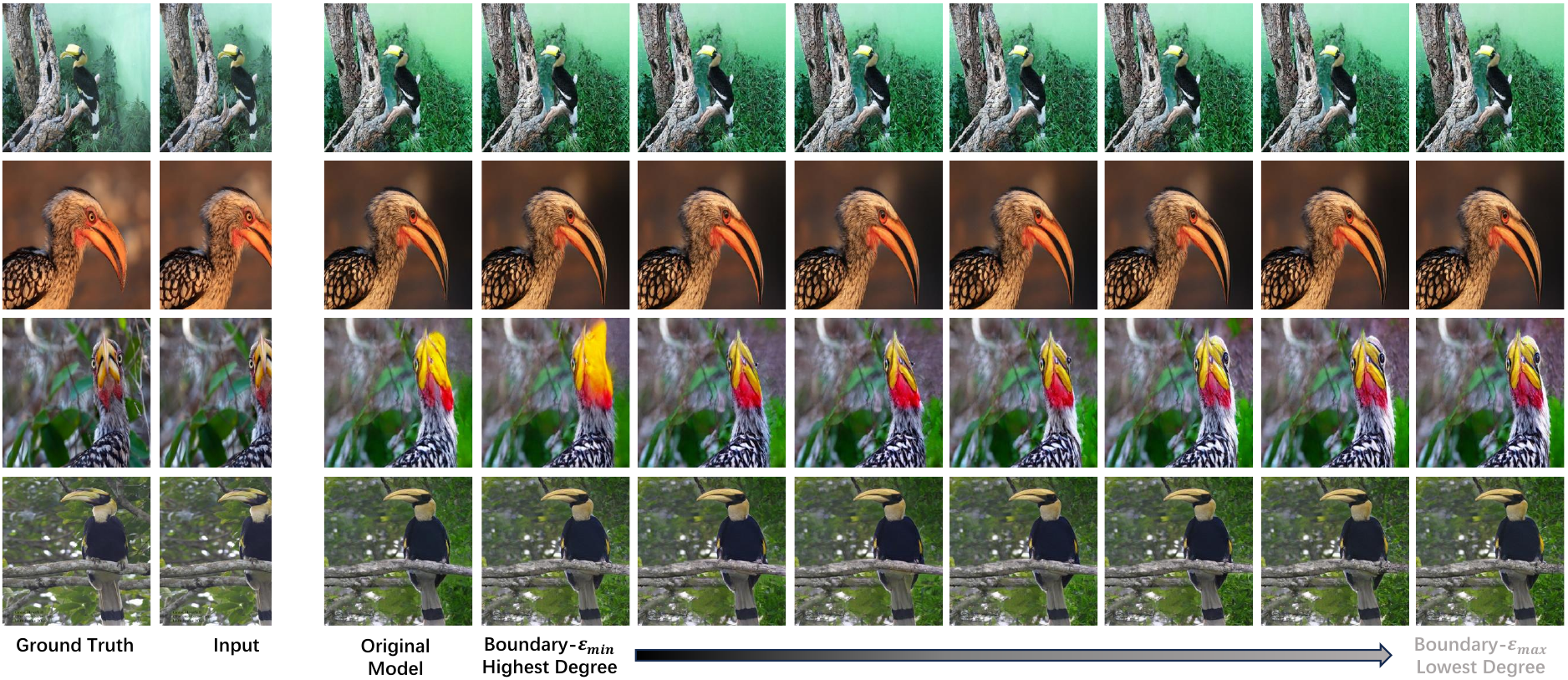}
    \caption{Retain Set}
  \end{subfigure}
  
  \caption{Rightward extension by VQ-GAN under different degrees of unlearning completeness.
  We crop the right 25\% of the image. 
  The upper half (a) represents the forget set, and the lower half (b) represents the retain set. 
  For each section, we compare the effectiveness of our method's unlearning under different values of $\varepsilon$. 
  Here, "Highest" and "Lowest" indicate the conditions of the highest and lowest degree of unlearning completeness, respectively.}
  \label{fig:vqgan-control-right-0.25}
\end{figure}

\section{T-SNE Analysis for Controllable Unlearning} \label{sec-t-sne}

In Table \ref{tab:unlearn-control} of the main paper, we present the evaluation metrics corresponding to different degrees of unlearning completeness solutions (i.e., IS, FID and CLIP) obtained by our controllable unlearning framework in mainstream I2I generative models. 
Here, we analyze the images generated at different degrees of unlearning completeness for each corresponding model. 
We use T-SNE analysis to compare the clip embedding distances between the images generated on the forget set and retain set and the ground truth images. 
As shown in Figure \ref{fig:t-sne-app}, for any model, under the highest degree of unlearning completeness, the distance between the clip embeddings of the images generated on the forget set by the unlearned model and the ground truth images is larger, while the distance on the retain set is smaller. 
Simultaneously, as $\varepsilon$ increases, the distance between the clip embeddings of the images generated on the forget set by the unlearning model and the ground truth images gradually decreases (still significantly higher than the situation of the retain set), and the distance on the retain set also gradually decreases. 
Lastly, among these three mainstream I2I generation model structures, the effect of VQ-GAN is the most significant.

\begin{figure}[htbp]
  \centering
  \includegraphics[width=\textwidth]{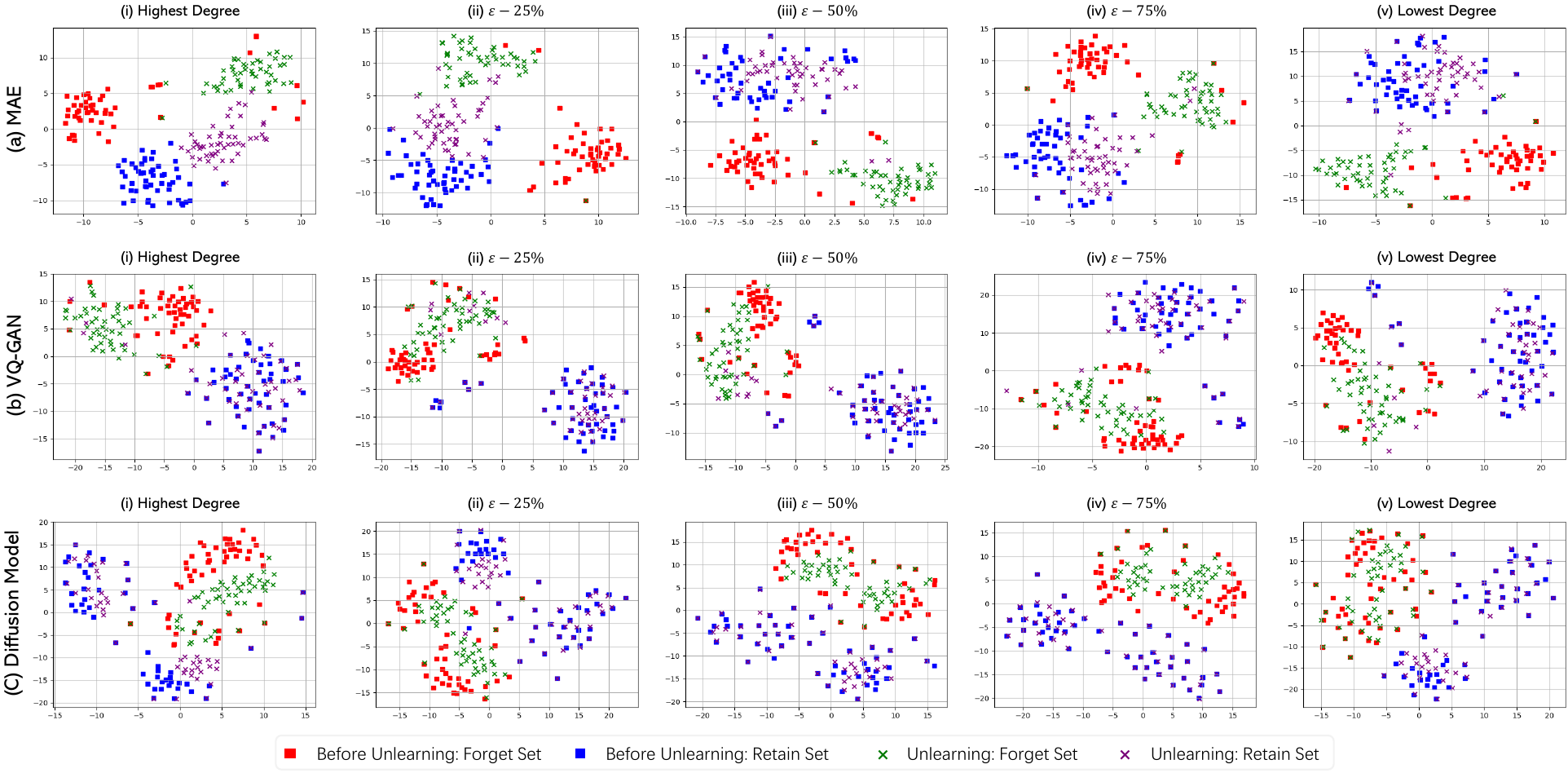}
  \caption{T-SNE analysis between images generated by our method and ground truth images under different degrees of unlearning completeness.}
  \label{fig:t-sne-app}
\end{figure}

\section{Efficiency Experiments for Controllable Unlearning Framework} \label{sec-efficiency}

In the main paper, we analyze the convergence efficiency corresponding to different control functions $\psi(\theta)$ at each phase from a theoretical perspective, and based upon this analysis, we aim to enhance the unlearning efficiency of our controllable unlearning framework. 
Here, we validate our analysis on three mainstream I2I generative models. 
During the two different phases of controllable unlearning, we design the form of the control function $\psi(\theta)$ separately.

Specifically, in Phase I, we set $\psi(\theta) = \alpha {\| \nabla f_1(\theta)\|}^{\delta}$, where we test the convergence rates of $f_1(\theta)$ and $f_2(\theta)$, as well as the overall convergence rate, for $\delta=1$, $\delta=2$, $\delta=3$, and $\delta=4$. 
As shown in Figure \ref{fig:efficiency}, It is apparent that at Phase I for $c=2$, that is $\psi(\theta) = \alpha {\| \nabla f_1(\theta)\|}^2$, the overall convergence rate is optimal.

In Phase II, we set $\psi(\theta) = \beta (f_1(\theta) - \varepsilon)^{\delta}$, where we tested the convergence rates for $\delta=1$ and $\delta=3$. 
Subsequently, we changed the form of $\psi(\theta)$ to $\psi(\theta)=\beta (f_1(\theta)-\varepsilon)^{\delta} {\| \nabla f_1(\theta)\|}^2$, and we tested the convergence rates for $\delta=1$ and $\delta=3$. 
Comparing the aforementioned scenarios, the overall optimal convergence rate in Phase II is obtained when $\psi(\theta)=\beta (f_1(\theta)-\varepsilon)^{1} {\| \nabla f_1(\theta)\|}^2$.

\begin{figure}[htbp]
  \centering
  \includegraphics[width=\textwidth]{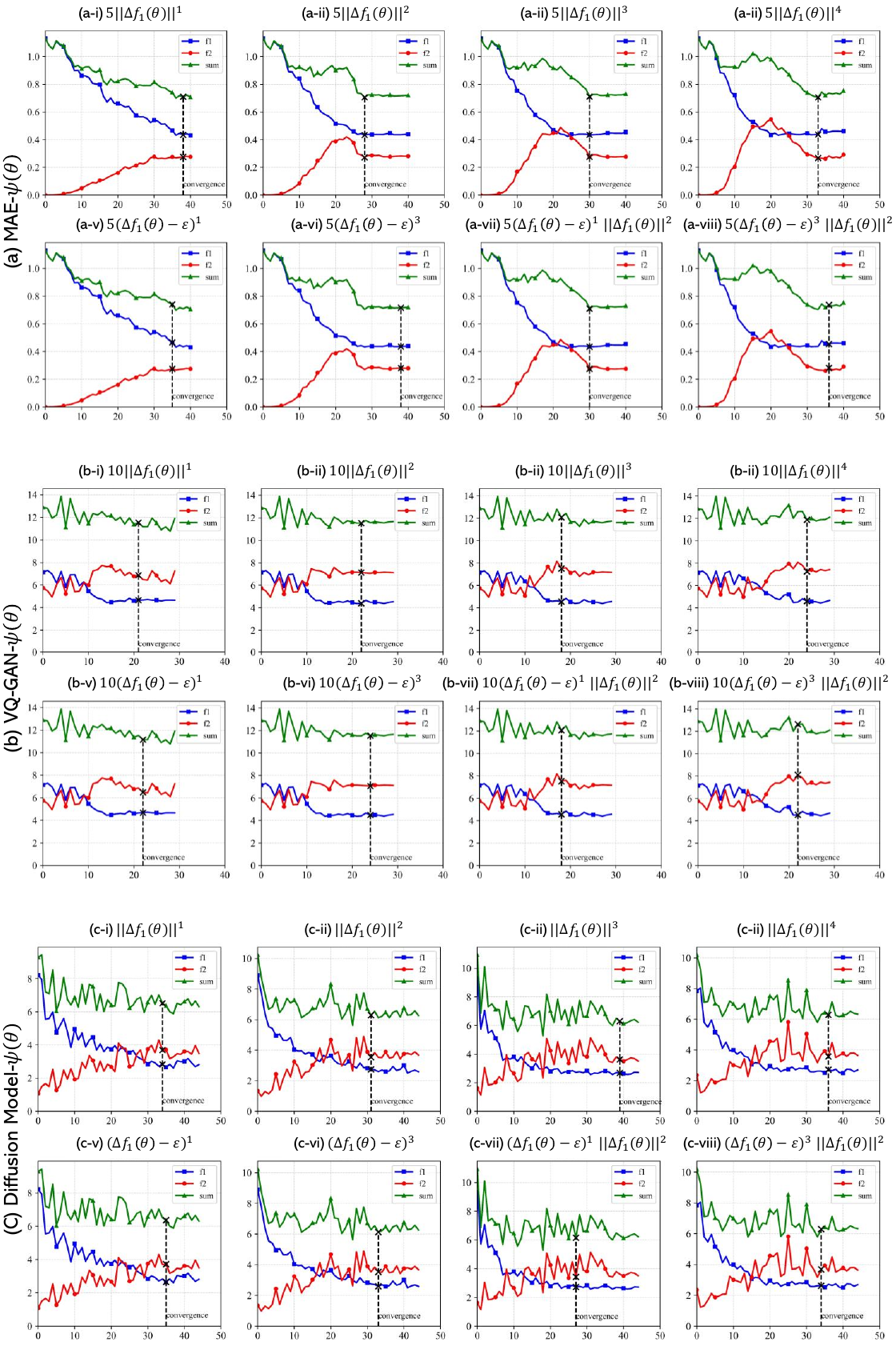}
  \caption{The convergence rates under different control functions $\psi(\theta)$. 
  As illustrated in figure, include three sections: MAE, VQ-GAN, and the diffusion model. 
  Each section contains two rows, corresponding to Phase I and Phase II, respectively. 
  The titles on each subplot indicate the forms of the control function $\psi(\theta)$.}
  \label{fig:efficiency}
\end{figure}

\end{document}